\documentclass{article}




\usepackage[final]{neurips_2025}



\usepackage{amsmath,amsfonts,bm}









\def\eqref#1{equation~\ref{#1}}









\def\1{\bm{1}}










\DeclareMathAlphabet{\mathsfit}{\encodingdefault}{\sfdefault}{m}{sl}
\SetMathAlphabet{\mathsfit}{bold}{\encodingdefault}{\sfdefault}{bx}{n}














\usepackage[linesnumbered,ruled,vlined]{algorithm2e}
\usepackage{setspace}
\usepackage{amsfonts}       
\usepackage{booktabs}       
\usepackage{enumitem}
\usepackage[T1]{fontenc}    
\usepackage[utf8]{inputenc} 
\usepackage{makecell}
\usepackage{microtype}      
\usepackage{multirow}
\usepackage{nicefrac}       
\usepackage{tabularx}
\usepackage{url}            

\usepackage{longtable}
\usepackage{filecontents,catchfile,pgffor}

\usepackage{environ}
\usepackage[percent]{overpic}
\usepackage[dvipsnames]{xcolor}


\usepackage{amsmath}
\usepackage{amssymb}
\usepackage{booktabs}
\usepackage{dsfont}
\usepackage{graphicx}
\usepackage{makecell}
\usepackage{multirow}
\usepackage{pifont}
\usepackage[group-separator={,}]{siunitx}
\usepackage{tabularx}
\usepackage{caption}
\usepackage{soul}
\usepackage{changepage,threeparttable}
\usepackage{colortbl}
\usepackage[accsupp]{axessibility}  
\usepackage{mathtools}
\usepackage{xfrac}
\usepackage{enumitem}
\usepackage{catchfile}
\usepackage{wrapfig}

\usepackage{longtable}
\usepackage{tabu}
\usepackage{tcolorbox}
\usepackage{supertabular}
\usepackage{caption}
\usepackage{subcaption}
\usepackage{pifont}
\usepackage{adjustbox}
\usepackage{titletoc}
\usepackage{multicol}
\newcolumntype{Y}{>{\centering\arraybackslash}X}
\usepackage{physics}


\DeclareMathOperator*{\argmax}{arg\,max}
\DeclareMathOperator*{\argmin}{arg\,min}

\usepackage{amsthm}
\newtheorem{theorem}{Theorem}
\newtheorem{proposition}{Proposition}

\newtheorem{corollary}{Corollary}


\newcommand{\B}{\mathbf}


\newcommand{\Ours}[0]{\texttt{RBF}}
\newcommand{\Oursbf}[0]{\fontfamily{lmtt}\fontseries{b}\selectfont \Ours{}}

\newcommand{\ie}{i.e.,}
\newcommand{\eg}{e.g.,}
\newcommand{\etal}{\emph{et al.\ }}

\definecolor{color_1}{RGB}{255,0,128}
\definecolor{color_2}{RGB}{128,128,0}
\definecolor{color_3}{RGB}{0,128,0}
\definecolor{color_4}{RGB}{128,0,0}
\definecolor{color_5}{RGB}{128,0,128}

\definecolor{Goldenrod}{RGB}{218, 165, 32}
\definecolor{ForestGreen}{RGB}{34, 139, 34}

\definecolor{highlight_color}{RGB}{255,0,128}
\definecolor{blue_highlight_color}{RGB}{0,128,255}
\definecolor{green_highlight_color}{RGB}{51,145,40}
\definecolor{orange_highlight_color}{RGB}{255,160,0}
\definecolor{purple_highlight_color}{RGB}{128,0,128}

\newcommand{\mycomment}[1]{}



\sisetup{group-separator={,}}

\setlength{\fboxsep}{0pt}%
\setlength{\fboxrule}{1.5pt}%

\usepackage[utf8]{inputenc} 
\usepackage[T1]{fontenc}    
\usepackage{hyperref}       
\usepackage{url}            
\usepackage{booktabs}       
\usepackage{amsfonts}       
\usepackage{nicefrac}       
\usepackage{microtype}      

\title{Inference-Time Scaling for Flow Models via\\Stochastic Generation and Rollover Budget Forcing}

%

\author{%
  Jaihoon Kim\textsuperscript{$\ast$} $\quad$ Taehoon Yoon\textsuperscript{$\ast$} $\quad$ Jisung Hwang\textsuperscript{$\ast$} $\quad$ Minhyuk Sung \\
  KAIST \\
  {\tt\small \{jh27kim,taehoon,4011hjs,mhsung\}@kaist.ac.kr}
}

\newif\ifpaper
\paperfalse

\begin{document}

\maketitle

\begin{figure}[h!]
    \centering
    \captionsetup{type=figure}
    \vspace{-25pt}
    \includegraphics[width=0.86\textwidth]{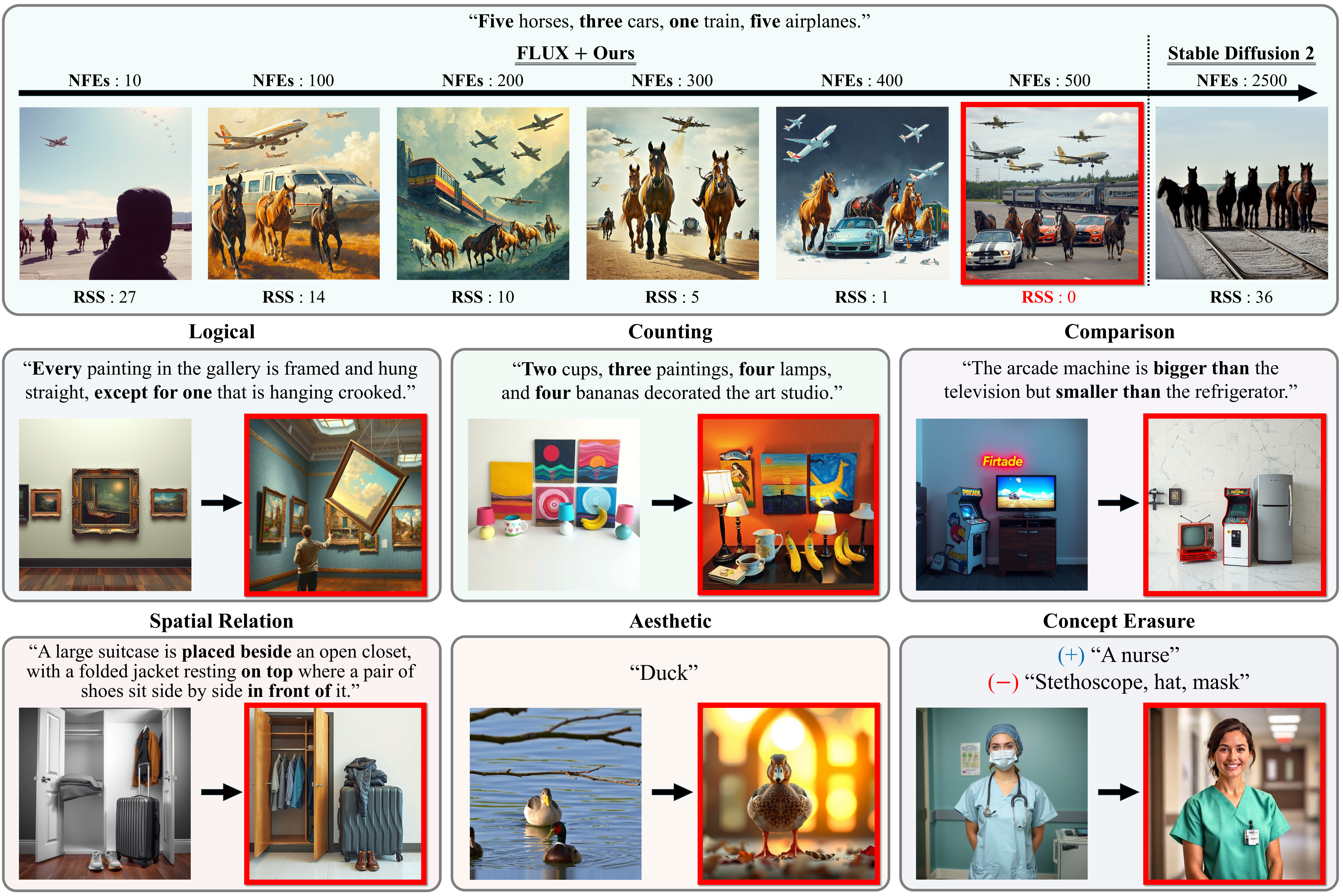}
    \vspace{-2pt}
    \caption{\textbf{Diverse applications of our inference-time scaling method.} 
    Pretrained flow models struggle to generate images that align with complex prompts (left side of each case), whereas our inference-time scaling effectively extends their capabilities to achieve precise alignment (red box). 
    }
    \label{fig:teaser}
    \vspace{-0.7\baselineskip}
\end{figure}

\def\thefootnote{*}\footnotetext{Equal contribution.}\def\thefootnote{\arabic{footnote}}

\begin{abstract}
\vspace{-0.8\baselineskip}
We propose an inference-time scaling approach for pretrained flow models. 
Recently, inference-time scaling has gained significant attention in LLMs and diffusion models, improving sample quality or better aligning outputs with user preferences by leveraging additional computation. 
For diffusion models, particle sampling has allowed more efficient scaling due to the stochasticity at intermediate denoising steps. 
On the contrary, while flow models have gained popularity--offering faster generation and high-quality outputs--efficient inference-time scaling methods used for diffusion models cannot be directly applied due to their deterministic generative process. 
To enable efficient inference-time scaling for flow models, we propose three key ideas: 1) SDE-based generation, enabling particle sampling in flow models, 2) Interpolant conversion, broadening the search space, and 3) Rollover Budget Forcing (RBF), maximizing compute utilization. 
Our experiments show that SDE-based generation and variance-preserving (VP) interpolant-based generation, improves the performance of particle sampling methods for inference-time scaling in flow models. 
Additionally, we demonstrate that RBF with VP-SDE achieves the best performance, outperforming all previous inference-time scaling approaches. Project page: \textcolor{magenta}{\url{flow-inference-time-scaling.github.io}}.
\end{abstract}

\vspace{-1.25\baselineskip}
\section{Introduction}
\label{sec:intro}
Over the past years, scaling laws of AI models have mainly focused on increasing model size and training data. However, recent advancements have shifted attention toward \emph{inference-time scaling}~\cite{Stiennon:2020BoN, Touvron:2023llama2}, leveraging computational resources during inference to enhance model performance. 
OpenAI o1~\cite{openai:o1} and DeepSeek R1~\cite{deekseek:r1} exemplify this approach, demonstrating consistent output improvements with increased inference computation. Recent research in LLMs~\cite{Muennighoff:2025s1} attempting to replicate such improvements has introduced \emph{test-time budget forcing}, achieving high efficiency with limited token sampling during inference. 

For diffusion models~\cite{Sohl-Dickstein:2025Thermo,Song:2021SDE}, which are widely used for generation tasks, research on inference-time scaling has been growing in the context of reward-based sampling~\cite{Kim:2025DAS, Li2024:SVDD, Singh:2025CoDE}. Given a reward function that measures alignment with user preferences~\cite{Kirstain2023:pickapic} or output quality~\cite{Schuhmann:aesthetics, Lin:2024CLIPFlanT5}, the goal is to find the sample from the learned data distribution that best aligns with the reward through repeated sampling. 
Fig.~\ref{fig:teaser} showcases diverse applications of inference-time scaling using our method, enabling the generation of faithful images that accurately align with complex user descriptions involving objects quantities, logical relationships, and conceptual attributes. Notably, na\"ive generation from text-to-image models~\cite{Rombach:2022LDM, BlackForestLabs:2024Flux} often fails to fully meet user specifications, highlighting the effectiveness of inference-time scaling. 

Our goal in this work is to extend the inference-time scaling capabilities of diffusion models to flow models. 
Flow models~\cite{Lipman:2023CFM} power state-of-the-art image~\cite{Esser2403:scaling, BlackForestLabs:2024Flux} and video generation~\cite{chen2025goku, opensora}, achieving high-quality synthesis with few inference steps, enabled by trajectory stratification techniques during training~\cite{Liu:2023RF}. 
Beyond just speed, recent pretrained flow models, equipped with enhanced text-image embeddings~\cite{Raffel:2020t5} and advanced architectures~\cite{Esser2403:scaling}, significantly outperform previous pretrained diffusion models in both image and video generation quality. 


Despite their advantages in generating high-quality results more efficiently than diffusion models, flow models have an inherent limitation in the context of inference-time scaling. Due to their ODE-based deterministic generative process, they cannot directly incorporate particle sampling at intermediate steps, a key mechanism for effective inference-time scaling in diffusion models. Building on the formulation of stochastic interpolant framework~\cite{Albergo:2023Interpolant}, we adopt an SDE-based sampling method for flow models at inference-time, enabling particle sampling for reward alignment. 


To further expand the exploration space, we consider not only stochasticity but also the choice of the \emph{interpolant}. While typical flow models use a linear interpolant, diffusion models commonly adopt a Variance-Preserving (VP) interpolant~\cite{Song:2021SDE, Ho:2020DDPM}.
Inspired by this, for the first time, we incorporate the VP interpolant into the particle sampling of flow models and demonstrate its effectiveness in increasing sample diversity, enhancing the likelihood of discovering high-reward samples. 

We emphasize that while we propose converting the generative process of a pretrained flow model to align with that of diffusion models—\ie VP-SDE-based generation—inference-time scaling with flow models offers significant advantages over diffusion models. 
Flow models, particularly those with rectification fine-tuning~\cite{Liu:2023RF, liu2024instaflow}, produce much clearer expected outputs at intermediate steps, enabling more precise future reward estimation and, in turn, more effective particle sampling. 

We additionally explore a strategy for tight budget enforcement in terms of the number of function evaluations (NFEs) of the velocity prediction network. Previous particle-sampling-based inference-time scaling approaches for diffusion models~\cite{Li2024:SVDD, Singh:2025CoDE} allocate the NFEs budget \emph{uniformly} across timesteps in the generative process, which we empirically found to be ineffective in practice. To optimize budget utilization, we propose \emph{Rollover Budget Forcing}, a method that adaptively reallocates NFEs across timesteps. 
Specifically, we perform a denoising step upon identifying a new particle with a higher expected future reward and allocate the remaining NFEs to subsequent timesteps. 


Experimentally, we demonstrate that our inference-time SDE conversion and VP interpolant conversion enable efficient particle sampling in flow models, leading to consistent improvements in reward alignment across two challenging tasks: compositional text-to-image generation and quantity-aware image generation. 
Additionally, our Rollover Budget Forcing (\Ours{}) provides further performance gains, outperforming all previous particle sampling approaches. 
We also demonstrate that for differentiable rewards, such as aesthetic image generation, integrating \Ours{} with a gradient-based method~\cite{Chung:2023DPS} creates a synergistic effect, leading to further performance improvements.

In summary, we introduce an inference-time scaling for flow models, analyzing three key factors:
\begin{itemize}[leftmargin=*]
\item ODE vs. SDE: We introduce an \emph{SDE generative process} for flow models to enable particle sampling. 
\item Interpolant: We demonstrate that replacing the linear interpolant of flow models with \emph{Variance Preserving interpolant} expands the search space, facilitating the discovery of higher-reward samples.
\item NFEs Allocation: We propose \emph{Rollover Budget Forcing} that adaptively allocates NFEs across timesteps to ensure efficient utilization of the available compute budget.

\end{itemize}

\vspace{-0.5\baselineskip}
\section{Related Work}
\vspace{-0.5\baselineskip}
\label{sec:related}
\vspace{-0.2\baselineskip}
\subsection{Reward Alignment in Diffusion Models}
\vspace{-0.5\baselineskip}
In the literature of diffusion models, reward alignment approaches can be broadly categorized into fine-tuning-based methods~\cite{Black2024:DDPO, Yang2024:D3PO, Wallace:2024DiffusionDPO, Clark2024:DRaFT, Prabhudesai2023:AlignProp, Xu2023:ImageReward} and inference-time-scaling-based methods~\cite{Li2024:SVDD, Singh:2025CoDE, Dou2024:FPS, Wu:2023TDS, Cardoso2024:MCGDiff}. 
While fine-tuning diffusion models enables the generation of samples aligned with user preferences, it requires fine-tuning for each task, potentially limiting scalability. In contrast, inference-time scaling approaches offer a significant advantage as they can be applied to any reward without requiring additional fine-tuning. 
Moreover, inference-time scaling can also be applied to fine-tuned models to further enhance alignment with the reward. 
Since our proposed approach is an inference-time scaling method, we focus our review on related literature in this domain. 

When the reward is differentiable, gradient-based methods~\cite{Chung:2023DPS, Bansal:2023UGD, Yu:2023FreeDOM, Eyring2024:ReNO, Guo:2024InitNO, Wallace:2023DOODL, Benhamu:2024DFlow} have been extensively studied. 
We note that inference-time scaling can be integrated with gradient-based approaches to achieve synergistic performance improvements. 

\vspace{-0.55\baselineskip}
\subsection{Particle Sampling with Diffusion Models}
\vspace{-0.5\baselineskip}
\label{sec:inference_time_scaling}
The simplest iterative sampling method that can be applied to any generative model is Best-of-N (BoN)~\cite{Stiennon:2020BoN, Touvron:2023llama2, Tang:2024Realfill}, which generates $N$ samples and selects the one with the highest reward. For diffusion models, however, incorporating particle sampling during the denoising process has been shown to be far more effective than na\"ive BoN~\cite{Singh:2025CoDE, Li2024:SVDD}. This idea has been further developed through various approaches that sample particles at intermediate steps. For instance, SVDD~\cite{Li2024:SVDD} proposed selecting the particle with the highest reward at every step. 
CoDe~\cite{Singh:2025CoDE} extends this idea by selecting the highest-reward particle only at specific intervals. 
On the other hand, methods based on Sequential Monte Carlo (SMC)~\cite{Wu:2023TDS, Cardoso2024:MCGDiff, Kim:2025DAS, Dou2024:FPS} employ a probabilistic selection approach, in which particles are sampled from a multinomial distribution according to their importance weights. 
Despite the success of particle sampling approaches for diffusion models, they have not been applicable to flow models due to the absence of stochasticity in their generative process. In this work, we present the first inference-time scaling method for flow models based on particle sampling by introducing stochasticity into the generative process and further increasing sampling diversity through trajectory modification.

\vspace{-0.55\baselineskip}
\subsection{Inference-Time Scaling with Flow Models}
\vspace{-0.5\baselineskip}
To our knowledge, Search over Paths (SoP)~\cite{Ma2025:SoP} is the only inference-time scaling method proposed for flow models, which applies a forward kernel to sample particles from the deterministic sampling process of flow models. 
However, SoP does not explore the possibility of modifying the reverse kernel, which could enable the application of more diverse particle-sampling-based methods~\cite{Li2024:SVDD, Singh:2025CoDE, Kim:2025DAS}.
To the best of our knowledge, we are the \emph{first} to investigate the application of particle sampling to flow models through the lens of the reverse kernel.

\vspace{-0.5\baselineskip}
\section{Problem Definition and Background}
\vspace{-0.6\baselineskip}

\subsection{Inference-Time Reward Alignment}
\vspace{-0.4\baselineskip}
\label{sec:problem_definition}
Given a pretrained flow model that maps the source distribution, a standard Gaussian distribution $p_1$, into the data distribution $p_0$, our objective is to generate high-reward samples $\B{x}_0 \in \mathbb{R}^d$ from the pretrained flow model without additional training--a task known as inference-time reward alignment. We denote the given reward function as $r: \mathbb{R}^{d} \rightarrow \mathbb{R}$, 
which measures text alignment or user preference for a generated sample. 
Following previous works~\cite{Korbak:2022RLDM,Uehara:2024Finetuning,Uehara:2024Bridging}, our objective can be formulated as finding the following target distribution:
{\small
    \begin{align}
        \label{eq:reward_max_obj}
        p^{*}_0 &= \argmax_{q} \  \mathbb{E}_{\B{x}_0 \sim q} \underbrace{\left[ r(\B{x}_0) \right]}_{\text{Reward}} -\beta \underbrace{\mathcal{D}_{\text{KL}} \left[ q \| p_0 \right]}_{\text{KL Regularization}},
    \end{align}
}%
which maximizes the expected reward while the KL divergence term prevents $p^{*}_0(\B{x}_0)$ from deviating too far from $p_0(\B{x}_0)$, with its strength controlled by the hyperparameter $\beta$. 
As shown in previous work~\cite{Rafailov2023:DPO}, the target distribution $p^{*}_0$ can be computed as:
{\small
\begin{align}
    \label{eq:target_distribution}
    p^{*}_0(\B{x}_0) &= \frac{1}{Z} p_0(\B{x}_0) \exp \left( \frac{r(\B{x}_0)}{\beta} \right),
\end{align}
}%
where $Z$ is a normalization constant. We present details in Appendix~\ref{sec:appendix_optimal_marginal_target}. However, sampling from the target distribution is non-trivial. 

A notable approach for sampling from the target distribution is \emph{particle sampling}, which maintains a set of candidate samples—referred to as particles—and iteratively propagates high-reward samples while discarding lower-reward ones. 
When combined with the denoising process of diffusion models, particle sampling can improve the efficiency of limited computational resources in inference-time scaling. 
In the next section, we review particle sampling methods used in diffusion models and, \emph{for the first time,} we explore insights for adapting them to flow models. 

\mycomment{
1. 우리의 목적은 pretrained된 flow model로부터 high reward를 가진 샘플을 추가 학습 없이 생성해내는 것이며, 이는 reward alignment task라고도 불린다. 
2. 이 때 우리는 pretrained 된 flow 모델의 샘플 분포를 p_\theta, 주어진 reward model은 r(\cdot)이라고 한다. 
3. Formally, 우리의 이러한 방식은 다음의 목적 함수로 정리된다: 
4. 여기서 KL divergence term은 주어진 reward model에 샘플들이 overfit 되는 것을 방지하며, beta는 그 세기를 결정하는 hyperparameter이다. 
5. 위 목적 함수를 최대화하는 target distribution은 다음과 같이 정리된다. 
6. 하지만 target distribution에서 직접 샘플링을 하는 것은 분모의 normalization constant로 인해 불가능하다. 
7. 다음 섹션에서는 기존 diffusion model을 활용한 이전 연구들이 particle sampler을 활용해서 target distribution로부터 샘플을 생성한 방식에 대해 살펴보겠다. 
}

\vspace{-0.4\baselineskip}
\subsection{Particle Sampling Using Diffusion Model}
\vspace{-0.5\baselineskip}
\label{sec:motivation}

A pretrained diffusion model generates data by drawing an initial sample from the standard Gaussian distribution and iteratively sampling from the learned conditional distribution $p_{\theta}(\B{x}_{t-\Delta t} | \B{x}_{t})$. 
Building on this, previous works~\cite{Levine:2018RL, Uehara:2024Bridging} have shown that data from the target distribution in Eq.~\ref{eq:target_distribution} can be generated by performing the same denoising process while replacing the conditional distribution $p_{\theta}(\B{x}_{t-\Delta t} | \B{x}_{t})$ with the \emph{optimal policy}:

{\small
    \begin{align}
    \label{eq:optimal_policy}
        p^{*}_\theta(\B{x}_{t-\Delta t} | \B{x}_{t}) = \frac{ p_\theta(\B{x}_{t-\Delta t} | \B{x}_{t}) \exp \left(\frac{v(\B{x}_{t-\Delta t})}{\beta}  \right)}{\int p_\theta(\B{x}_{t-\Delta t} | \B{x}_{t}) \exp \Bigl(\frac{v(\B{x}_{t-\Delta t})}{\beta} \Bigr) \mathrm{d}\B{x}_{t-\Delta t}},
    \end{align}
}%
where the details are presented in Appendix~\ref{sec:appendix_optimal_soft_policy}. 
We denote $v(\cdot): \mathbb{R}^d \rightarrow \mathbb{R}$ as the optimal value function that estimates the expected future reward of the generated samples at current timestep. 
Following previous works~\cite{Chung:2023DPS,Kim:2025DAS, Li2024:SVDD,Bansal:2023UGD}, we approximate the value function using the posterior mean computed via Tweedie’s formula~\cite{Robbins1992}, given by $v(\B{x}_t) \approx r(\B{x}_{0|t})$, where $\B{x}_{0|t} \coloneq \mathbb{E}_{\B{x}_0 \sim p_{\theta}(\B{x}_0 | \B{x}_t)} \left[ \B{x}_0 \right]$. 

\mycomment{
1. Diffusion model은 조건부 확률 $p_{\text{ref}}(\B{x}_{t-\Delta t} | \B{x}_{t})$로부터 샘플을 반복적으로 추출하여 데이터를 생성한다. 
2. 이러한 점에서 착안하여 이전 연구들은 optimal policy에서 샘플을 반복적으로 추출함으로써 target distribution의 데이터를 생성해내는 것을 보였다. 

}

Since directly sampling from the optimal policy distribution in Eq.~\ref{eq:optimal_policy} is nontrivial, one can first approximate the distribution using importance sampling while taking $p_{\theta}(\B{x}_{t-\Delta t} | \B{x}_{t})$ as the \emph{proposal distribution}:
{\small
\begin{align}
    \label{eq:importance_sampling}
    p^{*}_\theta(\B{x}_{t-\Delta t} | \B{x}_{t}) \approx \sum_{i=1}^K \frac{w^{(i)}_{t-\Delta t}}{\sum_{j=1}^K w^{(j)}_{t-\Delta t}} \delta_{\B{x}^{(i)}_{t-\Delta t}}, \quad
    \{ \B{x}^{(i)}_{t-\Delta t} \}_{i=1}^{K} \sim p_\theta(\B{x}_{t-\Delta t} | \B{x}_{t}), 
\end{align}
}%
where $K$ is the number of particles, $w^{(i)}_{t-\Delta t} = \exp( v ( \B{x}^{(i)}_{t-\Delta t} ) / \beta )$ is the weight, and $\delta_{\B{x}^{(i)}_{t-\Delta t}}$ is a Dirac distribution. 
SVDD~\cite{Li2024:SVDD} proposed an approximate sampling method for the optimal policy by selecting the sample with the largest weight from Eq.~\ref{eq:importance_sampling}. 

Notably, a key factor in seeking high-reward samples using particle sampling is defining the proposal distribution to sufficiently cover the distribution of high-reward samples.
Consider a scenario where high-reward samples reside in a low density region of the original data distribution, which is common when generating complex or highly specific samples that deviate from the mode of the pretrained model distribution. 
In this case, the proposal distribution must have a sufficiently large variance to effectively explore these low density regions.
This highlights the importance of the \textit{stochasticity} of the proposal distribution, which has been instrumental in the successful adoption of particle sampling in diffusion models. 
In contrast, flow models~\cite{Lipman:2023CFM} employ a \textit{deterministic} sampling process, where all particles $\B{x}_{t-\Delta t}$ drawn from $\B{x}_{t}$ are identical. 
This restricts the applicability of particle sampling methods in flow models. 
One of the main contributions is the investigation of how these particle sampling methods can be efficiently applied to flow models.

To this end, we propose an inference-time approach that introduces stochasticity into the generative process of flow models to enable particle sampling. 
We first transform the deterministic sampling process of flow models into a stochastic process (Sec.~\ref{sec:ode_sde_conversion}). 
We further identify a sampling trajectory that expands the search space of the flow models (Sec.~\ref{sec:scheduler_conversion}). 
Note that while stochastic sampling and trajectory conversion have been studied in prior works, their primary goals have been to improve sample quality~\cite{xu2023restart, yeo2025stochsync, salimans2022progressive, kingma2021variational, Ma2024:sit} or to accelerate inference~\cite{holderrieth2024generator, shaul2023bespoke, shaul2023kinetic, Karras2022:EDM}. To the best of our knowledge, we are the first to investigate sampling stochasticity and trajectory conversion for efficient particle-based sampling in flow models.

Additionally, previous particle sampling methods in diffusion models allocated a fixed computational budget (i.e., a uniform number of particles) across all denoising timesteps, potentially limiting exploration.
We explore sampling with the rollover strategy, which adaptively allocates compute across timesteps during the sampling process (Sec.~\ref{sec:roll_over}).

\mycomment{
1. LLM과 Diffusion literature에서는 inference-time의 computation을 늘림으로써 사용자의 preference에 보다 더 align된 샘플을 생성할 수 있는 inference-time reward alignment가 많은 주목을 받았고, 다양한 방법론들이 소개되었다. 
2. 하지만 최근 빠른 생성 시간과 high-fidelity로 각광을 받고있는 flow 모델에 대해서는 이러한 연구가 상대적으로 큰 관심을 받지 못하였다. 
3. 본 연구에서는 기존 LLM과 Diffusion 모델에서 큰 주목을 받고 있는 inference-time scaling을 flow 모델에 대해서 적용시키고자 한다. 
4. 이전 연구에서 제시하였듯이(On Reinforcement Learning and Distribution Matching for Fine-Tuning Language Models with no Catastrophic For- getting, Fine-tuning continuous), 우리는 i) 주어진 reward를 최대화시키면서 ii) 기존의 분포에서 크게 벗어나지 않는 sample을 생성하는 것이 목표이다. 이러한 objective는 다음과 같이 표기된다: 
5. 다음 section에서는 diffusion 모델의 literature에서 위 분포를 만족시키는 샘플을 inference-time에 추출하는 particle sampling에 대하여 소개하겠다. 
6. 기존 Diffusion 모델을 활용한 연구에서는 위 분포에서 샘플링을 하기 위해 여러 개의 particle을 기반으로 한 Monte Carlo 기반의 방식들이 소개되었다. 
7. (MC 샘플링 예시). 이러한 방식을 활용한 Diffusion 모델들이 성공적인 alignemnt를 수행할 수 있었던 이유에는 proposal 분포 p(x_{t-\Delta t}|x_{t})의 stochasticity가 더 넓은 exploration space를 제공해주었기 때문이다. 
8. 하지만 deterministic한 샘플링 방식을 사용하는 flow 모델에 대해서는 이러한 particle 샘플링 방식을 적용하는데 한계가 존재한다. (증명)
9. 따라서 우리는 Diffusion 모델의 inference-time scaling에 좋은 성능을 보인 search algorithm들을 flow 모델에서도 마찬가지로 적용시키기 위해 기존 ODE를 SDE로 변환하는 post-training SDE conversion을 소개한다. 비록 SDE conversion이 더 넓은 search space 가져오는데 도움을 주지만 더 넓은 exploration을 하기 위해 우리는 post-training scheduler conversion을 제안한다. 마지막으로 우리는 기존의 search algorithm들은 매 denoising step에서 같은 수의 particle에 대해서만 scaling을 하는데 inspire하여 temporal axis의 scaling을 adaptive하게 바꾼 our method를 소개한다. 
}

\mycomment{
1. 최근 생성 기법은 ODE/SDE를 활용하여 source 분포의 sample x_1~p_1을 target 분포의 sample x_0~p_0로 transport 시키는 것에 초점이 맞추어져 있다. 
2. Stochastic interpolant x_t는 임의의 두 분포의 샘플 x_1~p_1와 x_0~p_0를 이어준다. 
3. 이 때 alpha와 sigma는 유연하게 선택이 가능하며 [footnote] 다양한 trajectory를 만들어낸다.  
4. 위 framework는 임의의 두 분포에 대해 일반화가 가능하지만 본 연구에서는 p_1가 정규 분포로 설정된 one-sided interpolant을 간주한다. 
5. 그 중 주목할 만한 stochastic process는 Flow 기반 모델 literature에서 활용되는 Linear이다. 
6. Flow model은 velocity field로 정의된 time-dependent vector field로부터 ODE를 풀어 x_1로부터 x_0를 생성한다. 
7. 여기서 v()는 conditional Flow Matching objective로 학습된 conditional velocity field이다. 
8. 반면에 Score-based Diffusion Models는 (SBGM) perturbation kernel로부터 유도된 forward diffusion process를 활용한다. 이 때 Variance Preserving으로 대표되는 stochastic interpolant는 다음과 같이 정의된다. 
9. SBGM은 SDE를 따르는 reverse process를 활용하여 x_1으로부터 x_0를 생성한다. 
10. 이 때 drift coefficient에 나타나는 score function, nabla log p(x)은 score matching objective를 통해 학습한다. 
}

\noindent 
\begin{figure}[t]
    \small
    \centering
    \begin{minipage}[t!]{0.48\linewidth}
        \includegraphics[width=\linewidth]{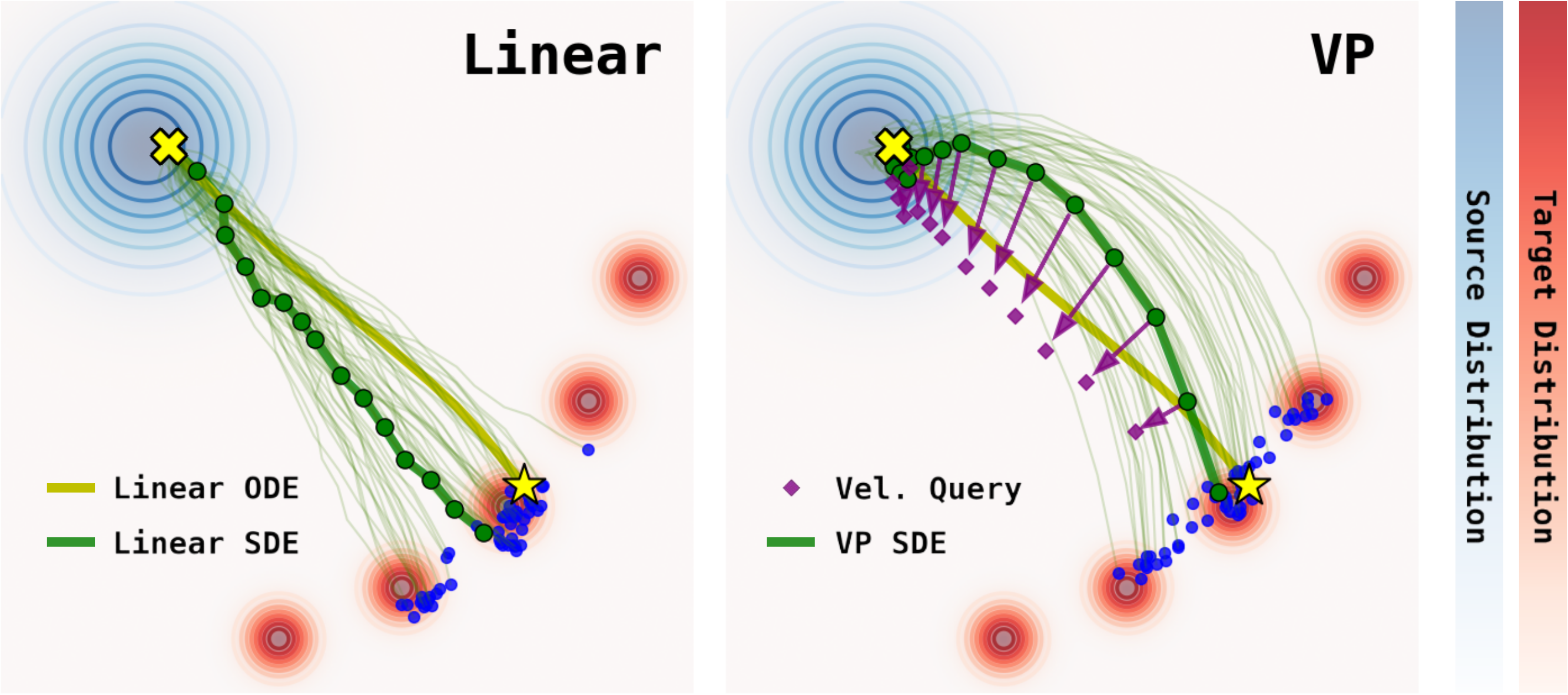}
        \caption{
            \textbf{Comparison of Linear-ODE, Linear-SDE, and VP-SDE.}
            The visualization shows how trajectories evolve under different dynamics starting from the same noise latent.
        }
        \label{fig:ode_sde_viz}
    \end{minipage}
    \hfill
    \begin{minipage}[t!]{0.48\linewidth}\vspace{-\topsep}
        {\scriptsize
        \setlength{\tabcolsep}{0.2em}
        \def\arraystretch{0.0}
        \newcolumntype{Z}{>{\centering\arraybackslash}m{0.32\linewidth}}
            \begin{tabularx}{\textwidth}{Z Z Z}
              \includegraphics[width=\linewidth]{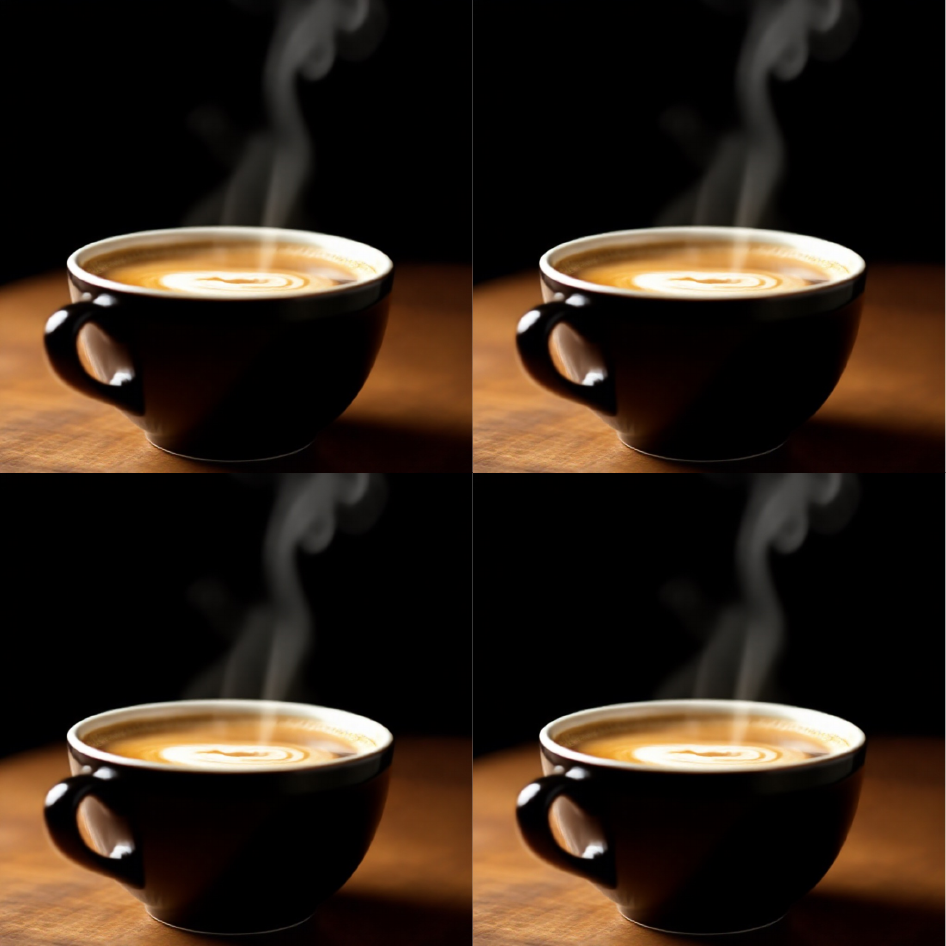}
              &  \includegraphics[width=\linewidth]{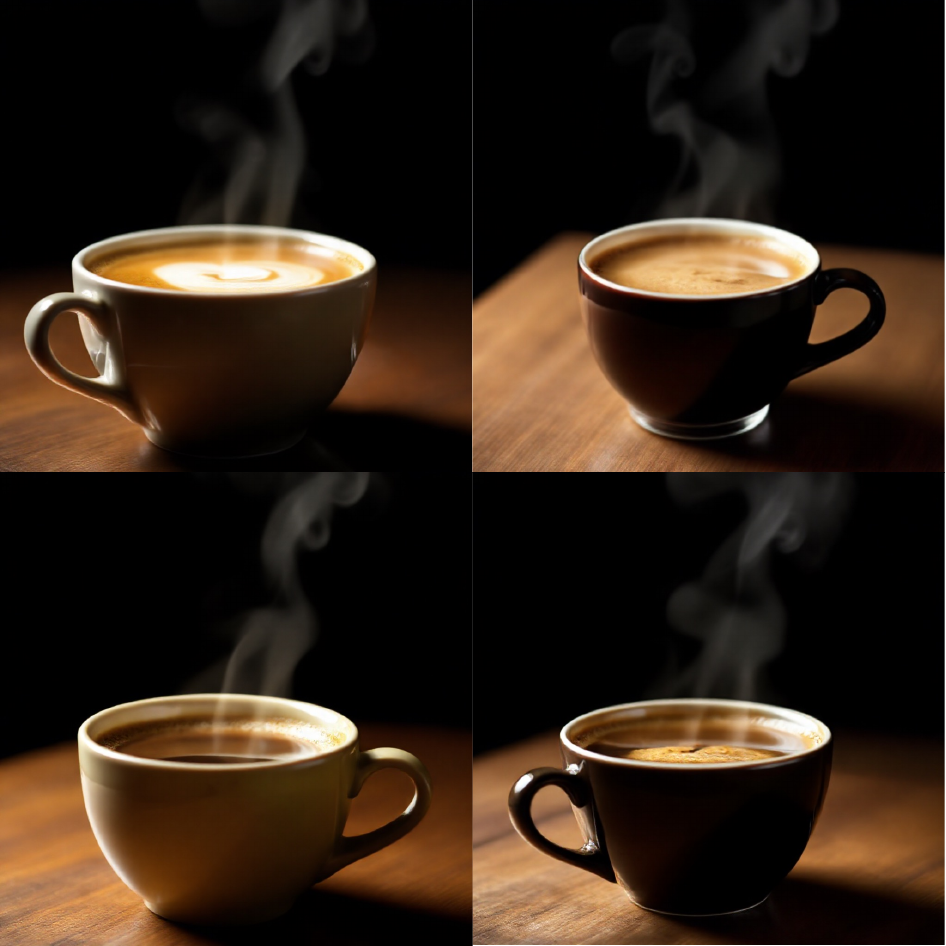}
              &  \includegraphics[width=\linewidth]{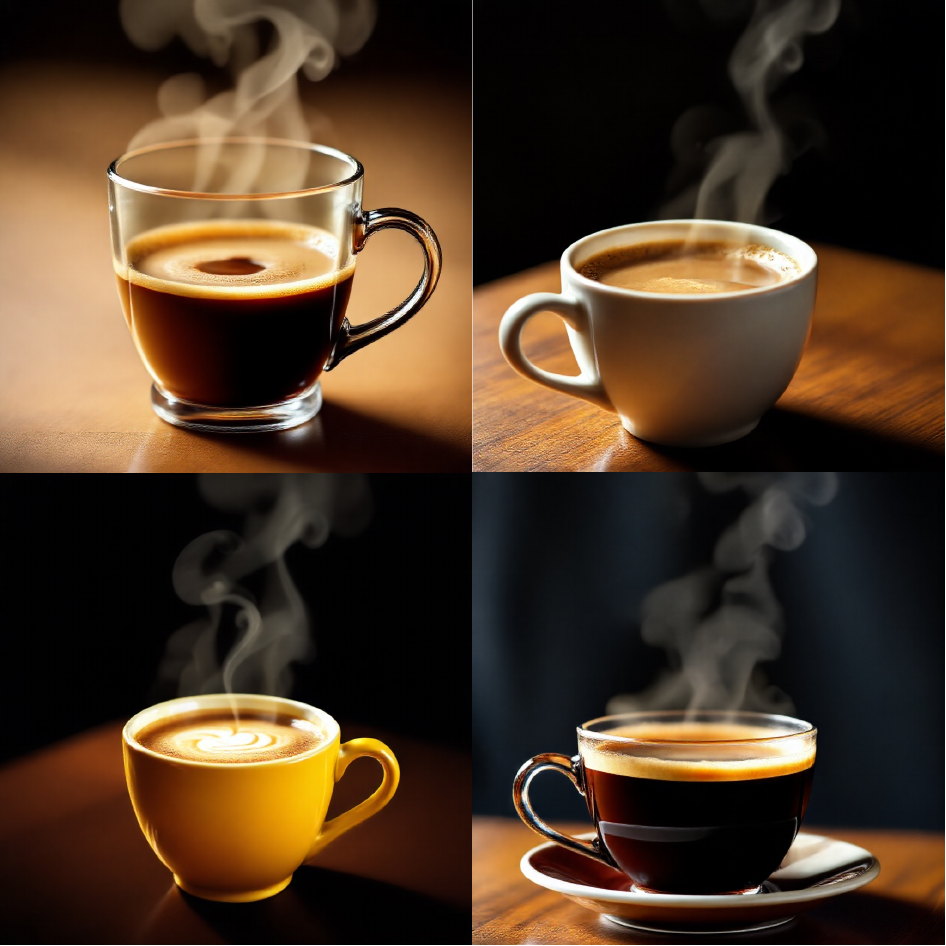} \\
              \rule{0pt}{3ex} (a) Linear-ODE
              &
              \rule{0pt}{3ex} (b) Linear-SDE
              &
              \rule{0pt}{3ex} (c) VP-SDE \\
            \end{tabularx}
        }
        \captionof{figure}{
            \textbf{Sample diversity test using FLUX~\cite{BlackForestLabs:2024Flux} under linear and VP interpolant.} All samples share the same initial latent. Prompt: \textit{``A steaming cup of coffee''}.
        }
    \label{fig:variance_test}
    \end{minipage}
    \vspace{-1.1\baselineskip}
\end{figure}



\vspace{-1.9\baselineskip}
\section{SDE-Based Particle Sampling in Flow Models}
\vspace{-0.6\baselineskip}
\label{sec:method}
In this section, we review flow and diffusion models within the unified stochastic interpolant framework (Sec.~\ref{sec:stochastic_interpolant}) and introduce our inference-time approaches for efficient particle sampling in flow models (Sec.~\ref{sec:ode_sde_conversion} and ~\ref{sec:scheduler_conversion}). 

\vspace{-0.4\baselineskip}
\subsection{Background: Stochastic Interpolant Framework}
\vspace{-0.5\baselineskip}
\label{sec:stochastic_interpolant}
At the core of both diffusion and flow models is the construction of probability paths $\{ p_t \}_{0 \leq t \leq 1}$, where $\B{x}_t \sim p_t$ serves as a bridge between $\B{x}_1 \sim p_1$ and $\B{x}_0 \sim p_0$:
\begin{align}
    \label{eq:stochastic_interpolant}
    \B{x}_t=\alpha_t \B{x}_0 + \sigma_t \B{x}_1,
\end{align}
where $\alpha_t$ and $\sigma_t$ are smooth functions satisfying $\alpha_0 = \sigma_1 = 1$, $\alpha_1 = \sigma_0 = 0$, and $\Dot{\alpha}_t < 0, \Dot{\sigma}_t > 0$; we denote the dot as a time derivative. This formulation provides a flexible choice of \textit{interpolant} $(\alpha_t, \sigma_t)$ which determines the sampling trajectory. 

\vspace{-0.4\baselineskip}
\subsection{Inference-Time SDE Conversion}
\vspace{-0.5\baselineskip}
\label{sec:ode_sde_conversion}
Flow models~\cite{Lipman:2023CFM, Liu:2023RF} learn the velocity field $u_{t} : \mathbb{R}^d \rightarrow \mathbb{R}^d$, which enables sampling of $\B{x}_0$ by solving the Probability Flow-ODE~\cite{Song:2021SDE} backward in time:
\begin{align}
    \label{eq:pf-ode}
    \mathrm{d} \B{x}_t &= u_{t}(\B{x}_t) \mathrm{d} t.
\end{align}
The deterministic process in Eq.~\ref{eq:pf-ode} accelerates the sampling process enabling few-step generation of high-fidelity samples. However, as discussed in Sec.~\ref{sec:motivation}, the deterministic nature of this sampling process limits the applicability of particle sampling in flow models. 

To address this, we transform the deterministic sampling process into a stochastic process. 
The reverse-time SDE that shares the same marginal densities as the deterministic process in Eq.~\ref{eq:pf-ode}:
\begin{align}
    \label{eq:reverse_sde}
    \mathrm{d} \B{x}_t = \B{f}_t(\B{x}_t) \mathrm{d} t + g_t \mathrm{d} \B{w}, \quad
    \B{f}_t(\B{x}_t) = u_{t}(\B{x}_t) - \frac{g_t^2}{2} \nabla \log p_t(\B{x}_t),
\end{align}
where $\B{f}_t(\B{x}_t)$ and $g_t$ represent the drift and diffusion coefficient, respectively, and $\B{w}$ is the standard Wiener process. 
This conversion introduces a noise schedule $g_t$, which can be freely chosen. Although SiT~\cite{Ma2024:sit} arrives at the same conclusion, we provide a more comprehensive proof in Appendix~\ref{sec:appendix_diffusion_coefficient_choice}. 
In our case, we set $g_t = t^2$, scaled by a factor of 3. 
Note that in the case where $g_t=0$ the process reduces to deterministic sampling in Eq.~\ref{eq:pf-ode}.

Using the velocity $u_{t}(\B{x}_t)$ predicted by a pretrained flow model, the score function $\nabla \log p_t(\B{x}_t)$ appearing in the drift coefficient $\B{f}_t(\B{x}_t)$ can be computed as:
\begin{align}
    \label{eq:score_velocity}
    \nabla \log p_t (\B{x}_t) = \frac{1}{\sigma_t} \frac{\alpha_t u_{t}(\B{x}_t) - \dot{\alpha}_t \B{x}_t}{\dot{\alpha}_t \sigma_t - \alpha_t \dot{\sigma}_t}.
\end{align}
This enables the conversion of the deterministic sampling to stochastic sampling, which we refer to as inference-time SDE conversion. 
Given the drift coefficient term $\B{f}_t(\B{x}_t)$ and diffusion coefficient $g_t$, the proposal distribution in the discrete-time domain is derived as follows:
\begin{align}
    \label{eq:proposal_dist_sde}
    p_\theta(\B{x}_{t-\Delta t} | \B{x}_t) = \mathcal{N}(\B{x}_t - \B{f}_t(\B{x}_t) \Delta t,\ 
     g_t^2 \Delta t\ \mathbf{I}).
\end{align}
While previous works have proposed converting an SDE to an ODE to improve sampling efficiency~\cite{Karras2022:EDM, Song2020:DDIM, Lu2022:DPM, Song:2021SDE}, the reverse approach—transforming an ODE into an SDE—has received relatively less attention and has primarily focused on improving sample quality~\cite{xu2023restart, Ma2024:sit}. 
To the best of our knowledge, this work is the \emph{first} to explore SDE conversion in flow models specifically to expand the search space of proposal distribution for efficient particle sampling. 

Since flow models utilize the linear interpolant $(\alpha_t = 1 - t, \sigma_t = t)$, we refer to the generative processes of the flow models using Eq.~\ref{eq:pf-ode} and Eq.~\ref{eq:reverse_sde} as Linear-ODE and Linear-SDE, respectively. 
In Fig.~\ref{fig:ode_sde_viz} (left), we visualize the sampling trajectories of Linear-ODE and Linear-SDE. 
The samples generated using Linear-ODE are identical and collapse to a single point, restricting exploration. 
In contrast, Linear-SDE introduces sample variance, allowing for broader exploration and increasing the likelihood of discovering high-reward samples. 

In Fig.~\ref{fig:variance_test}~(a-b), we visualize images sampled from Linear-ODE and Linear-SDE using FLUX~\cite{BlackForestLabs:2024Flux}. 
As discussed previously, the particles drawn from the proposal distribution of Linear-ODE are identical. In contrast, Linear-SDE introduces variation across different particles, thereby expanding the search space for identifying high-reward samples.
In the next section, we introduce \emph{inference-time interpolant conversion}, which further increases the search space.


\vspace{-0.4\baselineskip}
\subsection{Inference-Time Interpolant Conversion}
\vspace{-0.5\baselineskip}
\label{sec:scheduler_conversion} 
To further expand the search space of Linear-SDE, we draw inspiration from the effective use of particle sampling in diffusion models, where we identified a key difference: the \textit{interpolant}.
While the forward process in diffusion models follows the Variance Preserving (VP) interpolant $(\alpha_t = \exp^{-\frac{1}{2} \int_0^t \beta_s \mathrm{d}s}, \sigma_t = \sqrt{1 - \exp^{-\int_0^t \beta_s \mathrm{d}s}})$, with $\beta_s$ denoting a predefined variance schedule, flow models adopt a linear interpolant.

As shown in the previous works~\cite{Lipman2024:flowguide, shaul2023bespoke}, we note that given a velocity model $u_t$ based on an interpolant $(\alpha_t$, $\sigma_t)$ (\eg~linear), one can transform the vector field and generate a sample based on a new interpolant $(\Bar{\alpha}_s, \Bar{\sigma}_s)$ (\eg~VP) at inference-time. 
The two paths $ \Bar{\B{x}}_s = \Bar{\alpha}_s \B{x}_0 + \Bar{\sigma}_s \B{x}_1 $ and $ \B{x}_t = \alpha_t \B{x}_0 + \sigma_t \B{x}_1 $ are connected through scale-time transformation:
\begin{align}
    \label{eq:scale_time_trans}
    \Bar{\B{x}}_s &= c_s \B{x}_{t_s} \quad t_s = \rho^{-1} (\Bar{\rho}(s)) \quad  c_s = \Bar{\sigma}_s / \sigma_{t_s},
\end{align}
where $\rho(t) = \frac{\alpha_t}{\sigma_t}$ and $\Bar{\rho}(s)=\frac{\Bar{\alpha}_s}{\Bar{\sigma}_s}$ define the signal-to-noise ratio of the original and the new interpolant, respectively. The velocity for the new interpolant is given as:
{\small
\begin{align}
    \label{eq:velocity_transform}
    \Bar{u}_s(\Bar{\B{x}}_s) = \frac{\dot{c}_s}{c_s} \Bar{\B{x}}_s & + c_s \dot{t}_s u_{t_s} \left(\frac{\Bar{\B{x}}_{s}}{c_s} \right), \quad 
    \dot{c}_s = \frac{\sigma_{t_s} \dot{\Bar{\sigma}}_s - \Bar{\sigma}_s \dot{\sigma}_{t_s} \dot{t}_s}{\sigma_{t_s}^2} & 
    \quad \dot{t}_s = \frac{\sigma_{t_s}^2 \left( \Bar{\sigma}_s \dot{\Bar{\alpha}}_s - \Bar{\alpha}_s \dot{\Bar{\sigma}}_s \right)}{\Bar{\sigma}_s^2 \left( \sigma_{t_s} \dot{\alpha}_{t_s} - \alpha_{t_s} \dot{\sigma}_{t_s} \right)}.
\end{align}
}%
Plugging the transformed velocity into the proposal distribution in Eq.~\ref{eq:proposal_dist_sde} after computing the score using Eq.~\ref{eq:score_velocity} gives our efficient proposal distribution.
\begin{align}
    \label{eq:vp_sde_proposal}
    \Bar{p}_\theta(\Bar{\B{x}}_{s-\Delta s} | \Bar{\B{x}}_s) = \mathcal{N}\left(\Bar{\B{x}}_s - \left[ \Bar{u}_s(\Bar{\B{x}}_s) - \frac{g_s^2}{2} \nabla \log \Bar{p}_s(\Bar{\B{x}}_s) \right] \Delta s,\ 
     g_s^2 \Delta s\ \mathbf{I}\right).
\end{align}
Since the new trajectory follows the VP interpolant, we refer to this as VP-SDE. 
We visualize VP-SDE sampling in Fig.~\ref{fig:ode_sde_viz}~(right). At inference-time, we query the velocity of the new interpolant from the original interpolant (purple arrow). 
In Fig.~\ref{fig:variance_test}~(c), we visualize the sample diversity under VP-SDE using FLUX~\cite{BlackForestLabs:2024Flux} which generates more diverse samples than Linear-SDE. 
This property of VP-SDE effectively expands the search space, improving particle sampling efficiency in flow models. 
In Sec.~\ref{sec:interpolant_diversity_analysis}, we provide further analysis on how interpolant conversion contributes to sample diversity.

Previous works focused on interpolant conversion that enables stable training~\cite{salimans2022progressive, dhariwal2021diffusion, kingma2021variational} and accelerated inference~\cite{shaul2023kinetic, Karras2022:EDM, shaul2023bespoke}. 
We utilize interpolant conversion to enhance the sample diversity in particle sampling, which has \emph{not} been unexplored before. 
Importantly, while we modify the generative process of flow models to align with that of diffusion models, inference-time scaling with flow models still provides distinct advantages. 
The rectified trajectories of flow models~\cite{Liu:2023RF, liu2024instaflow, BlackForestLabs:2024Flux} allow for a much clearer posterior mean, leading to more precise future reward estimation and, in turn, more effective particle filtering.

\section{Analysis of the Interpolant Conversion and Sample Diversity}
\vspace{-0.5\baselineskip}
\label{sec:interpolant_diversity_analysis}

\begin{wrapfigure}{r}{0.45\textwidth}
\centering
\vspace{-1.2em}
\includegraphics[width=\linewidth]{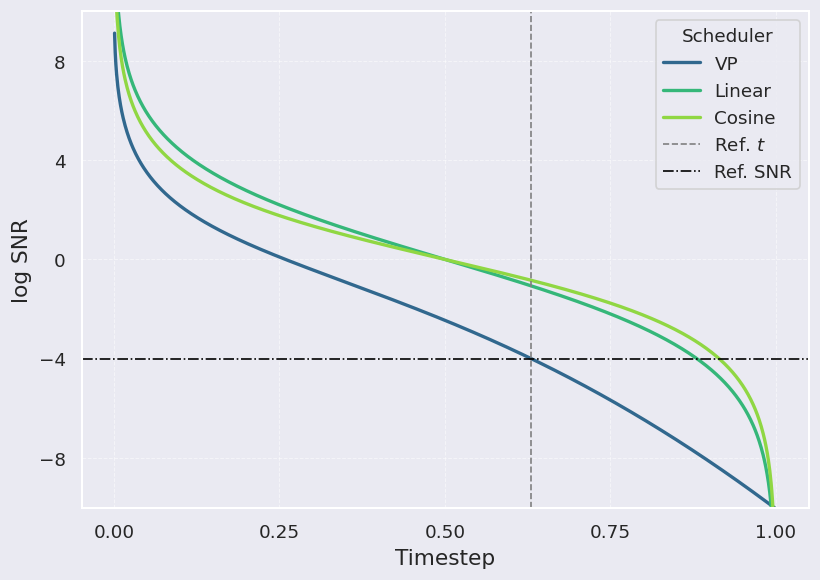}
\captionof{figure}{\textbf{Interpolant log-SNR.} Dashed lines show a reference SNR and timestep. 
}
\vspace{-2.0em}
\label{fig:interpolant_analysis}
\vspace{-0.5em}
\end{wrapfigure}

In this section, we analyze how the interpolant conversion affects the variance of the proposal distribution and explain why VP-SDE yields higher sample diversity than Linear-SDE, as illustrated in Fig.~\ref{fig:variance_test}. 
To investigate this behavior, Fig.~\ref{fig:interpolant_analysis} visualizes the log-SNR ($\log(\alpha_t^2/\sigma_t^2)$) of commonly used interpolants including linear and VP across timesteps $t \in (0, 1)$. 

Then consider initializing the Linear-SDE timesteps $\{t_s\}_{0 \le s \le 1}$ using the timestep conversion in Eq.~\ref{eq:scale_time_trans}.
This ensures that the log-SNR of the corresponding latents $\mathbf{x}_{t_s}$ matches that of the VP-SDE latents $\bar{\mathbf{x}}_s$ at each step (see the horizontal dashed line in Fig.~\ref{fig:interpolant_analysis}). 
Under this condition, the proposal distributions of the two processes are expressed as follows:
\begin{alignat}{2}
    \nonumber
    \text{Linear-SDE ($t_s$):} \quad 
    & p_\theta(\B{x}_{t_s - \Delta t_s} \mid \B{x}_{t_s}) 
    &&= \mathcal{N} \left( \B{x}_{t_s} - \B{f}_{t_s}(\B{x}_{t_s}) \Delta t_s,\ g_{t_s}^2 \Delta t_s\ \mathbf{I} \right) \\
    \nonumber
    \text{VP-SDE:} \quad 
    & \Bar{p}_\theta(\Bar{\B{x}}_{s - \Delta s} \mid \Bar{\B{x}}_s) 
    &&= \mathcal{N} \left( \Bar{\B{x}}_s - \Bar{\B{f}}_{s}(\Bar{\B{x}}_{s}) \Delta s,\ 
    g_s^2 \Delta s\ \mathbf{I} \right) = \Bar{p}_\theta(\cdot \mid c_s \B{x}_{t_s}). 
\end{alignat}
Since $\Delta t_s < \Delta s$ at early denoising steps, timestep conversion results in smaller variance $g_{t_s}^2 \Delta t_s$ than VP-SDE for a fixed diffusion coefficient. 
Hence, to match the variance of Linear-SDE ($t_s$) to that of VP-SDE, one can scale the diffusion coefficient to $g'_{t_s} = g_s/c_s \sqrt{\Delta s / \Delta t_s}$. 
Note that this scaling significantly increases the stochasticity at early denoising steps as $c_s \approx 1$ and $\sqrt{\Delta s / \Delta t_s} \gg 1$. 
While this scaling can enhance sample diversity, applying it in isolation injects excessive noise, causing samples to deviate from the predefined denoising trajectory and ultimately degrading output quality.

In fact, interpolant conversion counteracts excessive noise injection by pairing diffusion coefficient scaling with timestep conversion. 
The two mechanisms act synergistically to increase the sample diversity without harming the sample quality. 
In Sec.~\ref{sec:applications}, we validate this analysis with an ablation study that isolates the effect of each factor.

\paragraph{Comparison under Identical Timestep and Diffusion Coefficient.}
We next analyze the case where both Linear-SDE and VP-SDE operate under identical, fixed timestep schedules and diffusion coefficients. 
While this setting yields identical proposal distribution variances, Fig.~\ref{fig:interpolant_analysis} shows that the VP interpolant maintains a consistently lower log-SNR, indicating that at any given timestep, VP-SDE samples contain a larger noise component (see the vertical dashed line). 
Consequently, the VP-SDE proposal distribution effectively samples from a noisier latent at each step, resulting in higher sample diversity. 
This reflects the observation that noisier latents produce more diverse samples~\cite{meng2021sdedit}. 
While this work focuses on the interpolant perspective, a systematic exploration of timestep scheduling and diffusion coefficient scaling remains a promising direction for future research.

\vspace{-0.75\baselineskip}
\section{Rollover Budget Forcing}
\vspace{-0.5\baselineskip}
\label{sec:roll_over}
In the previous sections, we have introduced our inference-time approaches to expand the search space of proposal distribution. Here, we propose a new budget-forcing strategy to maximize the use of limited compute in inference-time scaling. 
To the best of our knowledge, previous particle sampling methods for diffusion models~\cite{Li2024:SVDD, Singh:2025CoDE} employ a fixed number of particles across all denoising steps. 
However, our analysis shows that this uniform allocation may lead to inefficiency, where the NFEs required at each denoising step to obtain a sample $\B{x}_{t-\Delta t}$ with a higher reward than the current sample $\B{x}_t$ significantly varies across different runs. We present the analysis results in Appendix~\ref{sec:appendix_adaptive_time_and_rollover}.

This motivates us to adopt a \emph{rollover} strategy that adaptively allocates NFEs across timesteps. 
Given a total NFEs budget, the NFEs quota $Q$ is allocated uniformly across timesteps. 
Then at each timestep, if a particle $\B{x}_{t-\Delta t}$ yields a higher reward than the current sample $\B{x}_t$ within the quota, we immediately proceed to the next timestep from the newly identified high-reward sample, rolling over the remaining NFEs to the next step. 
If the allocated quota is exhausted without identifying a better sample, we select the particle with the highest expected future reward from the current set, following the strategy used in SVDD~\cite{Li2024:SVDD}.
The pseudocode of \Ours{} is presented in Appendix~\ref{sec:appendix_inference_time_scaling}. 
In the next section, we demonstrate the effectiveness of~\Ours{}, along with SDE conversion and interpolant conversion.

\vspace{-0.6\baselineskip}
\section{Applications}
\vspace{-0.6\baselineskip}
\label{sec:applications}
In this section, we present the experimental results of particle sampling methods for inference-time reward alignment. 
In Appendix, we present i) implementation details of the search algorithms, ii) aesthetic image generation, iii) comparisons between diffusion and flow models, iv) scaling behavior comparison of Best-of-N (BoN) and \Ours{}, and v) additional qualitative results. 
\vspace{-0.3\baselineskip}
\subsection{Experiment Setup} 
\vspace{-0.3\baselineskip}
\paragraph{Tasks.}
In this section, we present the results for the following applications: compositional text-to-image generation and quantity-aware image generation, where the rewards are non-differentiable. 
For the differentiable reward case, we consider aesthetic image generation (Appendix~\ref{sec:appendix_diff_reward}). 
In compositional text-to-image generation, we use all $121$ text prompts from GenAI-Bench~\cite{Jiang:2024GenAI} that contain three or more advanced compositional elements. 
For quantity-aware image generation, we use $100$ randomly sampled prompts from T2I-CompBench++~\cite{Huang:2025T2ICompBench++} numeracy category. 

For all applications, we use FLUX~\cite{BlackForestLabs:2024Flux} as the pretrained flow model. We fix the total number of function evaluations (NFEs) to $500$ and set the number of denoising steps to $10$, which allocates $50$ NFEs per denoising step. As a reference, we also include the results of the base pretrained models without inference-time scaling. Additionally, we present a comparison between flow models and diffusion models in Appendix~\ref{sec:appendix_diff_flow}. 
\vspace{-0.5\baselineskip}
\paragraph{Baselines.} We evaluate inference-time search algorithms discussed in Sec.~\ref{sec:related}, including Best-of-N (BoN), Search over Paths (SoP)~\cite{Ma2025:SoP}, SMC~\cite{Kim:2025DAS}, CoDe~\cite{Singh:2025CoDE}, and SVDD~\cite{Li2024:SVDD}. 
We categorize BoN and SoP as Linear-ODE-based methods, as their generative processes follow the deterministic process in Eq.~\ref{eq:pf-ode}. 
For SMC, we adopt DAS~\cite{Kim:2025DAS}; however, when the reward is non-differentiable, we use the reverse transition kernel of the pretrained model as the proposal distribution.

\begin{figure}[t]
    \definecolor{YellowGreen}{RGB}{49,126,53}
    \definecolor{Rhodamine}{RGB}{243,156,146}
    \definecolor{Dandelion}{RGB}{242,193,106}
    \vspace{-0.5\baselineskip}
    \centering
    \includegraphics[width=1.0\linewidth]{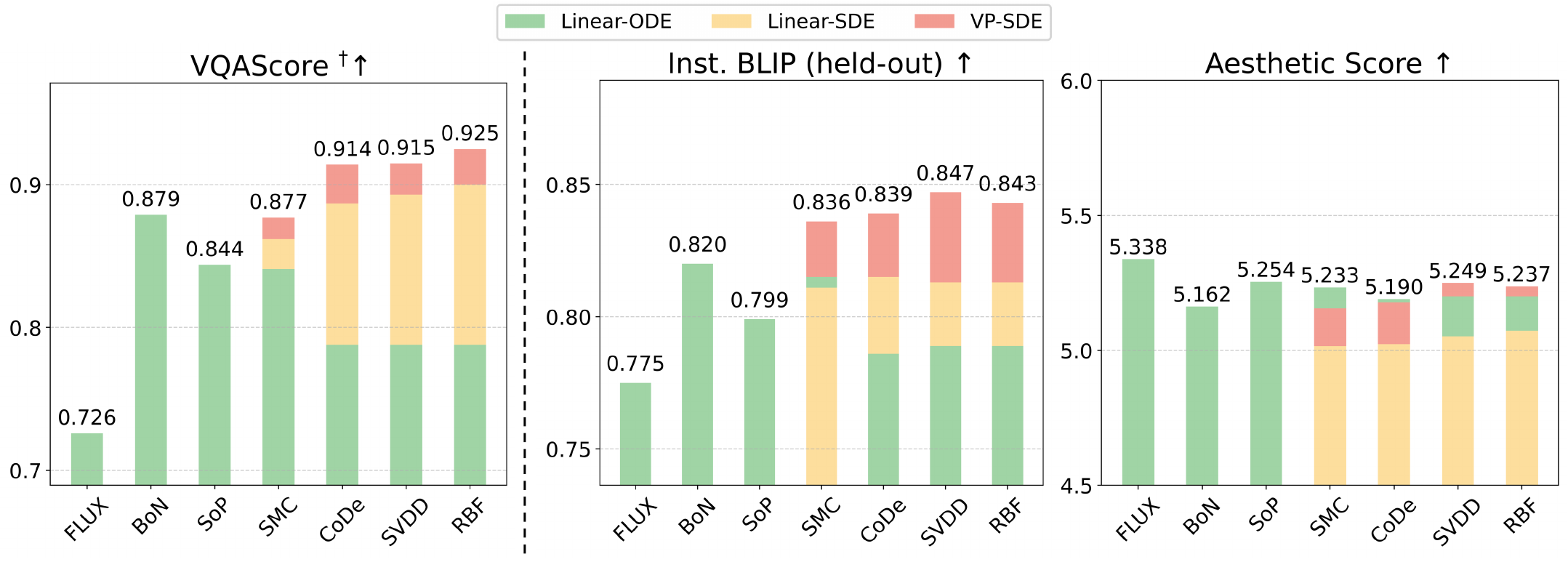}
    \caption{\textbf{Quantitative results of compositional text-to-image generation.} \textsuperscript{\textdagger} denotes the given reward used in inference-time scaling (left). 
    Notably, performance consistently improves from {\color{YellowGreen}Linear-ODE} to {\color{Dandelion}Linear-SDE} and {\color{Rhodamine}VP-SDE} for both given and held-out rewards (left, middle), without significant quality degradation, as evidenced by the comparable aesthetic score~\cite{Schuhmann:aesthetics} (right).} 
    \label{fig:composition_main}
    \vspace{-0.75\baselineskip}
\end{figure}

\begin{figure}[ht!]
\centering
{\scriptsize
\setlength{\tabcolsep}{0pt}
\renewcommand{\arraystretch}{0.0}
\newcolumntype{Y}{>{\centering\arraybackslash}m{0.025\textwidth}}
\newcolumntype{Z}{>{\centering\arraybackslash}m{0.155\textwidth}}
\newcolumntype{X}{>{\centering\arraybackslash}m{0.0002\textwidth}}

\begin{tabularx}{\textwidth}{Y Z Z Z X Y Z Z Z}
\toprule
& Linear-ODE & Linear-SDE & VP-SDE & & & Linear-ODE & Linear-SDE & VP-SDE \\
\midrule
& \multicolumn{3}{c}{\textit{``Three small, non-blue boxes on a large blue box.''}} & & 
& \multicolumn{3}{c}{\makecell{\textit{``Five origami cranes hang from the ceiling,}\\ \textit{only one of which is red, and the others are all white.''}}} \\
\rotatebox{90}{\makecell{\scriptsize{SMC}~\cite{Kim:2025DAS}}} &
\includegraphics[width=0.15\textwidth]{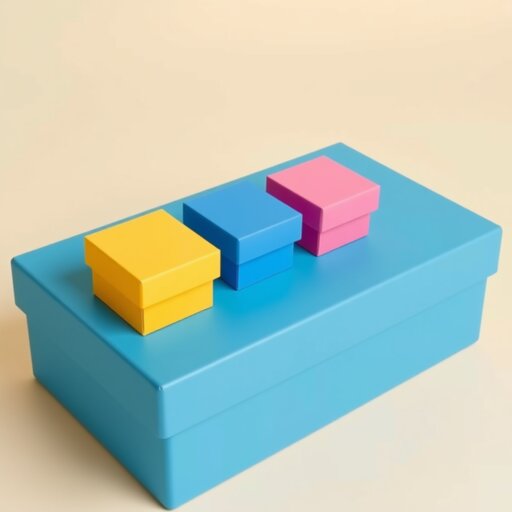} &
\includegraphics[width=0.15\textwidth]{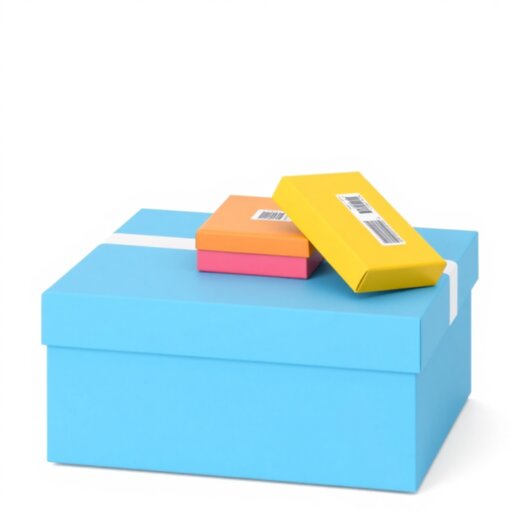} &
\includegraphics[width=0.15\textwidth]{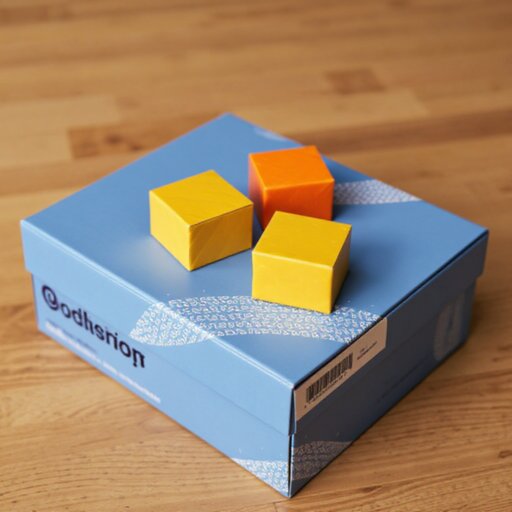} & & 
\rotatebox{90}{\makecell{\scriptsize{CoDe}~\cite{Singh:2025CoDE}}} &
\includegraphics[width=0.15\textwidth]{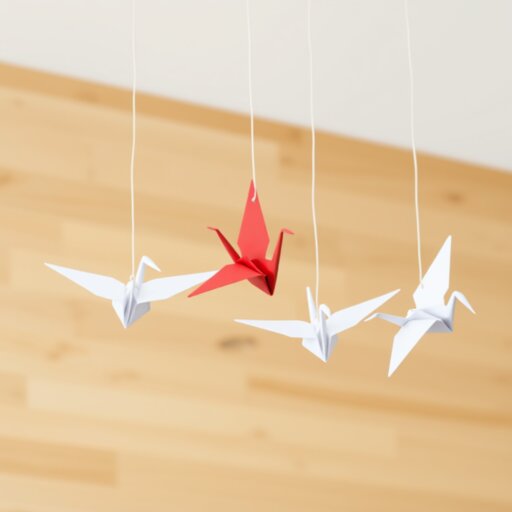} &
\includegraphics[width=0.15\textwidth]{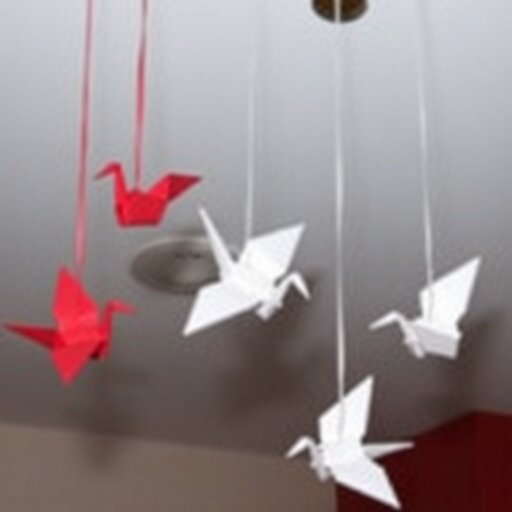} &
\includegraphics[width=0.15\textwidth]{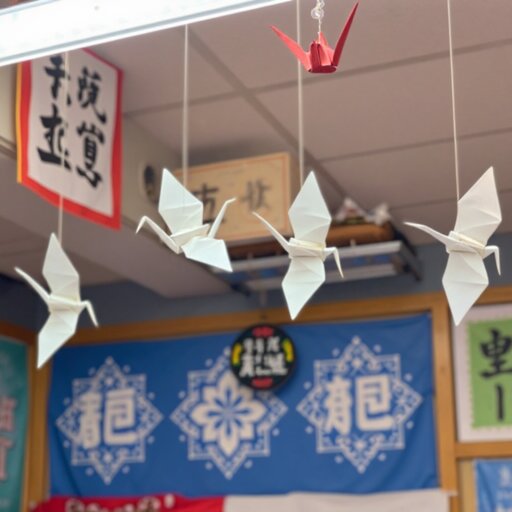} \\
\midrule
& \multicolumn{3}{c}{\makecell{\textit{``A mouse pad has two pencils on it,}\\ \textit{the shorter pencil is green and the longer one is not.''}}} & & 
& \multicolumn{3}{c}{\makecell{\textit{``Five ants are carrying biscuits, and an ant that}\\\textit{is not carrying biscuits is standing on a green leaf directing them.''}}} \\
\rotatebox{90}{\makecell{\scriptsize{SVDD}~\cite{Li2024:SVDD}}} &
\includegraphics[width=0.15\textwidth]{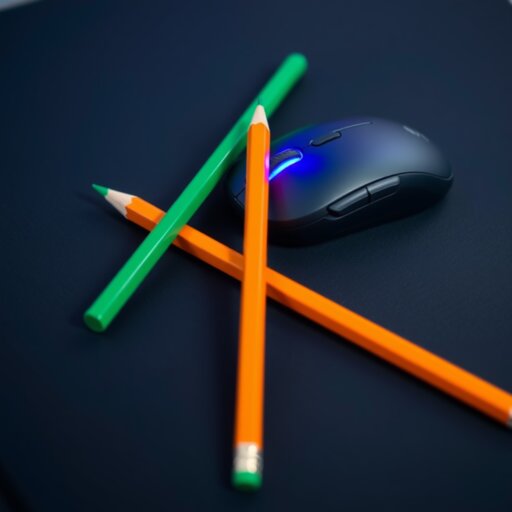} &
\includegraphics[width=0.15\textwidth]{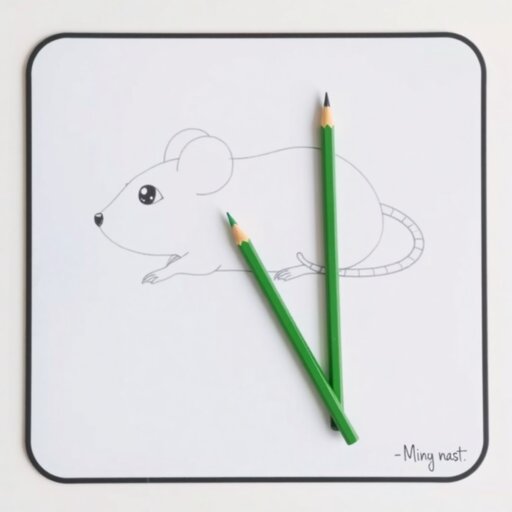} &
\includegraphics[width=0.15\textwidth]{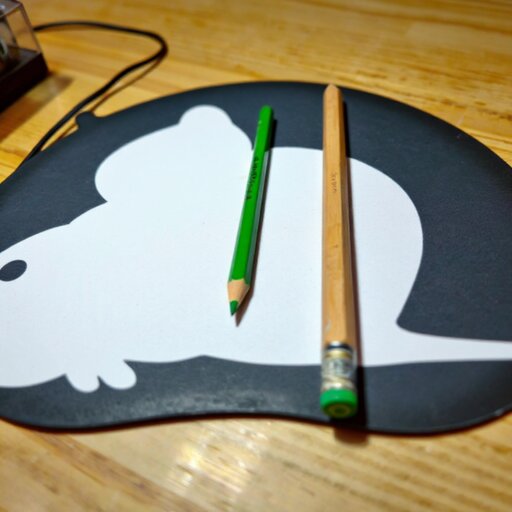} & & 
\rotatebox{90}{\makecell{\scriptsize{\Oursbf{}}}} &
\includegraphics[width=0.15\textwidth]{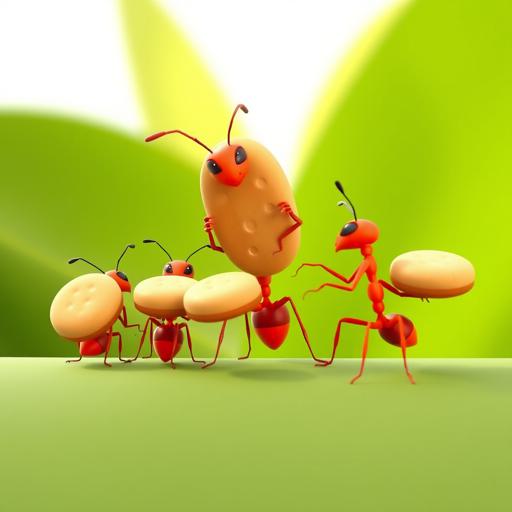} &
\includegraphics[width=0.15\textwidth]{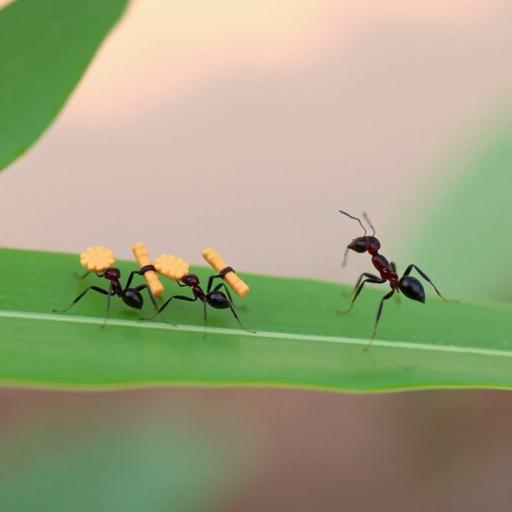} &
\includegraphics[width=0.15\textwidth]{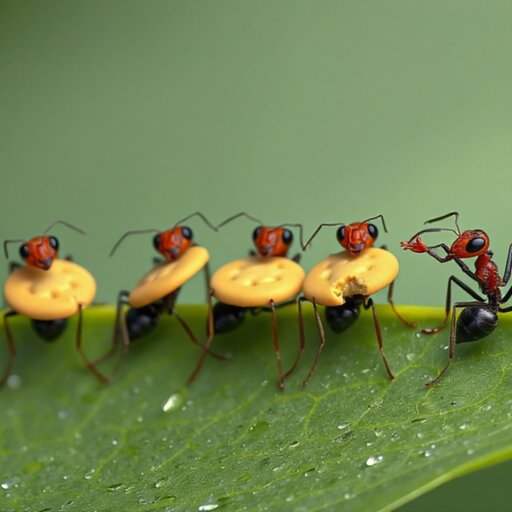} \\
\bottomrule
\end{tabularx}
}
\caption{\textbf{Qualitative results of compositional text-to-image generation.} 
    We use VQAScore~\cite{Lin:2024CLIPFlanT5}, which measures text-image alignment, as the given reward for inference-time scaling. 
    SDE and interpolant conversion enable more effective exploration during inference, enhancing the performance of all particle sampling methods~\cite{Kim:2025DAS, Singh:2025CoDE, Li2024:SVDD}, including~\Ours{}.
}
\label{fig:composition_methods_vp}
\vspace{-1.25\baselineskip}
\end{figure}

\vspace{-0.3\baselineskip}
\subsection{Compositional Text-to-Image Generation}
\vspace{-0.3\baselineskip}
\label{sec:compositional_generation}
\paragraph{Evaluation Metrics.} 
In this work, we refer to the reward used for inference-time scaling as the given reward. 
Here, the given reward is VQAScore, measured with CLIP-FlanT5~\cite{Lin:2024CLIPFlanT5}, which evaluates text-image alignment. 
For the held-out reward, which is not used during inference, we evaluate the score using a different model, InstructBLIP~\cite{Dai:2023InstructBLIP}. 
Additionally, we evaluate aesthetic score~\cite{Schuhmann:aesthetics} to assess the quality of the generated images. 

\vspace{-0.5\baselineskip}
\paragraph{Inference-Time SDE and Interpolant Conversion.} 
The quantitative and qualitative results of compositional text-to-image generation are presented in Fig.~\ref{fig:composition_main} and Fig.~\ref{fig:composition_methods_vp}, respectively. 
As discussed in Sec.~\ref{sec:ode_sde_conversion}, the deterministic sampling process in flow models limits the effectiveness of particle sampling, whereas introducing stochasticity significantly expands the search space and improves performance—highlighting a key contribution of our work: enabling effective particle sampling in flow models.
The results in Fig.~\ref{fig:composition_main} support this finding, showing that Linear-SDE (yellow) consistently improves the given reward (left in Fig.~\ref{fig:composition_main}) over the Linear-ODE (green) across all particle sampling methods, even surpassing BoN and SoP~\cite{Ma2025:SoP}, which were previously the only available inference-time scaling approaches for ODE-based flow models.
Additionally, through inference-time interpolant conversion, VP-SDE (red) further improves performance across all particle sampling methods on both given and held-out rewards (left, middle in Fig.~\ref{fig:composition_main}) by expanding the search space, demonstrating the effectiveness of our proposed distribution.
Notably, particle sampling methods with Linear-SDE and VP-SDE generate high-reward samples without significantly compromising image quality, as evidenced by aesthetic scores that remain comparable to the base FLUX model~\cite{BlackForestLabs:2024Flux} (right in Fig.~\ref{fig:composition_main}).
Qualitatively, SDE conversion and interpolant conversion shown in Fig.~\ref{fig:composition_methods_vp} bring consistent performance improvements (see Appendix~\ref{sec:appendix_quali_ode_sde_vp} for additional results). 

\vspace{-0.5\baselineskip}
\paragraph{Rollover Budget Forcing.} 
As discussed in Sec.~\ref{sec:roll_over}, instead of fixing the number of particles throughout the denoising process, we explore adaptive budget allocation through~\Ours{}. We demonstrate that budget forcing provides additional performance improvements, outperforming the previous particle sampling methods in the given reward (left in Fig.~\ref{fig:composition_main}). 
We present qualitative comparisons of inference-time scaling methods in Appendix~\ref{sec:appendix_quali_search_alg}.

\begin{wraptable}{r}{0.48\textwidth}
    \centering
    \vspace{-1.25em}
    \scriptsize
    \caption{
        \textbf{Ablation Study of Interpolant Conversion.}
        \textsuperscript{\textdagger} denotes the given reward.
    }
    \label{tab:abl_sde_interpolant}
    \vspace{-0.5em}
    \renewcommand{\arraystretch}{1.0}
    \setlength{\tabcolsep}{0pt}
    \newcolumntype{Y}{>{\centering\arraybackslash}m{0.24\linewidth}}
    \newcolumntype{X}{>{\centering\arraybackslash}m{0.12\linewidth}}
    \begin{tabularx}{\linewidth}{Y | Y Y Y}
        \toprule
        Method &
        \makecell{LPIPS-MPD $\uparrow$} &
        VQAScore\textsuperscript{\textdagger} $\uparrow$ &
        \makecell{Inst. BLIP $\uparrow$} \\
        \midrule
        Linear-ODE & -- & 0.788 & 0.789 \\
        Linear-SDE & 0.158 & 0.900 & 0.813 \\
        + Adapt. Time. & 0.270 & 0.908 & 0.813 \\
        + Adapt. Diff. & 0.429 & 0.702 & 0.571 \\
        VP-SDE & \textbf{0.509} & \textbf{0.925} & \textbf{0.843} \\
        \bottomrule
    \end{tabularx}
    \vspace{-1.25em}
\end{wraptable}

\vspace{-0.5\baselineskip}
\paragraph{Ablation study of interpolant conversion.}
Building on the analysis in Sec.~\ref{sec:interpolant_diversity_analysis}, we examine how interpolant conversion contributes to sample diversity and reward alignment through its two underlying mechanisms, timestep conversion and diffusion coefficient scaling. 
Tab.~\ref{tab:abl_sde_interpolant} extends the results of Fig.~\ref{fig:composition_main} by isolating the effect of each component to sample diversity, measured by LPIPS-MPD~\cite{Kim:2025DAS}, and reward alignment.

We observe that timestep conversion (row 3) yields only modest diversity gains: the benefit of sampling at lower log-SNR (Fig.~\ref{fig:interpolant_analysis}) is offset by smaller discretization steps that reduce proposal variance, limiting improvements in reward alignment. 
On the other hand, applying diffusion coefficient scaling without timestep conversion (row 4) increases sample diversity but simultaneously leads to a significant drop in reward alignment indicating excessive noise injection. 
Lastly, the VP-SDE interpolant conversion (row 5) synergistically combines both components, achieving high sample diversity without sacrificing quality and consequently yielding the highest reward.

\begin{figure}[t]
    \definecolor{YellowGreen}{RGB}{49,126,53}
    \definecolor{Rhodamine}{RGB}{243,156,146}
    \definecolor{Dandelion}{RGB}{242,193,106}
    \vspace{-0.5\baselineskip}
    \centering
    \includegraphics[width=1.0\linewidth]{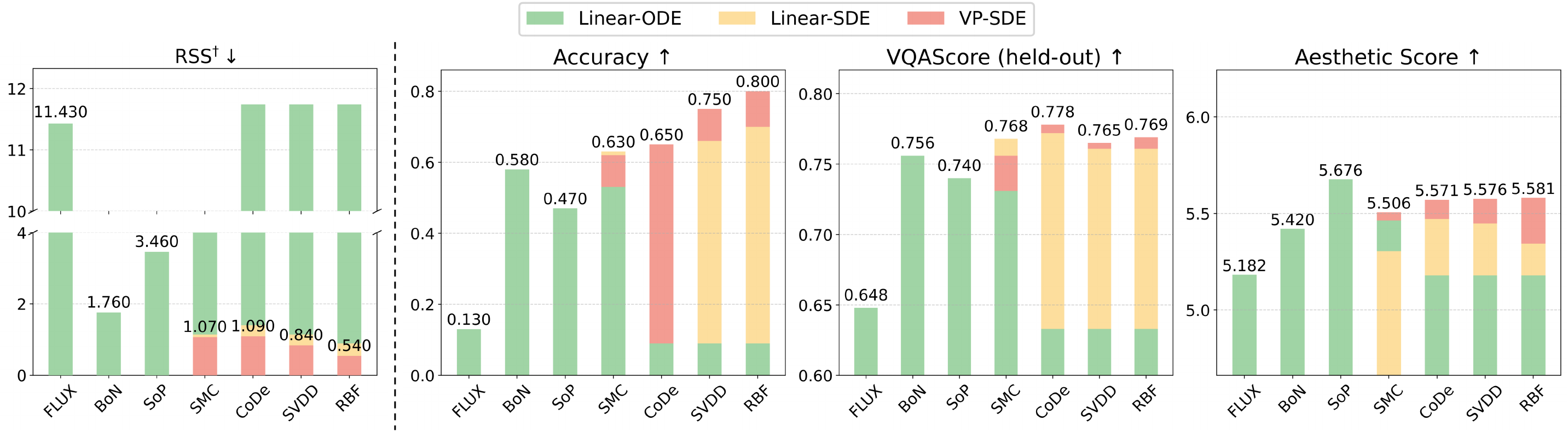}
    \caption{\textbf{Quantitative results of quantity-aware image generation.} \textsuperscript{\textdagger} denotes the given reward, RSS~\cite{Liu:2024GroundingDINO}, with the y-axis truncated for better visualization (left). 
    We observe consistent performance improvements by converting {\color{YellowGreen}Linear-ODE} to {\color{Dandelion}{Linear-SDE}}, and {\color{Rhodamine}VP-SDE} for most cases.} 
    \label{fig:counting_main}
    \vspace{-0.75\baselineskip}
\end{figure}

\begin{figure}[ht!]
\centering
{\scriptsize
\setlength{\tabcolsep}{0pt}
\renewcommand{\arraystretch}{0.0}
\newcolumntype{Y}{>{\centering\arraybackslash}m{0.025\textwidth}}
\newcolumntype{Z}{>{\centering\arraybackslash}m{0.155\textwidth}}
\newcolumntype{X}{>{\centering\arraybackslash}m{0.0002\textwidth}}

\begin{tabularx}{\textwidth}{Y Z Z Z X Y Z Z Z}
\toprule
& Linear-ODE & Linear-SDE & VP-SDE & & & Linear-ODE & Linear-SDE & VP-SDE \\
\midrule
& \multicolumn{3}{c}{\makecell{\textit{``Four balloons, one cup,}\\\textit{four desks, two dogs and four microwaves.''}}} & & 
& \multicolumn{3}{c}{\textit{\makecell{``Four candles, two balloons, one dog,\\two tomatoes and three helicopters.''}}} \\
\rotatebox{90}{\makecell{\scriptsize{SMC}~\cite{Kim:2025DAS}}} &
\includegraphics[width=0.15\textwidth]{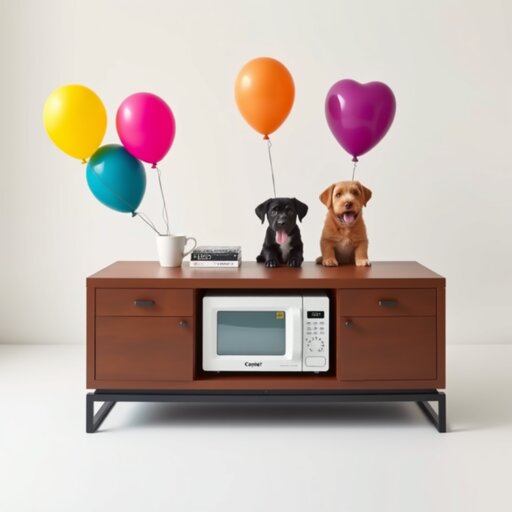} &
\includegraphics[width=0.15\textwidth]{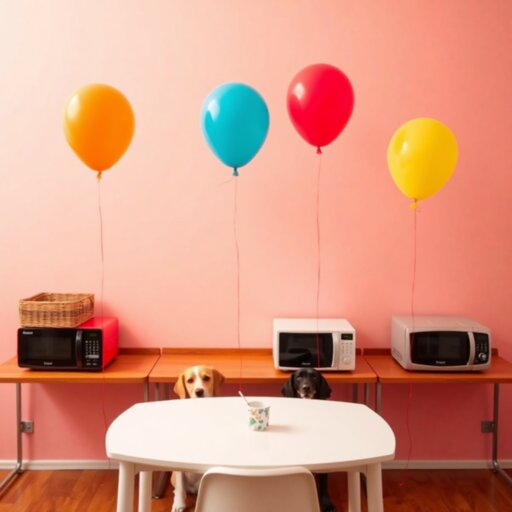} &
\includegraphics[width=0.15\textwidth]{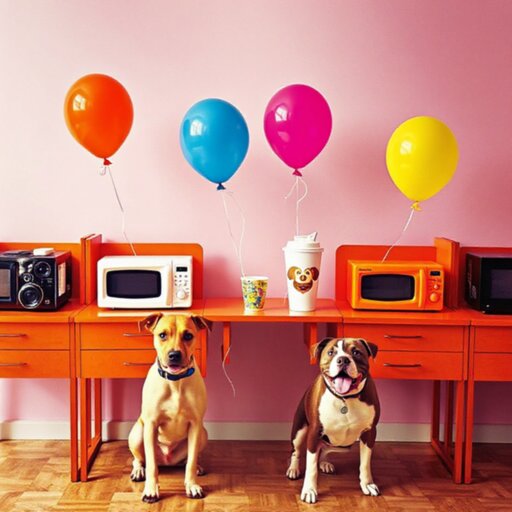} & & 
\rotatebox{90}{\makecell{\scriptsize{CoDe}~\cite{Singh:2025CoDE}}} &
\includegraphics[width=0.15\textwidth]{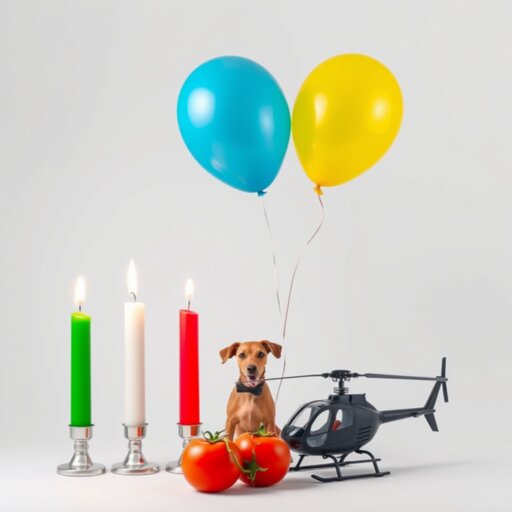} &
\includegraphics[width=0.15\textwidth]{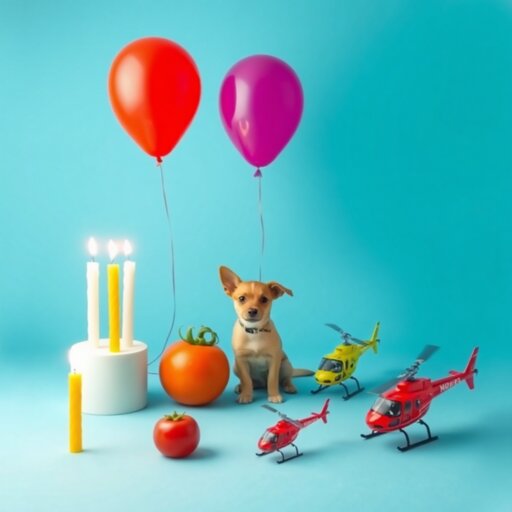} &
\includegraphics[width=0.15\textwidth]{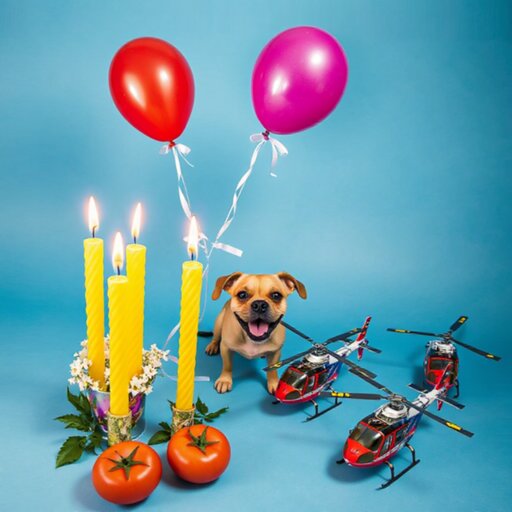} \\
\midrule
& \multicolumn{3}{c}{\textit{``Eight chairs.''}} & & 
& \multicolumn{3}{c}{\textit{\makecell{``One egg, three camels, four cars and four pillows.''}}} \\
\rotatebox{90}{\makecell{\scriptsize{SVDD}~\cite{Li2024:SVDD}}} &
\includegraphics[width=0.15\textwidth]{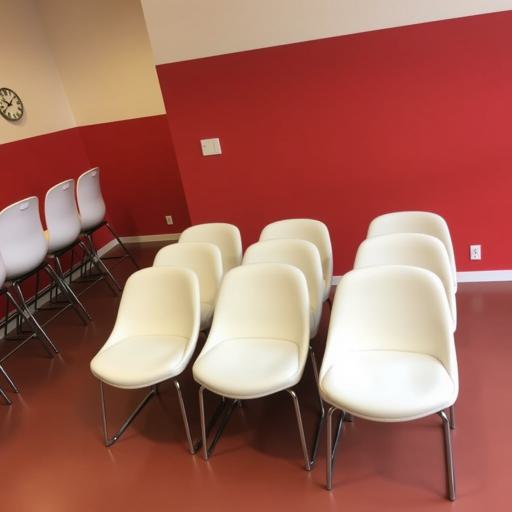} &
\includegraphics[width=0.15\textwidth]{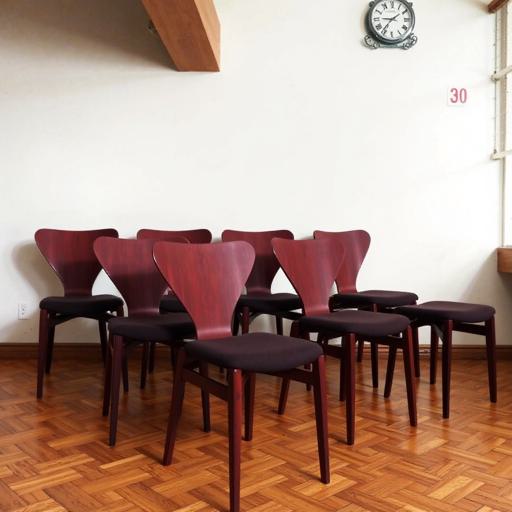} &
\includegraphics[width=0.15\textwidth]{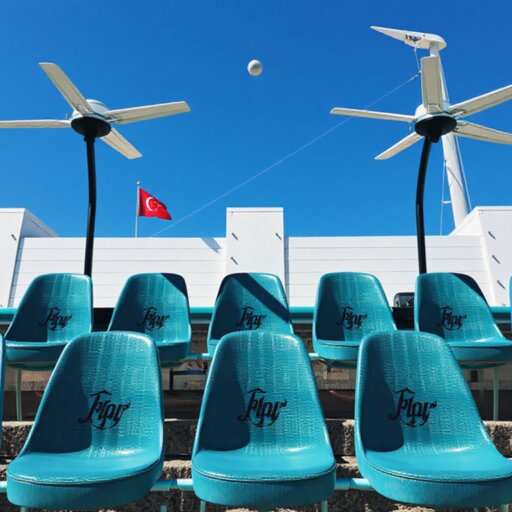} & & 
\rotatebox{90}{\makecell{\scriptsize{\Oursbf{}}}} &
\includegraphics[width=0.15\textwidth]{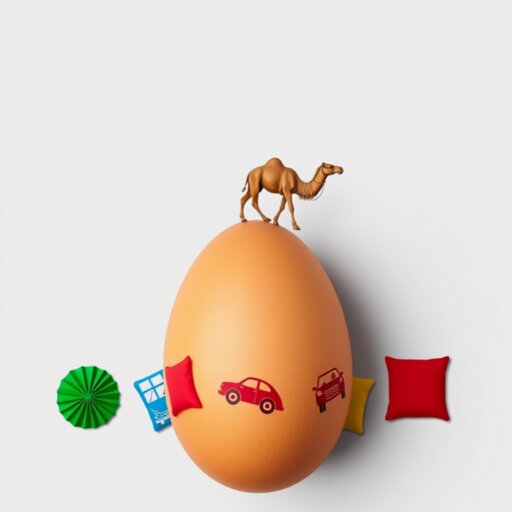} &
\includegraphics[width=0.15\textwidth]{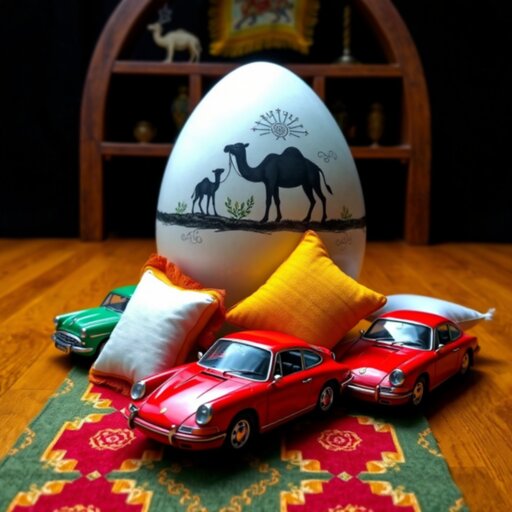} &
\includegraphics[width=0.15\textwidth]{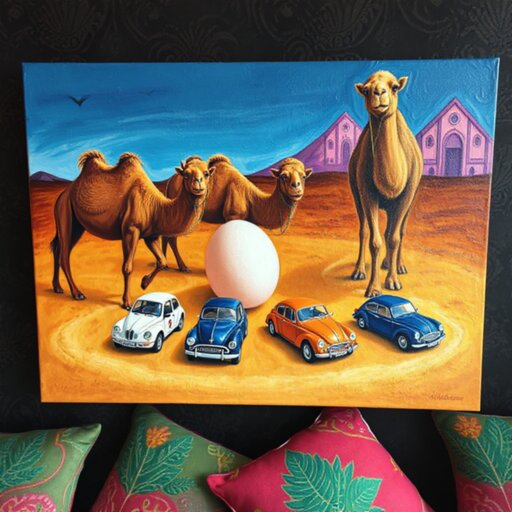} \\
\bottomrule
\end{tabularx}
}
\caption{
    \textbf{Qualitative results of quantity-aware image generation.}
    At inference-time, we guide generation using the negation of RSS~\cite{Liu:2024GroundingDINO} (Residual Sum of Squares) as the given reward, which measures the discrepancy between detected and target object counts. 
    SDE and interpolant conversion expands the search space to identify high reward samples. 
}
\label{fig:counting_methods_vp}
\vspace{-1.0\baselineskip}
\end{figure}

\vspace{-0.4\baselineskip}
\subsection{Quantity-Aware Image Generation}
\vspace{-0.3\baselineskip}
\label{sec:counting_generation}
\paragraph{Evaluation Metrics.} 
Here, the given reward is the negation of the Residual Sum of Squares (RSS) between the target counts and the detected object counts, computed using GroundingDINO~\cite{Liu:2024GroundingDINO} and SAM~\cite{Kirillov2023:SAM} (details in Appendix~\ref{sec:appendix_impl_details}). Additionally, we report object count accuracy, which evaluates whether all object quantities are correctly shown in the image. For the held-out reward, we report VQAScore measured with CLIP-FlanT5~\cite{Lin:2024CLIPFlanT5}. 
As in the previous application, we evaluate the quality of the generated images using the aesthetic score~\cite{Schuhmann:aesthetics}. 

\vspace{-0.5\baselineskip}
\paragraph{Results.} The quantitative and qualitative results are presented in Fig.~\ref{fig:counting_main} and Fig.~\ref{fig:counting_methods_vp}, respectively. 
The trend in Fig.~\ref{fig:counting_main} align with those in Sec.~\ref{sec:compositional_generation}, demonstrating that SDE conversion and interpolant conversion synergistically enhance the identification of high-reward samples. 
Notably, particle sampling methods with Linear-SDE already outperform Linear-ODE-based methods (BoN and SoP~\cite{Ma2025:SoP}), while interpolant conversion further improves accuracy, achieving a $4\sim6 \times$ improvement over the base model~\cite{BlackForestLabs:2024Flux}. 
Our \Ours{} achieves the highest accuracy, outperforming all other particle sampling methods. 
Qualitatively, Fig.~\ref{fig:counting_methods_vp} shows that SDE and interpolant conversion effectively identify high-reward samples that accurately match the specified object categories and quantities. Additional qualitative comparisons of the inference-time scaling methods are provided in Appendix~\ref{sec:appendix_quali_search_alg}. 


\vspace{-0.7\baselineskip}
\section{Conclusion and Limitation}
\vspace{-0.9\baselineskip}
\label{sec:conclusion}
We introduced a novel inference-time scaling method for flow models with three key contributions: (1) ODE-to-SDE conversion for particle sampling in flow models, (2) Linear-to-VP interpolant conversion for enhanced diversity and search efficiency, and (3) Rollover Budget Forcing (RBF) for adaptive compute allocation. We demonstrated the effectiveness of VP-SDE-based generation in applying off-the-shelf particle sampling to flow models and showed that our \Ours{} combined with VP-SDE generation outperforms previous methods. 
However, our method introduces additional inference-time overhead, which could become a bottleneck when the base model prediction is computationally intensive. Also, since the pretrained model may have been trained on uncurated datasets, our approach may produce undesirable outputs upon malicious attempts. 


\section*{Acknowledgments}
This work was supported by the NRF of Korea (RS-2023-00209723); IITP grants (RS-2022-II220594, RS-2023-00227592, RS-2024-00399817, RS-2025-25441313, RS-2025-25443318, RS-2025-02653113); and the Technology Innovation Program (RS-2025-02317326), all funded by the Korean government (MSIT and MOTIE), as well as by the DRB-KAIST SketchTheFuture Research Center.

{\small
\bibliographystyle{plain}
\bibliography{main}
}

\clearpage
\newpage
\section*{NeurIPS Paper Checklist}

The checklist is designed to encourage best practices for responsible machine learning research, addressing issues of reproducibility, transparency, research ethics, and societal impact. Do not remove the checklist: {\bf The papers not including the checklist will be desk rejected.} The checklist should follow the references and follow the (optional) supplemental material.  The checklist does NOT count towards the page
limit. 

Please read the checklist guidelines carefully for information on how to answer these questions. For each question in the checklist:
\begin{itemize}
    \item You should answer \answerYes{}, \answerNo{}, or \answerNA{}.
    \item \answerNA{} means either that the question is Not Applicable for that particular paper or the relevant information is Not Available.
    \item Please provide a short (1–2 sentence) justification right after your answer (even for NA). 
\end{itemize}

{\bf The checklist answers are an integral part of your paper submission.} They are visible to the reviewers, area chairs, senior area chairs, and ethics reviewers. You will be asked to also include it (after eventual revisions) with the final version of your paper, and its final version will be published with the paper.

The reviewers of your paper will be asked to use the checklist as one of the factors in their evaluation. While "\answerYes{}" is generally preferable to "\answerNo{}", it is perfectly acceptable to answer "\answerNo{}" provided a proper justification is given (e.g., "error bars are not reported because it would be too computationally expensive" or "we were unable to find the license for the dataset we used"). In general, answering "\answerNo{}" or "\answerNA{}" is not grounds for rejection. While the questions are phrased in a binary way, we acknowledge that the true answer is often more nuanced, so please just use your best judgment and write a justification to elaborate. All supporting evidence can appear either in the main paper or the supplemental material, provided in appendix. If you answer \answerYes{} to a question, in the justification please point to the section(s) where related material for the question can be found.

IMPORTANT, please:
\begin{itemize}
    \item {\bf Delete this instruction block, but keep the section heading ``NeurIPS Paper Checklist"},
    \item  {\bf Keep the checklist subsection headings, questions/answers and guidelines below.}
    \item {\bf Do not modify the questions and only use the provided macros for your answers}.
\end{itemize}


\begin{enumerate}

\item {\bf Claims}
    \item[] Question: Do the main claims made in the abstract and introduction accurately reflect the paper's contributions and scope?
    \item[] Answer: \answerYes{} 
    \item[] Justification: Our analysis and experiments support the claims in the abstract and introduction. 
    \item[] Guidelines:
    \begin{itemize}
        \item The answer NA means that the abstract and introduction do not include the claims made in the paper.
        \item The abstract and/or introduction should clearly state the claims made, including the contributions made in the paper and important assumptions and limitations. A No or NA answer to this question will not be perceived well by the reviewers. 
        \item The claims made should match theoretical and experimental results, and reflect how much the results can be expected to generalize to other settings. 
        \item It is fine to include aspirational goals as motivation as long as it is clear that these goals are not attained by the paper. 
    \end{itemize}

\item {\bf Limitations}
    \item[] Question: Does the paper discuss the limitations of the work performed by the authors?
    \item[] Answer: \answerYes{} 
    \item[] Justification: We present limitations in the last section. 
    \item[] Guidelines:
    \begin{itemize}
        \item The answer NA means that the paper has no limitation while the answer No means that the paper has limitations, but those are not discussed in the paper. 
        \item The authors are encouraged to create a separate "Limitations" section in their paper.
        \item The paper should point out any strong assumptions and how robust the results are to violations of these assumptions (e.g., independence assumptions, noiseless settings, model well-specification, asymptotic approximations only holding locally). The authors should reflect on how these assumptions might be violated in practice and what the implications would be.
        \item The authors should reflect on the scope of the claims made, e.g., if the approach was only tested on a few datasets or with a few runs. In general, empirical results often depend on implicit assumptions, which should be articulated.
        \item The authors should reflect on the factors that influence the performance of the approach. For example, a facial recognition algorithm may perform poorly when image resolution is low or images are taken in low lighting. Or a speech-to-text system might not be used reliably to provide closed captions for online lectures because it fails to handle technical jargon.
        \item The authors should discuss the computational efficiency of the proposed algorithms and how they scale with dataset size.
        \item If applicable, the authors should discuss possible limitations of their approach to address problems of privacy and fairness.
        \item While the authors might fear that complete honesty about limitations might be used by reviewers as grounds for rejection, a worse outcome might be that reviewers discover limitations that aren't acknowledged in the paper. The authors should use their best judgment and recognize that individual actions in favor of transparency play an important role in developing norms that preserve the integrity of the community. Reviewers will be specifically instructed to not penalize honesty concerning limitations.
    \end{itemize}

\item {\bf Theory assumptions and proofs}
    \item[] Question: For each theoretical result, does the paper provide the full set of assumptions and a complete (and correct) proof?
    \item[] Answer: \answerYes{} 
    \item[] Justification: We present proofs in Appendix. 
    \item[] Guidelines:
    \begin{itemize}
        \item The answer NA means that the paper does not include theoretical results. 
        \item All the theorems, formulas, and proofs in the paper should be numbered and cross-referenced.
        \item All assumptions should be clearly stated or referenced in the statement of any theorems.
        \item The proofs can either appear in the main paper or the supplemental material, but if they appear in the supplemental material, the authors are encouraged to provide a short proof sketch to provide intuition. 
        \item Inversely, any informal proof provided in the core of the paper should be complemented by formal proofs provided in appendix or supplemental material.
        \item Theorems and Lemmas that the proof relies upon should be properly referenced. 
    \end{itemize}

    \item {\bf Experimental result reproducibility}
    \item[] Question: Does the paper fully disclose all the information needed to reproduce the main experimental results of the paper to the extent that it affects the main claims and/or conclusions of the paper (regardless of whether the code and data are provided or not)?
    \item[] Answer: \answerYes{} 
    \item[] Justification: We provide implementation details and experiment setups. 
    \item[] Guidelines:
    \begin{itemize}
        \item The answer NA means that the paper does not include experiments.
        \item If the paper includes experiments, a No answer to this question will not be perceived well by the reviewers: Making the paper reproducible is important, regardless of whether the code and data are provided or not.
        \item If the contribution is a dataset and/or model, the authors should describe the steps taken to make their results reproducible or verifiable. 
        \item Depending on the contribution, reproducibility can be accomplished in various ways. For example, if the contribution is a novel architecture, describing the architecture fully might suffice, or if the contribution is a specific model and empirical evaluation, it may be necessary to either make it possible for others to replicate the model with the same dataset, or provide access to the model. In general. releasing code and data is often one good way to accomplish this, but reproducibility can also be provided via detailed instructions for how to replicate the results, access to a hosted model (e.g., in the case of a large language model), releasing of a model checkpoint, or other means that are appropriate to the research performed.
        \item While NeurIPS does not require releasing code, the conference does require all submissions to provide some reasonable avenue for reproducibility, which may depend on the nature of the contribution. For example
        \begin{enumerate}
            \item If the contribution is primarily a new algorithm, the paper should make it clear how to reproduce that algorithm.
            \item If the contribution is primarily a new model architecture, the paper should describe the architecture clearly and fully.
            \item If the contribution is a new model (e.g., a large language model), then there should either be a way to access this model for reproducing the results or a way to reproduce the model (e.g., with an open-source dataset or instructions for how to construct the dataset).
            \item We recognize that reproducibility may be tricky in some cases, in which case authors are welcome to describe the particular way they provide for reproducibility. In the case of closed-source models, it may be that access to the model is limited in some way (e.g., to registered users), but it should be possible for other researchers to have some path to reproducing or verifying the results.
        \end{enumerate}
    \end{itemize}

\item {\bf Open access to data and code}
    \item[] Question: Does the paper provide open access to the data and code, with sufficient instructions to faithfully reproduce the main experimental results, as described in supplemental material?
    \item[] Answer: \answerYes{} 
    \item[] Justification: Code is publicly released.  
    \item[] Guidelines:
    \begin{itemize}
        \item The answer NA means that paper does not include experiments requiring code.
        \item Please see the NeurIPS code and data submission guidelines (\url{https://nips.cc/public/guides/CodeSubmissionPolicy}) for more details.
        \item While we encourage the release of code and data, we understand that this might not be possible, so “No” is an acceptable answer. Papers cannot be rejected simply for not including code, unless this is central to the contribution (e.g., for a new open-source benchmark).
        \item The instructions should contain the exact command and environment needed to run to reproduce the results. See the NeurIPS code and data submission guidelines (\url{https://nips.cc/public/guides/CodeSubmissionPolicy}) for more details.
        \item The authors should provide instructions on data access and preparation, including how to access the raw data, preprocessed data, intermediate data, and generated data, etc.
        \item The authors should provide scripts to reproduce all experimental results for the new proposed method and baselines. If only a subset of experiments are reproducible, they should state which ones are omitted from the script and why.
        \item At submission time, to preserve anonymity, the authors should release anonymized versions (if applicable).
        \item Providing as much information as possible in supplemental material (appended to the paper) is recommended, but including URLs to data and code is permitted.
    \end{itemize}

\item {\bf Experimental setting/details}
    \item[] Question: Does the paper specify all the training and test details (e.g., data splits, hyperparameters, how they were chosen, type of optimizer, etc.) necessary to understand the results?
    \item[] Answer: \answerYes{} 
    \item[] Justification: We specify hyperparameters in the paper. 
    \item[] Guidelines:
    \begin{itemize}
        \item The answer NA means that the paper does not include experiments.
        \item The experimental setting should be presented in the core of the paper to a level of detail that is necessary to appreciate the results and make sense of them.
        \item The full details can be provided either with the code, in appendix, or as supplemental material.
    \end{itemize}

\item {\bf Experiment statistical significance}
    \item[] Question: Does the paper report error bars suitably and correctly defined or other appropriate information about the statistical significance of the experiments?
    \item[] Answer: \answerNo{} 
    \item[] Justification: Due to computational constraints, we were unable to include error bars. 
    \item[] Guidelines:
    \begin{itemize}
        \item The answer NA means that the paper does not include experiments.
        \item The authors should answer "Yes" if the results are accompanied by error bars, confidence intervals, or statistical significance tests, at least for the experiments that support the main claims of the paper.
        \item The factors of variability that the error bars are capturing should be clearly stated (for example, train/test split, initialization, random drawing of some parameter, or overall run with given experimental conditions).
        \item The method for calculating the error bars should be explained (closed form formula, call to a library function, bootstrap, etc.)
        \item The assumptions made should be given (e.g., Normally distributed errors).
        \item It should be clear whether the error bar is the standard deviation or the standard error of the mean.
        \item It is OK to report 1-sigma error bars, but one should state it. The authors should preferably report a 2-sigma error bar than state that they have a 96\% CI, if the hypothesis of Normality of errors is not verified.
        \item For asymmetric distributions, the authors should be careful not to show in tables or figures symmetric error bars that would yield results that are out of range (e.g. negative error rates).
        \item If error bars are reported in tables or plots, The authors should explain in the text how they were calculated and reference the corresponding figures or tables in the text.
    \end{itemize}

\item {\bf Experiments compute resources}
    \item[] Question: For each experiment, does the paper provide sufficient information on the computer resources (type of compute workers, memory, time of execution) needed to reproduce the experiments?
    \item[] Answer: \answerYes{} 
    \item[] Justification: We specify memory usage and inference time in the paper.
    \item[] Guidelines:
    \begin{itemize}
        \item The answer NA means that the paper does not include experiments.
        \item The paper should indicate the type of compute workers CPU or GPU, internal cluster, or cloud provider, including relevant memory and storage.
        \item The paper should provide the amount of compute required for each of the individual experimental runs as well as estimate the total compute. 
        \item The paper should disclose whether the full research project required more compute than the experiments reported in the paper (e.g., preliminary or failed experiments that didn't make it into the paper). 
    \end{itemize}
    
\item {\bf Code of ethics}
    \item[] Question: Does the research conducted in the paper conform, in every respect, with the NeurIPS Code of Ethics \url{https://neurips.cc/public/EthicsGuidelines}?
    \item[] Answer: \answerYes{} 
    \item[] Justification: This work conforms with the NeurIPS Code of Ethics. 
    \item[] Guidelines:
    \begin{itemize}
        \item The answer NA means that the authors have not reviewed the NeurIPS Code of Ethics.
        \item If the authors answer No, they should explain the special circumstances that require a deviation from the Code of Ethics.
        \item The authors should make sure to preserve anonymity (e.g., if there is a special consideration due to laws or regulations in their jurisdiction).
    \end{itemize}

\item {\bf Broader impacts}
    \item[] Question: Does the paper discuss both potential positive societal impacts and negative societal impacts of the work performed?
    \item[] Answer: \answerYes{} 
    \item[] Justification: We discuss societal impacts in the paper. 
    \item[] Guidelines:
    \begin{itemize}
        \item The answer NA means that there is no societal impact of the work performed.
        \item If the authors answer NA or No, they should explain why their work has no societal impact or why the paper does not address societal impact.
        \item Examples of negative societal impacts include potential malicious or unintended uses (e.g., disinformation, generating fake profiles, surveillance), fairness considerations (e.g., deployment of technologies that could make decisions that unfairly impact specific groups), privacy considerations, and security considerations.
        \item The conference expects that many papers will be foundational research and not tied to particular applications, let alone deployments. However, if there is a direct path to any negative applications, the authors should point it out. For example, it is legitimate to point out that an improvement in the quality of generative models could be used to generate deepfakes for disinformation. On the other hand, it is not needed to point out that a generic algorithm for optimizing neural networks could enable people to train models that generate Deepfakes faster.
        \item The authors should consider possible harms that could arise when the technology is being used as intended and functioning correctly, harms that could arise when the technology is being used as intended but gives incorrect results, and harms following from (intentional or unintentional) misuse of the technology.
        \item If there are negative societal impacts, the authors could also discuss possible mitigation strategies (e.g., gated release of models, providing defenses in addition to attacks, mechanisms for monitoring misuse, mechanisms to monitor how a system learns from feedback over time, improving the efficiency and accessibility of ML).
    \end{itemize}
    
\item {\bf Safeguards}
    \item[] Question: Does the paper describe safeguards that have been put in place for responsible release of data or models that have a high risk for misuse (e.g., pretrained language models, image generators, or scraped datasets)?
    \item[] Answer: \answerYes{} 
    \item[] Justification: We do not provide or use prompts that may generate harmful or disruptive content. 
    \item[] Guidelines:
    \begin{itemize}
        \item The answer NA means that the paper poses no such risks.
        \item Released models that have a high risk for misuse or dual-use should be released with necessary safeguards to allow for controlled use of the model, for example by requiring that users adhere to usage guidelines or restrictions to access the model or implementing safety filters. 
        \item Datasets that have been scraped from the Internet could pose safety risks. The authors should describe how they avoided releasing unsafe images.
        \item We recognize that providing effective safeguards is challenging, and many papers do not require this, but we encourage authors to take this into account and make a best faith effort.
    \end{itemize}

\item {\bf Licenses for existing assets}
    \item[] Question: Are the creators or original owners of assets (e.g., code, data, models), used in the paper, properly credited and are the license and terms of use explicitly mentioned and properly respected?
    \item[] Answer: \answerYes{} 
    \item[] Justification: We have cited the relevant works.
    \item[] Guidelines:
    \begin{itemize}
        \item The answer NA means that the paper does not use existing assets.
        \item The authors should cite the original paper that produced the code package or dataset.
        \item The authors should state which version of the asset is used and, if possible, include a URL.
        \item The name of the license (e.g., CC-BY 4.0) should be included for each asset.
        \item For scraped data from a particular source (e.g., website), the copyright and terms of service of that source should be provided.
        \item If assets are released, the license, copyright information, and terms of use in the package should be provided. For popular datasets, \url{paperswithcode.com/datasets} has curated licenses for some datasets. Their licensing guide can help determine the license of a dataset.
        \item For existing datasets that are re-packaged, both the original license and the license of the derived asset (if it has changed) should be provided.
        \item If this information is not available online, the authors are encouraged to reach out to the asset's creators.
    \end{itemize}

\item {\bf New assets}
    \item[] Question: Are new assets introduced in the paper well documented and is the documentation provided alongside the assets?
    \item[] Answer: \answerNA{} 
    \item[] Justification: We do not release new assets.
    \item[] Guidelines:
    \begin{itemize}
        \item The answer NA means that the paper does not release new assets.
        \item Researchers should communicate the details of the dataset/code/model as part of their submissions via structured templates. This includes details about training, license, limitations, etc. 
        \item The paper should discuss whether and how consent was obtained from people whose asset is used.
        \item At submission time, remember to anonymize your assets (if applicable). You can either create an anonymized URL or include an anonymized zip file.
    \end{itemize}

\item {\bf Crowdsourcing and research with human subjects}
    \item[] Question: For crowdsourcing experiments and research with human subjects, does the paper include the full text of instructions given to participants and screenshots, if applicable, as well as details about compensation (if any)? 
    \item[] Answer: \answerNA{} 
    \item[] Justification: Does not involve research with human subjects. 
    \item[] Guidelines:
    \begin{itemize}
        \item The answer NA means that the paper does not involve crowdsourcing nor research with human subjects.
        \item Including this information in the supplemental material is fine, but if the main contribution of the paper involves human subjects, then as much detail as possible should be included in the main paper. 
        \item According to the NeurIPS Code of Ethics, workers involved in data collection, curation, or other labor should be paid at least the minimum wage in the country of the data collector. 
    \end{itemize}

\item {\bf Institutional review board (IRB) approvals or equivalent for research with human subjects}
    \item[] Question: Does the paper describe potential risks incurred by study participants, whether such risks were disclosed to the subjects, and whether Institutional Review Board (IRB) approvals (or an equivalent approval/review based on the requirements of your country or institution) were obtained?
    \item[] Answer: \answerNA{} 
    \item[] Justification: Does not involve research with human subjects. 
    \item[] Guidelines:
    \begin{itemize}
        \item The answer NA means that the paper does not involve crowdsourcing nor research with human subjects.
        \item Depending on the country in which research is conducted, IRB approval (or equivalent) may be required for any human subjects research. If you obtained IRB approval, you should clearly state this in the paper. 
        \item We recognize that the procedures for this may vary significantly between institutions and locations, and we expect authors to adhere to the NeurIPS Code of Ethics and the guidelines for their institution. 
        \item For initial submissions, do not include any information that would break anonymity (if applicable), such as the institution conducting the review.
    \end{itemize}

\item {\bf Declaration of LLM usage}
    \item[] Question: Does the paper describe the usage of LLMs if it is an important, original, or non-standard component of the core methods in this research? Note that if the LLM is used only for writing, editing, or formatting purposes and does not impact the core methodology, scientific rigorousness, or originality of the research, declaration is not required.
    \item[] Answer: \answerNA{} 
    \item[] Justification: Core method development of our work did not involve usage of LLMs. 
    \item[] Guidelines:
    \begin{itemize}
        \item The answer NA means that the core method development in this research does not involve LLMs as any important, original, or non-standard components.
        \item Please refer to our LLM policy (\url{https://neurips.cc/Conferences/2025/LLM}) for what should or should not be described.
    \end{itemize}

\end{enumerate}


\ifpaper
\else
    \clearpage
    \newpage
    \onecolumn
    \appendix
    \setcounter{section}{0}
    \def\thesection{\Alph{section}}
    \section*{Appendix}

  



\ifpaper
\else


\section{Proofs}
\label{sec:appendix_policy_derivations}
\subsection{Derivation of the Target Distribution}
\label{sec:appendix_optimal_marginal_target}
From Eq.~\ref{eq:reward_max_obj}, we obtain the target distribution $p^{*}_0$, which maximizes the reward while maintaining proximity to the distribution of the pretrained model $p_0$:
\begin{align}
    \nonumber
    p^{*}_0(\B{x}_0) &= \argmax_{q} \  \mathbb{E}_{\B{x}_0 \sim q} \left[ r(\B{x}_0) \right] -\beta \mathcal{D}_{\text{KL}} \left[ q \| p_0 \right], \\
    \nonumber
    &= \argmax_{q} \mathbb{E}_{\B{x}_0 \sim q} \left[r(\B{x}_0) - \beta \log \frac{q(\B{x}_0)}{p_0(\B{x}_0)} \right] \\
    \nonumber
    &= \argmin_{q} \mathbb{E}_{\B{x}_0 \sim q} \left[\log \frac{q (\B{x}_0)}{p_0(\B{x}_0)} - \frac{1}{\beta} r(\B{x}_0) \right] \\
    \nonumber
    &= \argmin_{q} \int q(\B{x}_0) \log \frac{q(\B{x}_0)}{p_0 (\B{x}_0)} \mathrm{d} \B{x}_0 -\frac{1}{\beta} \int q(\B{x}_0) r(\B{x}_0) \mathrm{d} \B{x}_0.
\end{align}

This can be solved via calculus of variation where the functional $\mathcal{J}$ is given as follows:
\begin{align}
    \nonumber
    \mathcal{J} \left[ q(\B{x}_0) \right] \coloneqq \int q(\B{x}_0) \left( \log \frac{q(\B{x}_0)}{p_0 (\B{x}_0)}-\frac{1}{\beta}r(\B{x}_0) \right)\mathrm{d} \B{x}_0.
\end{align}

Substituting $\tilde{q}(\B{x}_0, \epsilon) \coloneqq q(\B{x}_0) + \epsilon \eta(\B{x}_0)$ gives:
\begin{align}
    \nonumber
    \mathcal{J} \left[ \tilde{q}(\B{x}_0, \epsilon) \right] = \int \tilde{q}(\B{x}_0, \epsilon) \left( \log \frac{\tilde{q}(\B{x}_0, \epsilon)}{p_0 (\B{x}_0)}-\frac{1}{\beta} r(\B{x}_0) \right)\mathrm{d} \B{x}_0, 
\end{align}
where $\eta(\B{x}_0)$ is an arbitrary smooth function, and $\epsilon$ is a scalar parameter. 


Introducing a Lagrange multiplier $\lambda$ to constraint $\int q(\B{x}_0) \mathrm{d} \B{x}_0 = 1$ gives:
\begin{align}
    \nonumber
    \mathcal{J} \left[ \tilde{q}(\B{x}_0, \epsilon) \right] &= \int \tilde{q}(\B{x}_0, \epsilon) \left( \log \frac{\tilde{q}(\B{x}_0, \epsilon)}{p_0 (\B{x}_0)}-\frac{1}{\beta}r(\B{x}_0) \right) +\lambda \tilde{q}(\B{x}_0, \epsilon) \mathrm{d} \B{x}_0\\
    \nonumber
    & :=\int f\{ \tilde{q};\B{x}_0\} \mathrm{d} \B{x}_0
\end{align}
Then the problem boils down to finding a function $\tilde{q}(\B{x}_0, \epsilon)$ satisfying:
\begin{align}
    \nonumber
    \left. \frac{\partial \mathcal{J}}{\partial \epsilon} \right|_{\epsilon=0} = 0
\end{align}

This can be solved using the Euler-Lagrange equation:
\begin{align}
    \nonumber
    \frac{\partial f}{\partial q}-\frac{\mathrm{d}}{\mathrm{d} \B{x}_0} \frac{\partial f}{\partial q'}=0,
\end{align}
where $q'$ is a derivative of $q$ with respect to $\B{x}_0$ and tilde notation is dropped since the condition is to be satisfied at $\epsilon=0$.

Note that $q'$ does not appear in $f$, so the Euler-Lagrange equation simplifies to:
\begin{align}
    \nonumber 
    \frac{\partial f}{\partial q}&=\frac{\partial}{\partial q}\left( q(\B{x}_0) \left( \log \frac{q(\B{x}_0)}{p_0 (\B{x}_0)}-\frac{1}{\beta} r(\B{x}_0) \right) +\lambda q(\B{x}_0) \right)=0\\
    \label{eq:euler_lagrange}
    &=\log \frac{q(\B{x}_0)}{p_0 (\B{x}_0)}-\frac{1}{\beta} r(\B{x}_0)+1+\lambda=0.
\end{align}

Solving Eq.~\ref{eq:euler_lagrange} gives the target distribution $p_0^{*}$, which minimizes the objective function in Eq.~\ref{eq:reward_max_obj}:
\begin{align}
    \label{eq:target_dist_unnormalized}
    p^{*}_0(\B{x}_0)=p_0 (\B{x}_0) \exp  \left(\frac{r(\B{x}_0)}{\beta}-1-\lambda \right)
\end{align}

Lastly, the Lagrangian multiplier $\lambda$ is obtained from the normalization constraint, $\exp (\lambda)=\int p_0 (\B{x}_0) \exp \left(\frac{r(\B{x}_0)}{\beta}-1\right) \mathrm{d} \B{x}_0$. Plugging this into Eq.~\ref{eq:target_dist_unnormalized} gives the target distribution presented in Eq.~\ref{eq:target_distribution}:
\begin{align}
    \label{eq:appendix_target_distribution}
    p^{*}_0(\B{x}_0) = \frac{p_0(\B{x}_0) \exp \left( \frac{r(\B{x}_0)}{\beta} \right)}{\int p_0 (\B{x}_0) \exp \left( \frac{r(\B{x}_0)}{\beta} \right) \mathrm{d} \B{x}_0 },
\end{align}

\subsection{Derivation of the Optimal Policy}
\label{sec:appendix_optimal_soft_policy}
Here, we provide the derivations of the optimal policy given in Eq.~\ref{eq:optimal_policy} for completeness, which is proposed in previous works~\cite{Uehara:2024Finetuning, Uehara:2024Bridging}. 

To sample from the target distribution defined in Eq.~\ref{eq:appendix_target_distribution}, previous studies utilize an optimal policy $p^{*}_\theta(\B{x}_{t - \Delta t} | \B{x}_{t})$. 
The optimal value function $v(\B{x}_{t})$ is defined as the expected future reward at current timestep $t$:
\begin{align}
    \label{eq:appendix_value_func}
    v(\B{x}_t) = \beta \log \mathbb{E}_{\B{x}_0 \sim p_{\theta} (\B{x}_0 | \B{x}_t)} \left[ \exp \left(  \frac{r(\B{x}_0)}{\beta} \right) \right]
\end{align}

The optimal policy is the policy that maximizes the objective function:
\begin{align}
    \nonumber
    p_{\theta}^{*}(\B{x}_{t-\Delta t} | \B{x}_{t}) &= \argmax_{q(\cdot|\B{x}_t)} \mathbb{E}_{\B{x}_{t-\Delta t}\sim q(\cdot|\B{x}_t)} \left[ v(\B{x}_{t-\Delta t}) \right] -\beta \mathcal{D}_{\text{KL}} \left[ q(\cdot | \B{x}_{t}) \| p_\theta(\cdot | \B{x}_{t}) \right] \\
    \label{eq:soft_optimal_policy}
    &= \frac{p_\theta(\B{x}_{t-\Delta t} | \B{x}_{t}) \exp \left( \frac{1}{\beta} v(\B{x}_{t-\Delta t}) \right)}{\int p_\theta (\B{x}_{t-\Delta t} | \B{x}_t) \exp \left( \frac{1}{\beta} v(\B{x}_{t-\Delta t}) \right) d\B{x}_{t-\Delta t} }\\
    \label{eq:a.2.optmal_policy}
    &=\frac{p_\theta(\B{x}_{t-\Delta t} | \B{x}_{t}) \exp \left( \frac{1}{\beta} v(\B{x}_{t-\Delta t}) \right)}{\exp \left( \frac{1}{\beta} v(\B{x}_{t}) \right)}
\end{align}
where the last equality follows from the soft-Bellman equations~\cite{Uehara:2024Bridging}.  
For completeness, we present the theorem. 
\begin{theorem}
    \textnormal{(Theorem 1 of Uehara~\etal~\cite{Uehara:2024Bridging})}. The induced distribution of the optimal policy in Eq.~\ref{eq:soft_optimal_policy} is the target distribution in Eq.~\ref{eq:appendix_target_distribution}.
    \begin{align}
        \nonumber
        p^{*}_0(\B{x}_0) &= \int \left\{ p_1(\B{x}_1) \prod_{s=T}^{1} p_{\theta}^{*}(\B{x}_{\frac{s}{T} - \frac{1}{T}} | \B{x}_{\frac{s}{T}}) \right \} d\B{x}_{\frac{1}{T}:1}.
    \end{align}
\end{theorem}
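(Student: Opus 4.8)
The plan is to prove the claim by induction over the generative timesteps $t_s = s/T$, running from the noise end $t_T = 1$ down to the data end $t_0 = 0$, showing that iterating the optimal policy reproduces, at every step, the reward-tilted reference marginal. Writing $p_t$ for the reference marginal at time $t$, I would carry the inductive hypothesis that the marginal $q_{t_s}$ of the optimal process satisfies $q_{t_s}(\B{x}_{t_s}) = p_{t_s}(\B{x}_{t_s}) \exp(v(\B{x}_{t_s})/\beta) / Z$. Establishing this at $s=0$ is exactly the desired conclusion, since the terminal boundary value is $v(\B{x}_0) = r(\B{x}_0)$ (the posterior in Eq.~\ref{eq:appendix_value_func} collapses to a point mass at $t=0$), which turns the tilted marginal into the target distribution of Eq.~\ref{eq:appendix_target_distribution}.

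For the inductive step I would push $q_{t_s}$ through one application of the optimal kernel, $q_{t_{s-1}}(\B{x}_{t_{s-1}}) = \int q_{t_s}(\B{x}_{t_s})\, p^{*}_{\theta}(\B{x}_{t_{s-1}} \mid \B{x}_{t_s})\, \mathrm{d}\B{x}_{t_s}$. Substituting the closed form from Eq.~\ref{eq:a.2.optmal_policy}, the factor $\exp(v(\B{x}_{t_s})/\beta)$ in the denominator of the kernel cancels precisely against the tilt carried by $q_{t_s}$, while the numerator contributes a fresh tilt $\exp(v(\B{x}_{t_{s-1}})/\beta)$ that pulls outside the integral. What remains is $\int p_{t_s}(\B{x}_{t_s}) p_\theta(\B{x}_{t_{s-1}} \mid \B{x}_{t_s})\, \mathrm{d}\B{x}_{t_s} = p_{t_{s-1}}(\B{x}_{t_{s-1}})$, because the reverse kernel of the reference process is constructed to preserve the reference marginals. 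This reproduces the hypothesis at $t_{s-1}$ and closes the induction. An equivalent one-shot route is to expand the entire product $\prod_{s=T}^{1} p^{*}_{\theta}$, observe that the value factors telescope to $\exp(v(\B{x}_0)/\beta)/\exp(v(\B{x}_1)/\beta) = \exp(r(\B{x}_0)/\beta)/\exp(v(\B{x}_1)/\beta)$, and then marginalize the intermediate latents by Chapman--Kolmogorov.

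The cancellation in the step depends on the soft-Bellman consistency that the denominator $\int p_\theta(\B{x}_{t-\Delta t} \mid \B{x}_t) \exp(v(\B{x}_{t-\Delta t})/\beta)\, \mathrm{d}\B{x}_{t-\Delta t}$ equals $\exp(v(\B{x}_t)/\beta)$; I would verify this directly from Eq.~\ref{eq:appendix_value_func} by applying Chapman--Kolmogorov to the reference posterior, which is precisely what makes the optimal kernel in Eq.~\ref{eq:soft_optimal_policy} normalized. The step I expect to be the main obstacle is the base case at the noise end $t=1$: the induction is naturally seeded with $q_1(\B{x}_1) = p_1(\B{x}_1) \exp(v(\B{x}_1)/\beta)/Z$, whereas the statement starts the process from the unmodified source $p_1(\B{x}_1)$. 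Reconciling the two requires arguing that at maximal noise the reference posterior $p_\theta(\B{x}_0 \mid \B{x}_1)$ is, to the relevant order, independent of $\B{x}_1$, so that $v(\B{x}_1)$ is constant and equal to $\beta \log Z$, leaving $p_1$ effectively unchanged; pinning down this terminal normalization cleanly, rather than the routine telescoping, is where the real care is needed.
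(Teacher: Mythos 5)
Your proposal is correct, but note that the paper itself does not prove this statement at all: it quotes it as Theorem~1 of Uehara~\etal~\cite{Uehara:2024Bridging} and defers entirely to that reference, so your telescoping/induction argument is supplying the proof the paper omits (and it is essentially the argument used in the cited work). Your inductive step is sound: the tilt $\exp(v(\B{x}_{t_s})/\beta)$ carried by the hypothesis cancels against the denominator of the kernel in Eq.~\ref{eq:a.2.optmal_policy}, and the remaining integral collapses because the reference reverse kernel preserves the reference marginals; your verification of the soft-Bellman normalization via Chapman--Kolmogorov applied to Eq.~\ref{eq:appendix_value_func} is exactly the equality the paper invokes in passing from Eq.~\ref{eq:soft_optimal_policy} to Eq.~\ref{eq:a.2.optmal_policy}. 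The one place your write-up is more tentative than necessary is the base case, which you call the main obstacle and handle only ``to the relevant order'': in this framework it is exact, not asymptotic. The interpolant boundary conditions $\alpha_1 = 0$, $\sigma_1 = 1$ make $\B{x}_1$ pure Gaussian noise independent of $\B{x}_0$, so $p_\theta(\B{x}_0 \mid \B{x}_1) = p_0(\B{x}_0)$ identically, hence
\begin{align}
    \nonumber
    v(\B{x}_1) = \beta \log \E_{\B{x}_0 \sim p_0} \left[ \exp\left( \frac{r(\B{x}_0)}{\beta} \right) \right] = \beta \log Z
\end{align}
is a genuine constant, and the seed $q_1(\B{x}_1) = p_1(\B{x}_1)\exp(v(\B{x}_1)/\beta)/Z$ equals $p_1(\B{x}_1)$ with no approximation. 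The hedge you anticipate is only needed in discrete-time DDPM-style settings, where the terminal marginal is merely approximately the prior; here the terminal normalization is clean, and your proof closes without the extra care you reserved for it.
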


However, computing the optimal value function in Eq.~\ref{eq:appendix_value_func} is non-trivial. 
Hence, we follow the previous works~\cite{Kim:2025DAS,Li2024:SVDD} and approximate it using the posterior mean 
$\B{x}_{0|t} := \mathbb{E}_{\B{x}_0\sim p_\theta (\B{x}_0|\B{x}_t)}\left[ \B{x}_0\right]$:
\begin{align}
    \nonumber
    v(\B{x}_t) &= \beta \log \left( \int \exp (\frac{r(\B{x}_0)}{\beta}) p_\theta(\B{x}_0 | \B{x}_t) \mathrm{d} \B{x}_0 \right) \\
    \label{eq:appendix_soft_value_func}
    &\approx \beta \log (\exp (\frac{r(\B{x}_{0|t})}{\beta})) = r(\B{x}_{0|t}). 
\end{align}


\section{Choice of Diffusion Coefficient}
\label{sec:appendix_diffusion_coefficient_choice}
Ma~\etal~\cite{Ma2024:sit} have shown that the diffusion coefficient can be chosen freely within the stochastic interpolant framework~\cite{Albergo:2023Interpolant}. 
Here, we present a more comprehensive proof. 
We use $\B{w}$ interchangeably to denote the standard Wiener process for both forward and reverse time flows. 

\begin{proposition}


For a linear stochastic process $ \B{x}_t = \alpha_t \B{x}_0 + \sigma_t \B{x}_1 $ and the Probability-Flow ODE $ \mathrm{d} \B{x}_t = u_t(\B{x}_t) \mathrm{d} t $ that yields the marginal density $p_t(\B{x}_t)$, the following forward and reverse SDEs with an arbitrary diffusion coefficient $g_t \geq 0$ share the same marginal density:
\begin{align}
    \label{eq:appx_forward_sde}
    \text{Forward SDE: }\mathrm{d} \B{x}_t&=\left[u_t(\B{x}_t) + \frac{g_t^2}{2}\nabla\log p_t(\B{x}_t)\right] \mathrm{d} t+g_t \mathrm{d} \B{w}\\
    \label{eq:appx_reverse_sde}
    \text{Reverse SDE: }\mathrm{d} \B{x}_t&=\left[u_t(\B{x}_t) - \frac{g_t^2}{2}\nabla\log p_t(\B{x}_t)\right] \mathrm{d} t+g_t \mathrm{d} \B{w}.
\end{align}

\end{proposition}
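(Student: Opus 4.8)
The plan is to prove marginal-density equivalence through the \emph{Fokker--Planck} (continuity) equation associated with each dynamics, rather than by tracking sample paths. The one fact I would take as given is that the Probability-Flow ODE $\mathrm{d}\B{x}_t = u_t(\B{x}_t)\mathrm{d}t$, \emph{by definition} of generating the marginals $p_t$, satisfies the continuity equation $\partial_t p_t = -\nabla\cdot(u_t p_t)$. The goal is then to verify that this very family $\{p_t\}$ also solves the Fokker--Planck PDE of each of the two SDEs; uniqueness of the Fokker--Planck solution given a shared endpoint distribution then forces the marginals to coincide for all $t$.

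First I would treat the forward SDE in Eq.~\ref{eq:appx_forward_sde}, whose drift is $\B{f}_t = u_t + \frac{g_t^2}{2}\nabla\log p_t$ and whose Fokker--Planck equation, in divergence form, reads $\partial_t \tilde{p}_t = -\nabla\cdot(\B{f}_t\tilde{p}_t) + \nabla\cdot(\frac{g_t^2}{2}\nabla\tilde{p}_t)$. Substituting the candidate $\tilde{p}_t = p_t$ and using the elementary identity $p_t\nabla\log p_t = \nabla p_t$, the score contribution $-\nabla\cdot(\frac{g_t^2}{2}p_t\nabla\log p_t) = -\nabla\cdot(\frac{g_t^2}{2}\nabla p_t)$ cancels exactly against the diffusion term, collapsing the right-hand side to $-\nabla\cdot(u_t p_t) = \partial_t p_t$. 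Hence $p_t$ solves the forward SDE's Fokker--Planck equation, and agreement at the common endpoint gives equality of marginals everywhere.

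Next I would handle the reverse SDE in Eq.~\ref{eq:appx_reverse_sde}, where the only genuine subtlety lies: the sign flip in front of the score is tied to the fact that this equation is integrated \emph{backward} in time. I would make the reversal explicit via the substitution $s = T - t$, which turns the reverse-time SDE into a standard forward-time SDE in $s$ with drift $-u_{T-s} + \frac{g_{T-s}^2}{2}\nabla\log q_s$, where $q_s := p_{T-s}$. Writing its Fokker--Planck equation and using $\partial_s q_s = -\partial_t p_t|_{t=T-s} = \nabla\cdot(u_{T-s}q_s)$ from the continuity equation, the same cancellation between the score term and the diffusion term leaves $\partial_s q_s = \nabla\cdot(u_{T-s}q_s)$, which matches. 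Thus $q_s = p_{T-s}$ is the marginal of the reversed process as well.

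I expect the main obstacle to be the bookkeeping of signs and the Wiener-process direction in the time-reversal step: one must check that $\mathrm{d}t = -\mathrm{d}s$ flips the drift while the diffusion coefficient $g_t$ and the Brownian increment (direction-agnostic in law) are unaffected, so that after substitution the diffusion term retains the sign needed for the cancellation. A secondary point worth remarking is that linearity of the interpolant is not actually required for marginal preservation itself; it enters only through Eq.~\ref{eq:score_velocity}, which expresses $\nabla\log p_t$ in terms of the learned velocity $u_t$ and thereby makes both SDEs implementable at inference time.
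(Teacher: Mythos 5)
Your proof is correct, and its core mechanism is the same as the paper's: match the Fokker--Planck equation of the SDE against the continuity equation $\partial_t p_t = -\nabla\cdot(u_t p_t)$ of the Probability-Flow ODE, and exploit the identity $p_t\nabla\log p_t = \nabla p_t$ so that the score term in the drift cancels the diffusion term $\frac{1}{2}g_t^2\nabla^2 p_t$. The differences are ones of direction and completeness rather than substance. For the forward SDE, the paper runs the computation in the opposite direction: it equates the two PDEs, obtains the divergence identity $\nabla\cdot\bigl(p_t(\mathbf{f}_t - u_t)\bigr) = \frac{1}{2}g_t^2\nabla^2 p_t$, and then \emph{solves} for the drift by positing $p_t(\mathbf{f}_t - u_t) = A_t\nabla p_t$, deducing $A_t = \frac{1}{2}g_t^2$; you instead \emph{verify} that the stated drift satisfies the Fokker--Planck equation. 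These are trivially equivalent, though the paper's derivation makes visible that the score-based drift is only one member of a family of SDEs satisfying the divergence identity. For the reverse SDE your treatment is actually more self-contained than the paper's: the paper simply invokes Anderson's time-reversal formula, whereas you make the reversal explicit via $s = T - t$, $q_s \coloneqq p_{T-s}$, and re-verify the Fokker--Planck equation for the resulting forward-in-$s$ process, which cleanly sidesteps any sample-path subtleties of backward It\^{o} integration since only marginals are at stake. Two minor observations: both you and the paper implicitly assume uniqueness of Fokker--Planck solutions given the endpoint distribution, so you are on equal footing there; and your closing remark that linearity of the interpolant is never used in the marginal-preservation argument is accurate --- in the paper linearity enters only in the Corollary, where the explicit velocity field $u_t(\B{x}_t)=\frac{\dot{\alpha}_t}{\alpha_t}\B{x}_t-\bigl(\sigma_t\dot{\sigma}_t-\sigma_t^2\frac{\dot{\alpha}_t}{\alpha_t}\bigr)\nabla\log p_t(\B{x}_t)$ is needed to choose $g_t$ so the score term vanishes.
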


\begin{proof}
\label{proof:prop1}
When velocity field $u_t$ generates a probability density path $p_t$, it satisfies the continuity equation:
\begin{align}
\label{eq:continuity equation}
    \frac{\partial}{\partial t}p_t(\B{x}_t)=-\nabla\cdot\left(p_t(\B{x}_t)u_t(\B{x}_t)\right).
\end{align}
Similarly, for the SDE $\mathrm{d} \B{x}_t=\mathbf{f}_t(\B{x}_t) \mathrm{d} t + g_t \mathrm{d} \B{w}$, the Fokker-Planck equation describes the time evolution of $\tilde{p}_t$:
\begin{align}
\label{eq:fokker planck}
    \frac{\partial}{\partial t}\tilde{p}_t(\B{x}_t)=-\nabla\cdot (\tilde{p}_t(\B{x}_t)\mathbf{f}_t(\B{x}_t))+\frac{1}{2}g_t^2\nabla^2\tilde{p}_t(\B{x}_t)
\end{align}
where $\nabla^2$ denotes the Laplace operator. 

To find an SDE that yields the same marginal probability density as the ODE, we equate the probability density functions in Eq.~\ref{eq:fokker planck} and Eq.~\ref{eq:continuity equation}, resulting in the following equation:
\begin{align}
    \nonumber -&\nabla\cdot (p_t(\B{x}_t)\mathbf{f}_t(\B{x}_t))+\frac{1}{2}g_t^2\nabla^2p_t(\B{x}_t)=-\nabla\cdot(p_t(\B{x}_t)u_t(\B{x}_t))\\
    \label{eq:continuity_equals_fokker_planck}
    &\nabla\cdot (p_t(\B{x}_t)(\mathbf{f}_t(\B{x}_t)-u_t(\B{x}_t)))=\frac{1}{2}g_t^2\nabla^2p_t(\B{x}_t)
\end{align}
This implies that any SDE with drift coefficient $\mathbf{f}_t(\B{x}_t)$ and diffusion coefficient $g_t$ that satisfies Eq.~\ref{eq:continuity_equals_fokker_planck} will generate $p_t$.
One particular choice is to set $p_t(\B{x}_t)(\mathbf{f}_t(\B{x}_t)-u_t(\B{x}_t))$ proportional to $\nabla p_t(\B{x}_t)$, \ie $p_t(\B{x}_t)(\mathbf{f}_t(\B{x}_t)-u_t(\B{x}_t))=A_t \nabla p_t(\B{x}_t)$. 
Then Eq.~\ref{eq:continuity_equals_fokker_planck} can be rewritten as:
\begin{align}
    \nonumber
    A_t \nabla^2 p_t(\B{x}_t)=\frac{1}{2}g_t^2\nabla^2p_t(\B{x}_t),
\end{align}
which leads to the relation $A_t=\frac{1}{2}g_t^2$. 
Similarly, the drift coefficient is given by:
\begin{align}
    \nonumber
    \mathbf{f}_t(\B{x}_t)&=u_t(\B{x}_t)+\frac{1}{2}g_t^2\frac{\nabla p_t(\B{x}_t)}{p_t(\B{x}_t)} \\
    \nonumber
    &=u_t(\B{x}_t)+\frac{1}{2}g^2_t \nabla\log p_t(\B{x}_t)
\end{align}
Thus, a family of SDEs that generate $p_t$ takes the following form:
\begin{align}
    \nonumber
    \mathrm{d} \B{x}_t = \left[u_t(\B{x}_t) + \frac{1}{2}g^2_t\nabla\log p_t(\B{x}_t) \right] \mathrm{d}t + g_t \mathrm{d} \B{w},
\end{align}
which is the forward SDE presented in Eq.~\ref{eq:appx_forward_sde}. 
Similarly, the reverse SDE in Eq.~\ref{eq:appx_reverse_sde} can be derived by applying the time reversal formula, following Anderson~\etal~\cite{Anderson:1982SDE}. 
\end{proof}

\begin{corollary}
If diffusion coefficient is chosen as $g_t=\sqrt{2\left( \sigma_t \dot{\sigma}_t-\sigma_t^2\frac{\dot{\alpha}_t} {\alpha_t} \right)}$ then the score function $\nabla\log p_t(\B{x}_t)$ inside the forward SDE vanish and it can be written as:
\begin{align}
    \label{eq:appx corollary eq}
    \mathrm{d} \B{x}_t=\frac{\dot{\alpha}_t} {\alpha_t}\B{x}_t \mathrm{d} t + \sqrt{2\left( \sigma_t \dot{\sigma}_t-\sigma_t^2\frac{\dot{\alpha}_t} {\alpha_t} \right)} \mathrm{d} \B{w}
\end{align}
\end{corollary}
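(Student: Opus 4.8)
The plan is to start from the forward SDE of Proposition~1 (Eq.~\ref{eq:appx_forward_sde}) and verify that the prescribed $g_t$ is exactly the value that annihilates the score contribution in its drift, leaving only a linear term in $\B{x}_t$. The one nontrivial ingredient is the score--velocity identity in Eq.~\ref{eq:score_velocity}, which I would first invert to express the velocity as an explicit affine function of $\B{x}_t$ and the score $\nabla \log p_t(\B{x}_t)$.

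First I would multiply Eq.~\ref{eq:score_velocity} through by $\sigma_t(\dot{\alpha}_t \sigma_t - \alpha_t \dot{\sigma}_t)$ and solve for the velocity, which yields
\begin{align}
    \nonumber
    u_t(\B{x}_t) = \frac{\dot{\alpha}_t}{\alpha_t}\B{x}_t - \left(\sigma_t \dot{\sigma}_t - \sigma_t^2 \frac{\dot{\alpha}_t}{\alpha_t}\right)\nabla \log p_t(\B{x}_t).
\end{align}
Substituting this expression into the drift $u_t(\B{x}_t) + \tfrac{1}{2}g_t^2 \nabla \log p_t(\B{x}_t)$ of the forward SDE collects the score into a single term with coefficient $\tfrac{1}{2}g_t^2 - \left(\sigma_t \dot{\sigma}_t - \sigma_t^2 \dot{\alpha}_t/\alpha_t\right)$, while the linear term $\tfrac{\dot{\alpha}_t}{\alpha_t}\B{x}_t$ is untouched.

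Next I would set this coefficient to zero, which forces $g_t^2 = 2\left(\sigma_t \dot{\sigma}_t - \sigma_t^2 \dot{\alpha}_t/\alpha_t\right)$ and hence the stated $g_t$; the drift then reduces to $\tfrac{\dot{\alpha}_t}{\alpha_t}\B{x}_t$, giving the claimed SDE. A small point worth checking is that $g_t$ is real: since the interpolant assumptions give $\sigma_t \geq 0$, $\dot{\sigma}_t > 0$, $\alpha_t > 0$, and $\dot{\alpha}_t < 0$, both $\sigma_t \dot{\sigma}_t$ and $-\sigma_t^2 \dot{\alpha}_t/\alpha_t$ are nonnegative, so the radicand is nonnegative and the square root is well defined.

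There is essentially no serious obstacle here, since the argument is purely algebraic once Eq.~\ref{eq:score_velocity} is available; the only place to be careful is the sign bookkeeping when inverting the score--velocity relation, as a dropped sign there would produce the wrong diffusion coefficient (for instance the cancellation appropriate to the reverse SDE in Eq.~\ref{eq:appx_reverse_sde} rather than the forward one). I would therefore double-check the inversion by back-substituting the resulting $u_t$ into Eq.~\ref{eq:score_velocity} to confirm consistency before concluding.
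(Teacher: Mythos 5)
Your proposal is correct and takes essentially the same route as the paper: the paper's proof simply quotes the velocity field $u_t(\B{x}_t)=\frac{\dot{\alpha}_t}{\alpha_t}\B{x}_t-\left(\sigma_t \dot{\sigma}_t-\sigma_t^2\frac{\dot{\alpha}_t}{\alpha_t}\right)\nabla\log p_t(\B{x}_t)$ (which is exactly what you obtain by inverting Eq.~\ref{eq:score_velocity}) and then substitutes it into the forward SDE of Eq.~\ref{eq:appx_forward_sde} so that the prescribed $g_t$ cancels the score term. Your extra checks (back-substitution for sign consistency and nonnegativity of the radicand under the interpolant assumptions) are sound additions but do not change the argument.
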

\begin{proof}
    Velocity field $u_t(\B{x}_t)$ for linear stochastic process $\B{X}_t=\alpha_t \B{X}_0+\sigma_t\B{X}_1$ is given as:
    \begin{align}
        u_t(\B{x}_t)=\frac{\dot{\alpha}_t} {\alpha_t}\B{x}_t-\left( \sigma_t \dot{\sigma}_t-\sigma_t^2\frac{\dot{\alpha}_t} {\alpha_t} \right)\nabla\log p_t(\B{x}_t)
    \end{align}
    Plugging this equation into forward SDE Eq.~\ref{eq:appx_forward_sde}, we can immediately see that when $g_t=\sqrt{2\left( \sigma_t \dot{\sigma}_t-\sigma_t^2\frac{\dot{\alpha}_t} {\alpha_t} \right)}$ the score function term vanishes and the remaining terms constitute Eq.~\ref{eq:appx corollary eq}.
\end{proof}

\section{Adaptive Time Scheduling and Rollover Strategy}
\label{sec:appendix_adaptive_time_and_rollover}
In this section, we provide details of adaptive time scheduling and NFE analysis result which inspired rollover strategy. 
\paragraph{Adaptive Time Scheduling.}
As discussed in Sec.~\ref{sec:scheduler_conversion}, to maximize the exploration space in VP-SDE sampling, we design the time scheduler to take smaller steps during the initial phase—when variance is high—and gradually increase the step size in later stages. Specifically, we define the time scheduler as $t_\text{new} = \sqrt{1-(1-t)^2}$. 
While this approach can be problematic when the number of steps is too low—resulting in excessively large discretization steps in later iterations—we find that using a reasonable number of steps (\eg $10$) works well in practice, benefiting from the few-step generation capability of flow models. This setup effectively balances a broad exploration space with fast inference time, highlighting one of the key advantages of flow models over diffusion models. 

\paragraph{NFE Analysis.}
As discussed in Sec.~\ref{sec:roll_over}, we analyze the number of function evaluations (NFEs) required to obtain a sample with a higher reward than the current one. In Fig.~\ref{fig:temporal_axis_scaling}, we visualize the variance band of the required NFEs across timesteps, with the blue-dotted line representing the uniform allocation used in previous particle sampling methods~\cite{Li2024:SVDD, Singh:2025CoDE}. Notably, uniform compute allocation may constrain exploration and fail to identify high-reward samples, as evidenced by crossings within the variance band. This observation motivates the use of a rollover strategy to optimize compute utilization efficiently. As demonstrated in Sec.~\ref{sec:applications}, our experiments confirm that \Ours{} provides additional improvements over previous particle sampling methods~\cite{Li2024:SVDD, Singh:2025CoDE}.

\begin{figure*}[t!]
    \centering
    \begin{minipage}{0.63\linewidth} 
        \centering
        \includegraphics[width=0.9\linewidth]{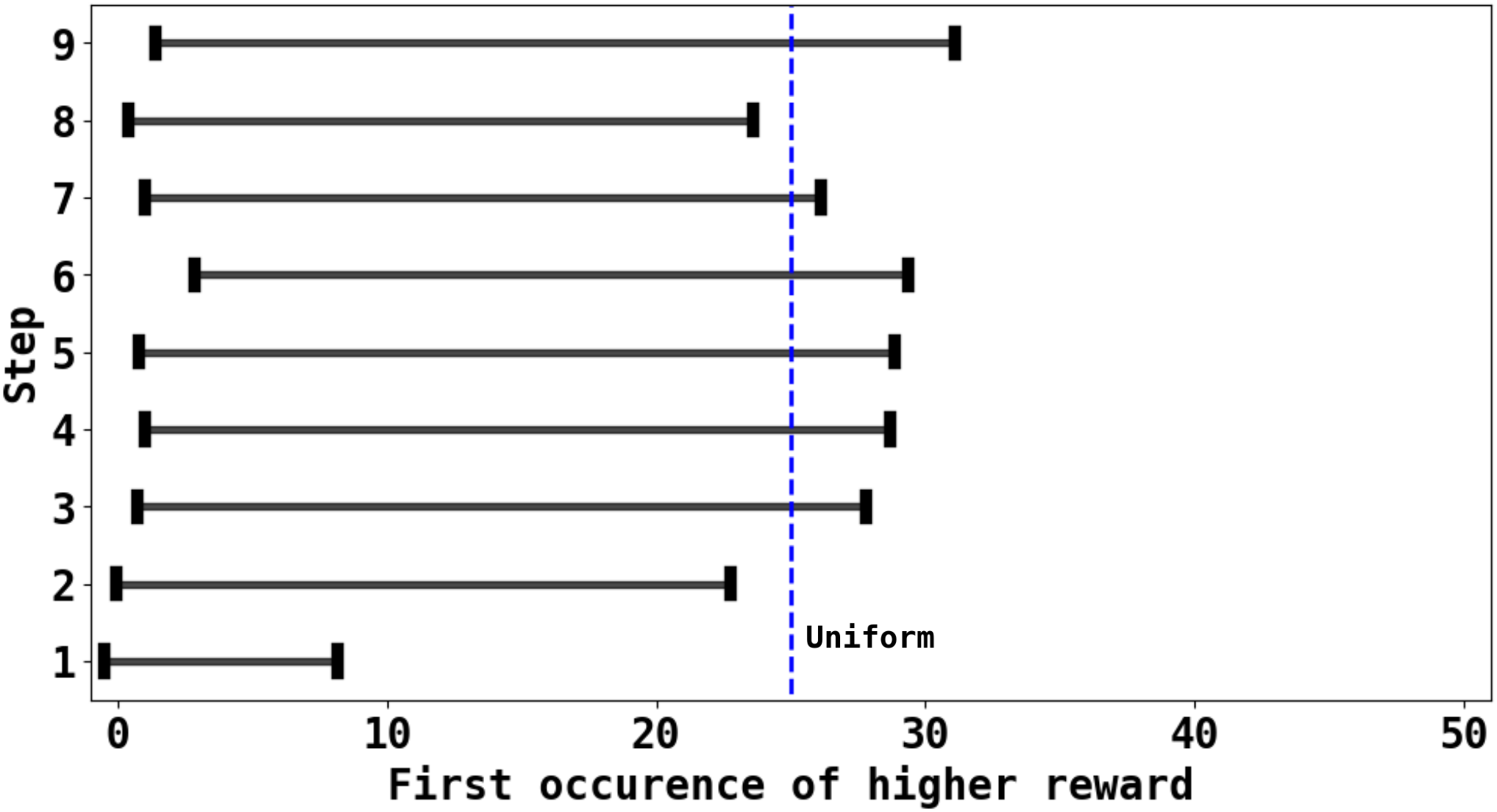}
    \end{minipage}%
    \hfill
    \begin{minipage}{0.35\linewidth} 
        \scriptsize
        \captionof{figure}{\textbf{Analysis of number of function evaluations (NFEs) across timesteps.}  
        The NFEs required to achieve a higher reward for each timestep. The plot illustrates the $\pm 1$ sigma variation band. The blue-dotted line represents the uniform allocation of compute (NFEs) across timesteps. We observe that the NFEs required to identify a higher-reward sample may exceed the uniformly allocated budget (blue dotted line).}
        \label{fig:temporal_axis_scaling}
    \end{minipage}
\end{figure*}



\begin{figure*}[ht!]
    \centering
    \includegraphics[width=\linewidth]{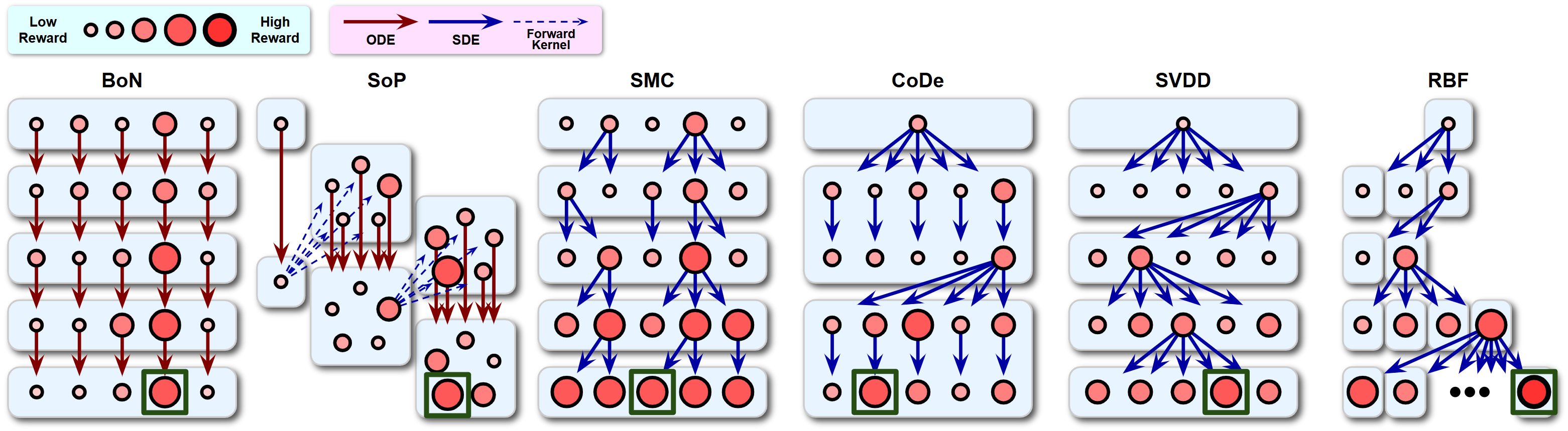}
    \caption{
    \textbf{Schematics of inference-time search algorithms.} Linear-ODE-based methods, BoN and SoP use a deterministic sampling process, whereas particle-sampling-based methods follow a stochastic process. Note that \Ours{} adaptively allocates NFEs across denoising timesteps.
    }
    \label{fig:search_schematic}
\end{figure*}

\section{Search Algorithms}
\label{sec:appendix_inference_time_scaling}
In this section, we introduce the inference-time search algorithms discussed in Sec.~\ref{sec:related} along with their implementation details. An illustrative figure of the algorithms is provided in Fig.~\ref{fig:search_schematic}. 
Here, we define the batch size $(N)$ as the number of initial latent samples and the particle size $(K)$ as the number of samples drawn from the proposal distribution $p_\theta(\B{x}_{t-\Delta t} | \B{x}_t)$ at each denoising step. 
\\
\\
\textbf{Best-of-N} (BoN)~\cite{Stiennon:2020BoN,Tang:2024Realfill} is a form of rejection sampling. Given $N$ generated samples $\{ \B{x}^{(i)}_{0} \}_{i=1}^N$, BoN selects the sample with the highest reward. 
\begin{align}
    \nonumber
    \B{x}_{0} &= \argmax_{\{ \B{x}^{(i)}_{0} \}_{i=1}^N } r(\B{x}_{0}^{(i)}).
\end{align}
As presented in Sec.~\ref{sec:applications}, we fixed the total compute budget to $500$ NFEs and the number of denoising steps to $10$, which sets the batch size of BoN to $N=50$.
\\
\\
\textbf{Search over Paths} (SoP)~\cite{Ma2025:SoP} begins by sampling $N$ initial noises and running the ODE solver up to a predefined timestep $t_0$. Then the following two operations iterate until reaching $t=0$:

\begin{enumerate}[leftmargin=*]
    \item Applying the forward kernel: For each sample in the batch at time $t$, $K$ particles are sampled using the forward kernel, which propagates them from $t$ to $t + \Delta_f$.
    \item Solving the ODE: The resulting $N \cdot K$ particles are then evolved from $t + \Delta_f$ to $t + \Delta_f - \Delta_b$ by solving the ODE. The top $N$ candidates with the highest rewards are selected.
\end{enumerate}
We followed the original implementations~\cite{Ma2025:SoP} for $\Delta_f$ and $\Delta_b$. We used $N=2$ and $K=5$.
\\
\vspace{0.5\baselineskip}
\\
%
%
\textbf{Sequential Monte Carlo} (SMC)~\cite{Kim:2025DAS, Doucet2001:SMC} 
extends the idea of importance sampling to a time-sequential setting by maintaining $N$ samples and updating their importance weights over time:
\begin{align}
    \nonumber
    w_{t-\Delta t}^{(i)} =\frac{p^*_\theta(\B{x}_{t-\Delta t}|\B{x}_t)}{q(\B{x}_{t-\Delta t}|\B{x}_t)}w_t^{(i)} =\frac{p_\theta(\B{x}_{t-\Delta t}|\B{x}_t) \exp( v(\B{x}^{(i)}_{t-\Delta t})/\beta)}{q(\B{x}_{t-\Delta t}|\B{x}_t) \exp (v(\B{x}^{(i)}_{t})/\beta)}w_t^{(i)},
\end{align}
where $q(\B{x}_{t-\Delta t}|\B{x}_t)$ is a proposal distribution and the last equality follows from the optimal policy Eq.~\ref{eq:a.2.optmal_policy}. We used the reverse process of the pretrained model as the proposal distribution, which leads to the following importance weight equation:
\begin{align}
    \label{eq:smc_importance_weight}
    w_{t-\Delta t}^{(i)}=\frac{\exp( v(\B{x}^{(i)}_{t-\Delta t})/\beta)}{\exp( v(\B{x}^{(i)}_{t})/\beta)}w_t^{(i)}.
\end{align}
At each step when effective sample size $\left(\sum_{j=1}^N w_t^{(j)} \right)^2/\sum_{i=1}^N (w_t^{(i)})^2$ is below the threshold, we perform resampling, \ie, indices $\{ a_t^{(i)} \}_{i=1}^{N} $ are first sampled from a multinomial distribution based on the normalized importance weights:
\begin{align}
    \nonumber
    \{ a_t^{(i)} \}_{i=1}^{N} \sim \text{Multinomial} \left(N,\quad \left\{ \frac{w_t^{(i)}}{\sum_{j=1}^{N}w_t^{(j)}} \right\}_{i=1}^{N} \right).
\end{align}
These ancestor indices $a_t^{(i)}$ are then used to replicate high-weight particles and discard low-weight ones, yielding the resampled set $\{ \B{x}_{t}^{ ( a_t^{(i)} ) } \}_{i=1}^N$. If resampling is not performed, the indices are simply set as $a_t^{(i)} = i$.
Lastly, one-step denoised samples are obtained from $\{\B{x}_{t}^{ ( a_t^{(i)} ) } \}_{i=1}^N$:
\begin{align}
    \nonumber
    \B{x}^{(i)}_{t-\Delta t}  \sim p_\theta(\B{x}_{t-\Delta t} | \B{x}_{t}^{ ( a_t^{(i)} ) } ).
\end{align}
When resampling is performed, the importance weights are reinitialized to one, \ie, $w_t=\B{1}$. The importance weights for the next step, $w_{t-\Delta t}$ are subsequently computed according to Eq.~\ref{eq:smc_importance_weight}, regardless of whether resampling was applied.
\\ 
We used $N=50$ for all applications. 
\\
\\
\textbf{Controlled Denoising} (CoDe)~\cite{Singh:2025CoDE} extends BoN by incorporating an interleaved selection step after every $L$ denoising steps. 
\begin{align}
    \nonumber
    \B{x}_{t-L\Delta t} &= \argmax_{\{ \B{x}^{(i)}_{t-L\Delta t} \}_{i=1}^{K} } \exp\left(v(\B{x}^{(i)}_{t-L\Delta t})/\beta\right) 
\end{align}

We used $N=2$, $K=25$, and $L=2$ for all applications. 
\\
\\
%
%
\textbf{SVDD}~\cite{Li2024:SVDD} approximates the optimal policy in Eq.~\ref{eq:optimal_policy} by leveraging weighted $K$ particles:
\begin{align}
    \label{supp:SVDD_eq}
    p^{*}_\theta(\B{x}_{t-\Delta t} | \B{x}_{t}) &\approx \sum_{i=1}^K  \frac{w^{(i)}_{t-\Delta t}}{\sum_{j=1}^K w^{(j)}_{t-\Delta t}} \delta_{\B{x}^{(i)}_{t-\Delta t}} \\
    \nonumber
    \{ \B{x}^{(i)}_{t-\Delta t} \}_{i=1}^{K} & \sim p_\theta(\B{x}_{t-\Delta t} | \B{x}_{t})\\
    \nonumber
    w^{(i)}_{t-\Delta t} &= \exp( v ( \B{x}^{(i)}_{t-\Delta t} ) / \beta ).
\end{align}
At each timestep, the approximate optimal policy in Eq.~\ref{supp:SVDD_eq} is sampled by first drawing an index $a_{t-\Delta t}$ from a categorical distribution:
\begin{align}
    a_{t-\Delta t} \sim \text{Categorical}\left(\left\{ \frac{w^{(i)}_{t-\Delta t}}{\sum_{j=1}^K w^{(j)}_{t-\Delta t}} \right\}_{i=1}^{K}\right)
\end{align}
This index is then used to select the sample from $\{ \B{x}^{(i)}_{t-\Delta t} \}_{i=1}^{K}$, \ie, $\B{x}_{t-\Delta t}\leftarrow \B{x}_{t-\Delta t}^{(a_{t-\Delta t})}$.
In practice, SVDD uses $\beta=0$, replacing sampling from the categorical distribution with a direct $\argmax$ operation, \ie, selecting the particle with the largest importance weight.
Following the original implementation~\cite{Li2024:SVDD}, we used $N=2$ and $K=25$ for all applications. 
\\
\\
\textbf{Rollover Budget Forcing} (\Ours{}) adaptively allocates compute across denoising timesteps. 
At each timestep, when a particle with a higher reward than the previous one is discovered, it immediately takes a denoising step, and the remaining NFEs are rolled over to the next timestep, ensuring efficient utilization of the available compute. To maintain consistency with SVDD~\cite{Li2024:SVDD}, we set $N=2$, with the compute initially allocated uniformly across all timesteps. 
We present the pseudocode for sampling from the stochastic proposal distribution with interpolant conversion in Alg.~\ref{alg:one-step_stochastic_denoising}. Specifically, the pseudocode for \Ours{} with SDE conversion and interpolant conversion is provided in Alg.~\ref{alg:ours}. 
Here, we denote $\{S^{(i)}\}_{i=1}^{M}$ as a sequence of timesteps in descending order, where $S^{(1)} = 1$ and $S^{(M)} = 0$, and $M$ is the total number of denoising steps. 

\SetAlCapHSkip{0pt}
\setlength{\algomargin}{0em}
\SetKwInput{KwInput}{Inputs}
\SetKwInput{KwOutput}{Outputs}
\SetCommentSty{mycommfont}

\newcommand\mycommfont[1]{\normalsize\rmfamily\textcolor{gray}{#1}}
\SetCommentSty{mycommfont}

\begin{figure*}[ht!]
    \centering
    \begin{minipage}{0.43\linewidth}
        \centering
        \begin{algorithm}[H]
            \setstretch{1.67}
            \SetAlgoLined
            \DontPrintSemicolon
            \caption{\texttt{\footnotesize{stoch\_denoise}}{\small{: 1-step stochastic denoising}}}
            \label{alg:one-step_stochastic_denoising}
            {
            \KwInput{
                original velocity field $u$, original interpolant $(\alpha, \sigma)$, new interpolant $(\Bar{\alpha}, \Bar{\sigma})$, diffusion coefficient $g$, current sample $\Bar{\B{x}}_s$, current timestep $s$, denoising step size $\Delta s$
            }
            \KwOutput{
                Stochastically denoised sample $\Bar{\B{x}}_{s - \Delta s}$
            }
            $t_{s} \leftarrow \rho^{-1} (\Bar{\rho}(s)) \quad c_{s} \leftarrow \Bar{\sigma}_{s} / \sigma_{t_{s}}$
            
            $ \Bar{\B{u}}_s \leftarrow \frac{\dot{c}_s}{c_s} \Bar{\B{x}}_s + c_s \dot{t}_s u_{t_s} \left(\frac{\Bar{\B{x}}_{s}}{c_s} \right) $ \tcp*{Eq.~\ref{eq:velocity_transform}}
        
            $ \B{s}_s \leftarrow \frac{1}{\Bar{\sigma}_s} \frac{\Bar{\alpha}_s \Bar{\B{u}}_s - \dot{\Bar{\alpha}}_s \Bar{\B{x}}_s}
            {\dot{\Bar{\alpha}}_s \Bar{\sigma}_s - \Bar{\alpha}_s \dot{\Bar{\sigma}}_s}$ \tcp*{Eq.~\ref{eq:score_velocity}}
        
            $ \B{f}_s = \Bar{\B{u}}_s - \frac{g_s^2}{2}\B{s}_s $ \tcp*{Eq.~\ref{eq:reverse_sde}}
        
            $\B{z} \sim \mathcal{N}(\B{0}, \textbf{\textit{I}})$
        
            $ \Bar{\B{x}}_{s-\Delta s} \leftarrow \Bar{\B{x}}_s - \B{f}_s \Delta s + g_s\sqrt{\Delta s} \ \B{z} $ 
            }
        \end{algorithm}
    \end{minipage}
    \hfill
    \begin{minipage}{0.53\linewidth}
        \centering
        \begin{algorithm}[H]
            \SetAlgoNoEnd
            \setstretch{1.25}
            \SetAlgoLined
            \DontPrintSemicolon
            \caption{Rollover Budget Forcing (RBF)}
            \label{alg:ours}
            {
            \KwInput{
                Number of denoising steps $M$, 
                timesteps $\{ S^{(i)} \}_{i=1}^{M} $,
                NFE quota $ \{ Q^{(i)} \}_{i=1}^{M}$
            }
            \KwOutput{
                Aligned sample $\Bar{\B{x}}_0$
            }
            $\Bar{\B{x}}_1\sim \mathcal{N}(0, \textbf{\textit{I}})\quad  r^* \leftarrow r(\Bar{\B{x}}_{0|1})$\\
            \For{$i \in \{ 1,\dots, M \}$} {
                $s \leftarrow S^{(i)} \quad \Delta s \leftarrow S^{(i)} - S^{(i + 1)} \quad q \leftarrow Q^{(i)}$ \\
                \For{ $j \in \{ 1, \dots, q \}$ } {
                    $ \Bar{\B{x}}_{s-\Delta s}^{(j)} \leftarrow \texttt{\footnotesize{stoch\_denoise}}\left(\Bar{\B{x}}_{s}, s, \Delta s \right)$ \tcp*{Alg.~\ref{alg:one-step_stochastic_denoising}}
                    \If{$r^{*} < r(\Bar{\B{x}}^{(j)}_{0|s-\Delta s})$} {
                        $ Q^{(i+1)} \leftarrow Q^{(i+1)} + Q^{(i)} - j$ \tcp*{Sec.~\ref{sec:roll_over}}
                        $ r^{*} \leftarrow r(\Bar{\B{x}}^{(j)}_{0|s-\Delta s}) \quad \Bar{\B{x}}_{s-\Delta s} \leftarrow \Bar{\B{x}}_{s-\Delta s}^{(j)} $ \\
                        \textbf{break} \\
                    }
                    \If {$j = q$} {
                        $k^{*} \leftarrow \argmax_{k\in \{1,\dots,q\}} r(\Bar{\B{x}}^{(k)}_{0|s-\Delta s}) $ \\
                        $\Bar{\B{x}}_{s-\Delta s} \leftarrow \Bar{\B{x}}_{s-\Delta s}^{(k^{*})}$ \\ 
                    }
                }
            }
            }
        \end{algorithm}
    \end{minipage}
\end{figure*}


\begin{figure*}[t!]
    \centering
    \begin{minipage}{0.44\linewidth}
        \centering
        \scriptsize
        \setlength{\tabcolsep}{2.9pt}
        \newcommand{\good}[1]{\tiny{\textcolor{blue}{+#1\%}}}
        \newcommand{\bad}[1]{\tiny{\textcolor{red}{-#1\%}}}
        \captionof{table}
        {\textbf{Quantitative results of aesthetic image generation.} \textsuperscript{\textdagger} denotes the given reward used in inference time. The best result in each row is highlighted in \textbf{bold}.}
        \label{tab:aesthetic}
        \newcolumntype{Z}{>{\centering\arraybackslash}m{0.4\linewidth}}
        \newcolumntype{X}{>{\centering\arraybackslash}m{0.27\linewidth}}
        \begin{tabular}{Z | X | X }
        \toprule
        Model & \makecell{Aesthetic\\Score\textsuperscript{\textdagger}~\cite{Schuhmann:aesthetics}} & \makecell{ImageReward\\~\cite{Xu2023:ImageReward} (held-out)} \\
        \midrule
        \makecell{FLUX~\cite{BlackForestLabs:2024Flux}} 
        & 5.795 
        & 0.991 \\
        \makecell{DPS~\cite{Chung:2023DPS}} 
        & \makecell{6.438} 
        & \makecell{0.605} \\
        \makecell{SVDD~\cite{Li2024:SVDD}+DPS~\cite{Chung:2023DPS}} 
        & \makecell{6.887} 
        & \makecell{1.077} \\
        \midrule
        \makecell{{\Oursbf{}} (Ours)+DPS~\cite{Chung:2023DPS}} 
        & \makecell{\textbf{7.170}} 
        & \makecell{\textbf{1.152}} \\
        \bottomrule
    \end{tabular}
    \end{minipage}
    \hfill
    \begin{minipage}{0.52\linewidth}
        \centering
        \scriptsize
        \setlength{\tabcolsep}{0.0em}
        \def\arraystretch{1.2}
        \newcolumntype{Z}{>{\centering\arraybackslash}m{0.24\textwidth}}
        \begin{tabularx}{0.96\textwidth}{Z Z Z Z}
            \toprule
             FLUX~\cite{BlackForestLabs:2024Flux} & DPS~\cite{Chung:2023DPS} & SVDD~\cite{Li2024:SVDD} + DPS~\cite{Chung:2023DPS} & \makecell{{\Oursbf{}} (Ours) \\ + DPS~\cite{Chung:2023DPS}} \\
            \midrule
            \multicolumn{4}{c}{\textit{``Bird''}} \\
            \includegraphics[width=0.24\textwidth]{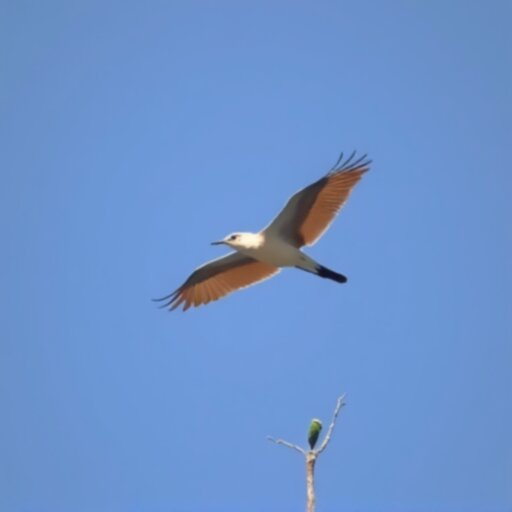} &
            \includegraphics[width=0.24\textwidth]{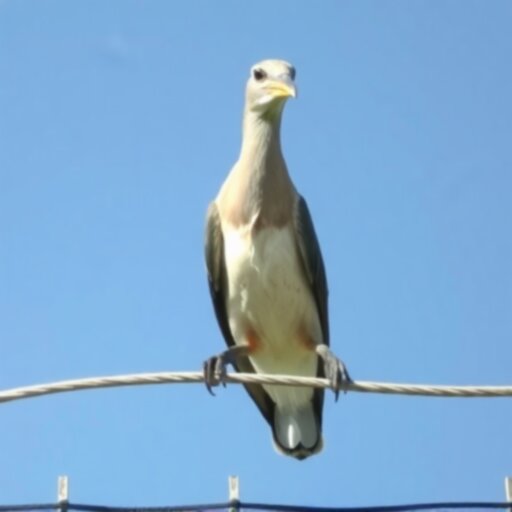} &
            \includegraphics[width=0.24\textwidth]{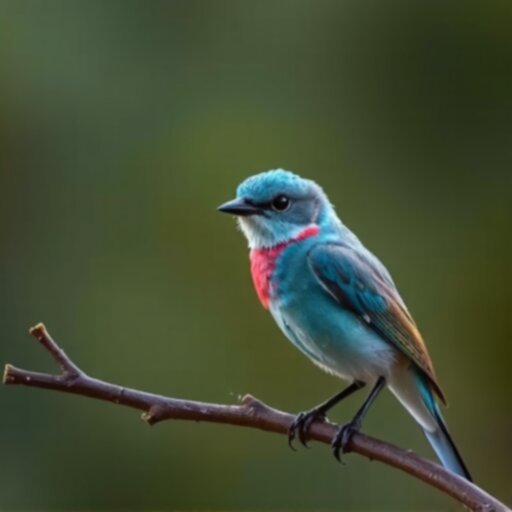} &
            \includegraphics[width=0.24\textwidth]{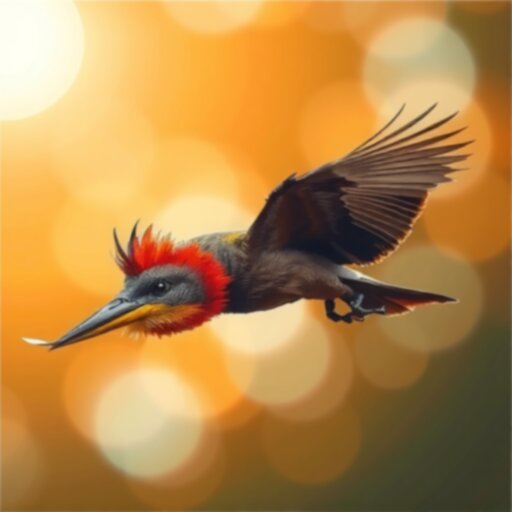} \\
            \midrule
            \multicolumn{4}{c}{\textit{``Bat''}} \\ 
            \includegraphics[width=0.24\textwidth]{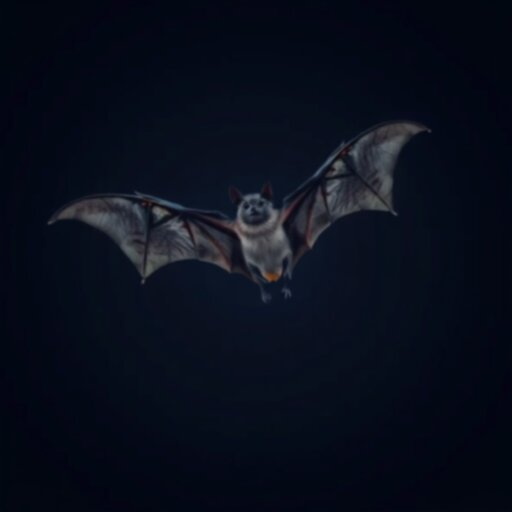} &
            \includegraphics[width=0.24\textwidth]{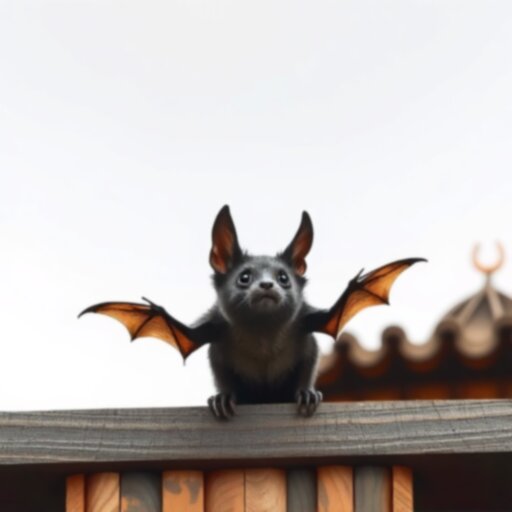} &
            \includegraphics[width=0.24\textwidth]{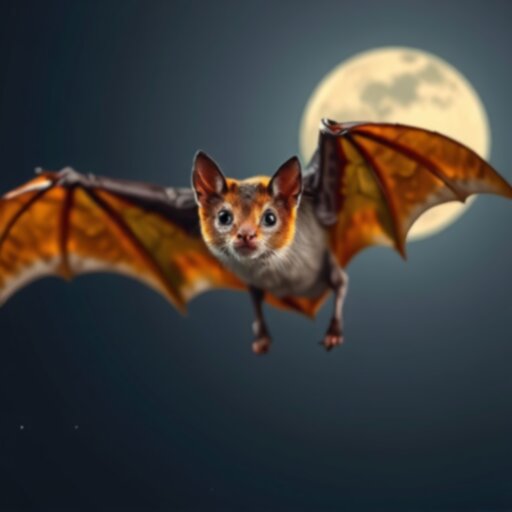} &
            \includegraphics[width=0.24\textwidth]{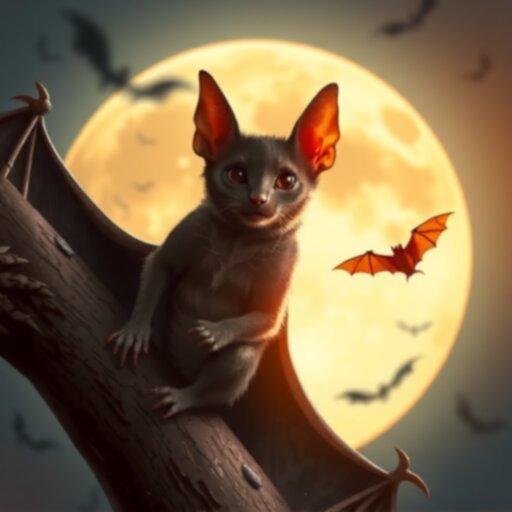} \\
            \bottomrule
        \end{tabularx}
        \caption{\textbf{Qualitative results of aesthetic image generation.} At inference-time, we guide generate using the aesthetic score~\cite{Schuhmann:aesthetics} as the given reward, which assesses \textbf{visual appeal}.}
        \label{fig:aesthetic}
    \end{minipage}
\end{figure*}

%
\section{Additional Results}
\subsection{Aesthetic Image Generation}
\label{sec:appendix_diff_reward}
In this section, we demonstrate that inference-time scaling can also be applied to gradient-based methods, such as DPS~\cite{Chung:2023DPS}, for differentiable rewards. Specifically, we consider aesthetic image generation and show that \Ours{} leads to synergistic performance improvements. We first derive the formulation of the proposal distribution for differentiable rewards and then present qualitative and quantitative results. 

\subsubsection{Gradient-Based Guidance}
Uehara~\etal~\cite{Uehara:2024Bridging} have shown that the marginal distribution $p^{*}_t(\B{x}_t)$ is computed as follows:
\begin{align}
    \nonumber
    p^{*}_t(\B{x}_t) &\propto \exp \left( \frac{v(\B{x}_t)}{\beta} \right) p_t(\B{x}_t) \approx \exp \left( \frac{r(\B{x}_{0|t})}{\beta} \right) p_t(\B{x}_t),
\end{align}
where the approximation follows from Eq.~\ref{eq:appendix_soft_value_func}. 
When the reward is differentiable (\eg, aesthetic score~\cite{Schuhmann:aesthetics}), one can simulate samples from $p^{*}_t(\B{x}_t)$ by computing its score function:
\begin{align}
    \nonumber
    \label{eq:differentiable_score}
    \nabla \log p_t^{*}(\B{x}_t) &= \nabla \log \left[ \exp (\frac{r(\B{x}_{0|t})}{\beta}) p_t(\B{x}_t) \right] \\
    &= \frac{1}{\beta} \underbrace{\nabla r (\B{x}_{0|t})}_{\text{Guidance}} + \underbrace{\nabla \log p_t(\B{x}_t)}_{\text{Pretrained Score}}.
\end{align}
For differentiable rewards, we incorporate the gradient-based guidance defined in Eq.~\ref{eq:differentiable_score} into the SDE sampling process described in Eq.~\ref{eq:reverse_sde}. 
Notably, this approach is orthogonal to inference-time scaling, and \Ours{} can be additionally utilized to further enhance performance. 
In the next section, we experimentally demonstrate that \Ours{} can be effectively integrated with gradient-based guidance. 

\subsubsection{Aesthetic Image Generation Results}
The aesthetic image generation task aims to sample images that best capture human preferences, such as visual appeal. We use $45$ animal prompts from previous work, DDPO~\cite{Black2024:DDPO}. The aesthetic score~\cite{Schuhmann:aesthetics} serves as the given reward, while ImageReward~\cite{Xu2023:ImageReward} is used as the held-out reward. 

We present quantitative and qualitative results of aesthetic image generation in Tab.~\ref{tab:aesthetic} and Fig.~\ref{fig:aesthetic}. Notably, \Ours{}, implemented with DPS~\cite{Chung:2023DPS}, achieves significant improvements on both the given and held-out rewards, even surpassing SVDD~\cite{Li2024:SVDD}. Qualitatively, \Ours{} effectively adapts the pretrained flow model to better align with human preferences, particularly in terms of visual appeal.  

\begin{table}[!t]
\setlength{\tabcolsep}{2.9pt}
\scriptsize
\centering
\caption{\textbf{Comparison of diffusion and flow models}.} 
\label{tab:diff_flow_comp}
\renewcommand{\arraystretch}{1.4}

\newcolumntype{Z}{>{\centering\arraybackslash}m{0.13\linewidth}}
\newcolumntype{X}{>{\centering\arraybackslash}m{0.08\linewidth}}

\begin{tabular}{X | Z | Z Z Z Z | X}
\toprule
Type & Model & ImageReward~\cite{Xu2023:ImageReward} & HPS~\cite{Wu2023:hps} & PickScore~\cite{Kirstain2023:pickapic} & CLIP Score~\cite{radford2021:CLIP} & Steps \\
\hline
\multirow{2}{*}{Diffusion} 
& SD2~\cite{Rombach:2022LDM}     & 0.429 & 0.280 & 0.218 & 0.269 & 50 \\
& SANA-1.5~\cite{xie2025sana}    & 0.894 & 0.284 & 0.222 & 0.270 & 20 \\
\hline
\multirow{2}{*}{Flow} 
& SD3~\cite{Esser2403:scaling}   & 1.154 & 0.294 & 0.226 & 0.277 & 28 \\
& FLUX~\cite{BlackForestLabs:2024Flux} & 1.054 & 0.290 & 0.226 & 0.275 & 5 \\
\bottomrule
\end{tabular}
\label{tab:appendix_diff_flow}
\end{table}

\subsection{Comparison of Diffusion and Flow Models}
\label{sec:appendix_diff_flow}
We present quantitative comparisons between text-to-image diffusion and flow models in Tab.~\ref{tab:appendix_diff_flow}, using compositional text prompts from GenAI-Bench~\cite{Jiang:2024GenAI}.
As shown, flow-based models outperform diffusion models across all evaluation metrics assessing image quality~\cite{Xu2023:ImageReward, Wu2023:hps, Kirstain2023:pickapic} and text alignment~\cite{radford2021:CLIP, Xu2023:ImageReward}. In the flow-based models, FLUX~\cite{BlackForestLabs:2024Flux} achieves competitive performance while requiring fewer steps compared to Stable Diffusion 3~\cite{Esser2403:scaling}. 

\subsection{Scaling Behavior Comparison}
As discussed in Sec.~\ref{sec:method}, expanding the exploration space and applying budget forcing significantly enhance the efficiency of \Ours{}, leading to superior performance improvements over BoN. 
Here, we compare the scaling behavior of BoN, a representative Linear-ODE-based method, with \Ours{} across different numbers of function evaluations (NFEs). 

We report qualitative and quantitative scaling results for quantity-aware image generation (Fig.~\ref{fig:appendix_nfe_scaling}, Tab.~\ref{tab:appendx_counting_nfe}) and for compositional text-to-image generation (Fig.~\ref{fig:composition_nfe_scaling_plot}, Tab.~\ref{tab:composition_nfe_scaling}), respectively.
Our results indicate that allocating more compute leads to performance improvements for both BoN and \Ours{}. 
However, the performance of BoN plateaus after $300$ NFEs, whereas \Ours{} continues to scale and achieves the highest reward in both tasks.  
Notably, \Ours{} shows similar trend in the held-out reward, outperforming BoN and demonstrating its efficiency. 

\begin{figure*}[t!]
    \centering
    \begin{minipage}{0.48\linewidth}
        \centering
        \scriptsize
        \setlength{\tabcolsep}{2.9pt}
        \renewcommand{\arraystretch}{1.25} 
        \captionof{table}{\textbf{Quantitative results of quantity-aware image generation in NFE scaling expriment.} We use the same $100$ prompts from T2I-CompBench~\cite{Huang:2025T2ICompBench++}. \textsuperscript{\textdagger} denotes the given reward.}
        \label{tab:appendx_counting_nfe}
        \newcolumntype{Z}{>{\centering\arraybackslash}m{0.01\textwidth}}
        \newcolumntype{Y}{>{\centering\arraybackslash}m{0.2\textwidth}}
        \newcolumntype{W}{>{\centering\arraybackslash}m{0.15\textwidth}}
        \newcolumntype{X}{>{\centering\arraybackslash}m{0.11\textwidth}}
        \begin{tabular}{Z X | W W Y Y}
            \toprule
            & NFEs & \makecell{RSS\textsuperscript{\textdagger}\\\cite{Liu:2024GroundingDINO} $\downarrow$} & Acc. $\uparrow$ & \makecell{VQAScore~\cite{Lin:2024CLIPFlanT5}\\(held-out) $\uparrow$ } & \makecell{Aesthetic\\Score~\cite{Schuhmann:aesthetics} $\uparrow$} \\
            \midrule
            \multirow{5}{*}{\rotatebox{90}{BoN}} & 50 & 4.360 & 0.400  & 0.758  & 5.408 \\
            & 100 & 3.280 & 0.510  & 0.750  & 5.522 \\
            & 300 & 2.190 & 0.570  & 0.755  & 5.463 \\
            & 500 & 1.760 & 0.580  & 0.756  & 5.420 \\
            & 1000 & 1.340 & 0.590  & 0.759  & 5.466 \\
            \midrule
            \multirow{5}{*}{\rotatebox{90}{{\Oursbf{}} (Ours)}} & 50 & 3.250 & 0.410  & 0.756  & 5.560 \\
            & 100 & 1.860 & 0.590  & 0.760  & 5.627 \\
            & 300 & 0.690 & 0.720  & 0.779  & 5.503 \\
            & 500 & 0.540 & 0.800  & 0.769  & 5.581 \\
            & 1000 & 0.290 & 0.880  & 0.777  & 5.526 \\
            \bottomrule
        \end{tabular}
    \end{minipage}
    \hfill
    \begin{minipage}{0.48\linewidth}
        \centering
        {\scriptsize
        \setlength{\tabcolsep}{0.0em}
        \def\arraystretch{0.0}
        \newcolumntype{Z}{>{\centering\arraybackslash}m{\linewidth}}
        \begin{tabularx}{\linewidth}{Z}
            \includegraphics[width=\linewidth]{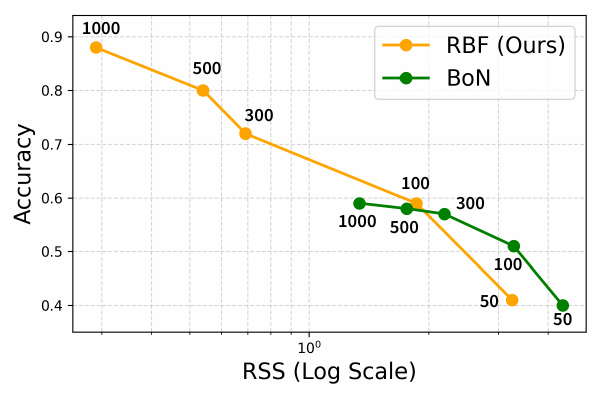}
        \end{tabularx}
        }
        \caption{
        \textbf{Quantity-aware image generation scaling behavior comparison of BoN and \Ours{}.} 
        We plot the known reward (RSS)~\cite{Liu:2024GroundingDINO} against accuracy for different numbers of function evaluations: $\{ 50, 100, 300, 500, 1,000 \}$. 
        Note that the horizontal axis is displayed on a logarithmic scale. 
        }
        \label{fig:appendix_nfe_scaling}
    \end{minipage}
\end{figure*}

\begin{figure*}[t!]
    \centering
    \begin{minipage}[t]{0.48\linewidth}
        \centering
        \scriptsize
        \setlength{\tabcolsep}{2.9pt}
        \renewcommand{\arraystretch}{1.30} 
        \captionof{table}{\textbf{Quantitative results of compositional text-to-image generation in NFE scaling expriment.}  We use the $121$ prompts from GenAI-Bench~\cite{Jiang:2024GenAI}. 
        \textsuperscript{\textdagger} denotes the given reward.}
        \label{tab:composition_nfe_scaling}
        \newcolumntype{Z}{>{\centering\arraybackslash}m{0.01\textwidth}}
        \newcolumntype{W}{>{\centering\arraybackslash}m{0.26\linewidth}}
        \newcolumntype{X}{>{\centering\arraybackslash}m{0.11\textwidth}}
        \newcolumntype{Y}{>{\centering\arraybackslash}m{0.23\linewidth}}
        \begin{tabular}{Z X | W Y Y}
            \toprule
            & NFEs & \makecell{VQAScore \textsuperscript{\textdagger} \\ \cite{Lin:2024CLIPFlanT5} $\uparrow$} & \makecell{Inst.BLIP~\cite{Dai:2023InstructBLIP}\\(held-out) $\uparrow$} & \makecell{Aesthetic \\ \cite{Schuhmann:aesthetics} $\uparrow$} \\
            \midrule
            \multirow{5}{*}{\rotatebox{90}{BoN}} 
            & 50   & 0.8310 & 0.8011 & 5.2246 \\
            & 100  & 0.8459 & 0.7959 & 5.2594 \\
            & 300  & 0.8775 & 0.8250 & 5.1414 \\
            & 500  & 0.8790 & 0.8200 & 5.1620 \\
            & 1000 & 0.8886 & 0.8269 & 5.2055 \\
            \midrule
            \multirow{5}{*}{\rotatebox{90}{{\Oursbf{}} (Ours)}} 
            & 50   & 0.8577 & 0.8253 & 5.2704 \\
            & 100  & 0.8824 & 0.8212 & 5.3213 \\
            & 300  & 0.9146 & 0.8387 & 5.2837 \\
            & 500  & 0.9250 & 0.8430 & 5.2370 \\
            & 1000 & 0.9283 & 0.8369 & 5.2593 \\
            \bottomrule
        \end{tabular}
    \end{minipage}
    \hfill
    \begin{minipage}[t]{0.48\linewidth}\vspace{-\topsep}
        \centering
        \includegraphics[width=\linewidth]{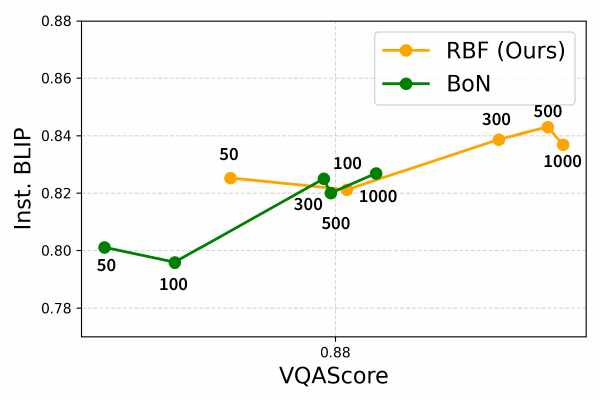}
        \caption{
        \textbf{Compositional text-to-image generation scaling behavior comparison of BoN and \Ours{}.} 
        We plot the known reward (VQAScore)~\cite{Lin:2024CLIPFlanT5} against the held-out reward~\cite{Dai:2023InstructBLIP} for different numbers of function evaluations: $\{ 50, 100, 300, 500, 1,000 \}$. 
        }
        \label{fig:composition_nfe_scaling_plot}
    \end{minipage}
\end{figure*}

\begin{figure*}[t!]
    \centering
    \begin{minipage}[t]{0.49\textwidth}
        \centering
        \scriptsize
        \setlength{\tabcolsep}{3pt}
        \captionof{table}{
            \textbf{Time complexity of scaling methods.}
        }
        \label{tab:inference_cost}
        \newcolumntype{X}{>{\centering\arraybackslash}m{0.23\linewidth}}
        \newcolumntype{Z}{>{\centering\arraybackslash}m{0.24\linewidth}}
        \newcolumntype{Y}{>{\centering\arraybackslash}m{0.09\linewidth}}
        \renewcommand{\arraystretch}{1.15}
        \begin{tabular}{Y Z X X}
            \toprule
            Base & BoN & SMC~\cite{Kim:2025DAS} & SVDD~\cite{Li2024:SVDD}, \Ours{} \\
            \midrule
            $S \cdot c_d$ &
            $N \cdot c_d + \frac{N}{S} \cdot c_v$ &
            $N \cdot c_d + N \cdot c_v$ &
            $N \cdot c_d + N \cdot c_v$ \\
            \bottomrule
        \end{tabular}
    \end{minipage}
    \hfill
    \begin{minipage}[t]{0.45\textwidth}
        \centering
        \scriptsize
        \setlength{\tabcolsep}{3pt}
        \captionof{table}{
            \textbf{Runtime of \Ours{}.}
        }
        \label{tab:runtime_rbf}
        \newcolumntype{X}{>{\centering\arraybackslash}m{0.04\linewidth}}
        \newcolumntype{Z}{>{\centering\arraybackslash}m{0.10\linewidth}}
        \renewcommand{\arraystretch}{1.15}
        \begin{tabularx}{\linewidth}{X | Z Z Z Z Z}
            \toprule
            & 50 & 100 & 300 & 500 & 1000 \\
            \midrule
            Runtime (sec) & 84.00 & 140.11 & 383.89 & 635.01 & 1243.68 \\
            VQAScore~\cite{Lin:2024CLIPFlanT5} & 0.858 & 0.882 & 0.915 & 0.925 & 0.928 \\
            \bottomrule
        \end{tabularx}
    \end{minipage}
\end{figure*}

\paragraph{Time Complexity and Compute Analysis.}
We present time complexity of scaling methods in Tab.~\ref{tab:inference_cost}. 
Let $S$ as the number of denoising steps, $N$ as the NFE budget, and $c_s$ and $c_v$ as the costs of the denoising and verification, respectively. 
Since all methods share the same NFE budget $N$, the total denoising cost is fixed at $N \cdot c_d$. 
For the verification cost, although BoN has the lowest cost, \Ours{} consistently outperforms BoN across all NFE budget regimes in both compositional text-to-image generation and quantity-aware image generation tasks (Fig.~\ref{fig:appendix_nfe_scaling} and Fig.~\ref{fig:composition_nfe_scaling_plot}) while incurring only a marginal increase in verification overhead.

Additionally, at inference time, a user can specify the compute budget (NFEs), which determines the total runtime of our method. We report the runtime of \Ours{} in Tab.~\ref{tab:runtime_rbf}. Under a $500$-NFE budget, scaling for compositional text-to-image generation (VQAScore~\cite{Lin:2024CLIPFlanT5}) requires $635.01$ seconds per image. 
Runtime can be reduced by lowering the NFE budget—at the cost of reward performance—and further accelerated by decreasing output resolution or increasing batch size.

For all experiments, we use FLUX~\cite{BlackForestLabs:2024Flux}, which requires approximately $32$GB of GPU memory, accounting for the majority of overall memory usage. 
All evaluations are performed on an NVIDIA RTX~A6000 GPU.


\section{Implementation Details}
\label{sec:appendix_impl_details}

\begin{figure*}[t!]
    \centering
    \scriptsize
    \setlength{\tabcolsep}{3pt}
    \renewcommand{\arraystretch}{1.15}
    \captionof{table}{
    \textbf{Choice of hyperparameters.}
    Evaluation of the images generated with different (a) number of denoising steps and (b) diffusion coefficient.}
    \label{tab:hyperparams}
    \begin{subtable}[t]{0.4\textwidth}
        \centering
        \newcolumntype{X}{>{\centering\arraybackslash}m{0.18\linewidth}}
        \newcolumntype{Y}{>{\centering\arraybackslash}m{0.3\linewidth}}
        \begin{tabularx}{\linewidth}{Y | Y Y}
            \toprule
            Steps & Aesthetic \cite{Schuhmann:aesthetics} & \makecell{Diversity~\cite{Kim:2025DAS}} \\
            \midrule
            10 & \textbf{5.635} & \textbf{0.084} \\
            20 & 5.680 & 0.103 \\
            \bottomrule
        \end{tabularx}
        \caption{Number of denoising steps}
        \label{tab:r1_steps}
    \end{subtable}
    \hfill
    \begin{subtable}[t]{0.58\textwidth}
        \centering
        \newcolumntype{Z}{>{\centering\arraybackslash}m{0.1\linewidth}}
        \newcolumntype{Y}{>{\centering\arraybackslash}m{0.2\linewidth}}
        \begin{tabularx}{\linewidth}{Z | Y Y | Y Y}
            \toprule
            Norm & \makecell{Aesthetic~\cite{Schuhmann:aesthetics}\\$g(t)=t$} & \makecell{Diversity~\cite{Kim:2025DAS}\\$g(t)=t$} & \makecell{Aesthetic~\cite{Schuhmann:aesthetics}\\$g(t)=t^2$} & \makecell{Diversity~\cite{Kim:2025DAS}\\$g(t)=t^2$} \\
            \midrule
            1 & 5.635 & 0.084 & 5.652 & 0.083 \\
            3 & 5.168 & 0.153 & \textbf{5.436} & \textbf{0.158} \\
            5 & 4.608 & 0.223 & 4.838 & 0.187 \\
            \bottomrule
        \end{tabularx}
        \caption{Diffusion coefficient}
        \label{tab:r2_diffusion}
    \end{subtable}
\end{figure*}

\subsection{Choice of Hyperparameters}
We report quantitative results on aesthetic score~\cite{Schuhmann:aesthetics} and diversity~\cite{Kim:2025DAS} for images generated under different settings of the number of denoising steps and the diffusion coefficient. 
As shown in Tab.~\ref{tab:hyperparams}(a), the number of denoising steps beyond $10$ gives marginal gains. 
Hence, we fixed the number of denoising steps to $10$ to ensure fair and efficient evaluation across all methods. Note that once the number of denoising steps is fixed, the total particle count per step is automatically determined by dividing the total NFE budget by the number of steps. 
Additionally, Tab.~\ref{tab:hyperparams}(b) reports results obtained under varying diffusion coefficients scaled by different norms. 
We found that using $g(t) = 3t^2$ consistently offered the best trade-off between sample diversity and output fidelity, so we adopt this setting for all SDE sampling.

\subsection{Compositional Text-to-Image Generation}
In the compositional text-to-image generation task, we use the VQAScore as the reward, which evaluates image-text alignment using a visual question-answering (VQA) model (CLIP-FlanT5~\cite{Lin:2024CLIPFlanT5} and InstructBLIP~\cite{Dai:2023InstructBLIP}). Specifically, VQAScore measures the probability that a given attribute or object is present in the generated image. To compute the reward, we scale the probability value by setting $\beta=0.1$ in Eq.~\ref{eq:optimal_policy}. 

\subsection{Quantity-Aware Image Generation}
In quantity-aware image generation, text prompts specify objects along with their respective quantities. To generate images that accurately match the specified object counts, we use the negation of the Residual Sum of Squares (RSS) as the given reward. 
Here, RSS is computed to measure the discrepancy between the detected object count $\hat{C}_i$ and the target object count $C_i$ in the text prompt: $\text{RSS} = \sum_{i=1}^{n} \left( C_i - \hat{C}_i \right)^2$,
where $n$ is the total number of object categories in the prompt. 
We additionally report accuracy, which is defined as $1$ when $\text{RSS} = 0$ and $0$ otherwise. 
For the held-out reward, we report VQAScore measured with CLIP-FlanT5~\cite{Lin:2024CLIPFlanT5} model. 

\paragraph{Object Detection Implementation Details.} 
To compute the given reward, RSS, it is necessary to detect the number of objects per category, $\hat{C}_i$. Here, we leverage the state-of-the-art object detection model, GroundingDINO~\cite{Liu:2024GroundingDINO} and the object segmentation model SAM~\cite{Kirillov2023:SAM}, which is specifically used to filter out duplicate detections. 

We observe that na\"ively using the detection model~\cite{Liu:2024GroundingDINO} to compute RSS leads to poor detection accuracy due to two key issues: inner-class duplication and cross-class duplication.
Inner-class duplication occurs when multiple detections are assigned to the same object within a category, leading to overcounting. This often happens when an object is detected both individually and as part of a larger group.
Cross-class duplication arises when an object is assigned to multiple categories due to shared characteristics (\eg, a toy airplane being classified as both a toy and an airplane), making it difficult to assign it to a single category. 

To address inner-class duplication, we refine the object bounding boxes detected by GroundingDINO~\cite{Liu:2024GroundingDINO} using SAM~\cite{Kirillov2023:SAM} and filter out overlapping detections. Smaller bounding boxes are prioritized, and larger ones that significantly overlap with existing detections are discarded. This ensures that each object is counted only once within its category.
To resolve cross-class duplication, we assign each object to the category with the highest GroundingDINO~\cite{Liu:2024GroundingDINO} confidence score which prevents duplicate counting across multiple classes.

\begin{center}
    \hfill \break
    \textbf{More qualitative results are presented in the following pages.}
\end{center}

\clearpage
\newpage

\section{Additional Qualitative Results}
\subsection{Comparisons of Inference-Time SDE Conversion and Interpolant Conversion}
\label{sec:appendix_quali_ode_sde_vp}
\begin{figure}[ht!]
{\tiny
\setlength{\tabcolsep}{0.0em}
\def\arraystretch{1.0}
\newcolumntype{X}{>{\centering\arraybackslash}m{0.005\textwidth}}
\newcolumntype{Y}{>{\centering\arraybackslash}m{0.05\textwidth}}
\newcolumntype{Z}{>{\centering\arraybackslash}m{0.155\textwidth}}
    \begin{tabularx}{\textwidth}{Y | Z Z Z X | X Z Z Z}
        \toprule
        & \scriptsize{Linear-ODE} & \scriptsize{Linear-SDE} & \scriptsize{VP-SDE} & & & \scriptsize{Linear-ODE} & \scriptsize{Linear-SDE} & \scriptsize{VP-SDE} \\
        \midrule
        & \multicolumn{3}{c}{\textit{``Six people gathered for a picnic.''}} & & & \multicolumn{3}{c}{\textit{``Four candles, two balloons, one dog, two tomatoes and three helicopters.''}} \\
        \rotatebox{90}{\makecell{\normalsize{SMC}~\cite{Kim:2025DAS}\\Quantity}} & \includegraphics[width=0.15\textwidth]{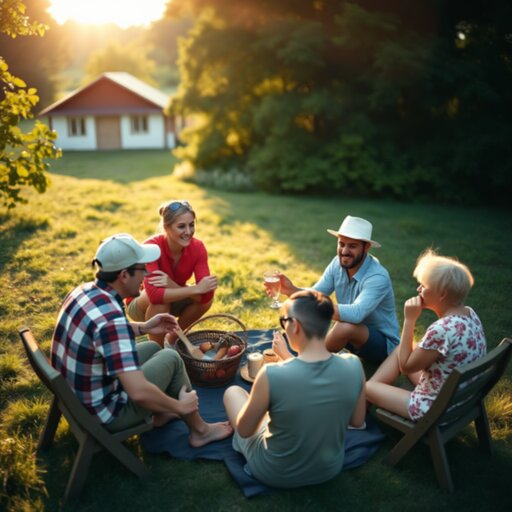} &
        \includegraphics[width=0.15\textwidth]{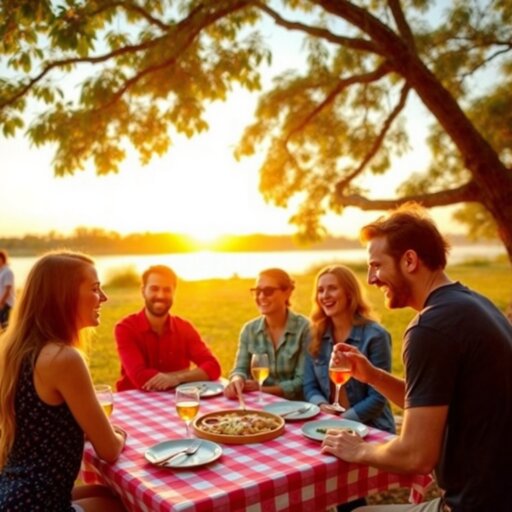} &
        \includegraphics[width=0.15\textwidth]{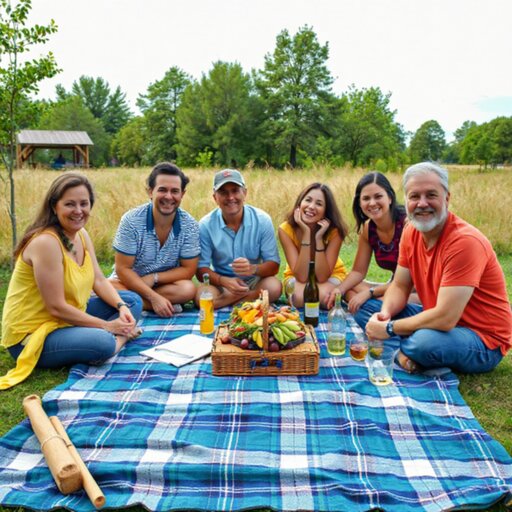} & & &
        \includegraphics[width=0.15\textwidth]{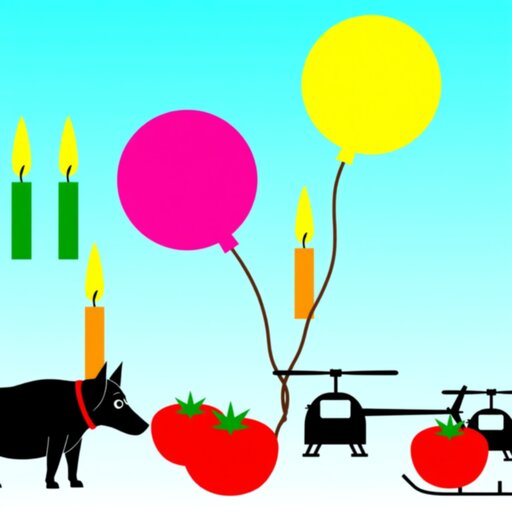} &
        \includegraphics[width=0.15\textwidth]{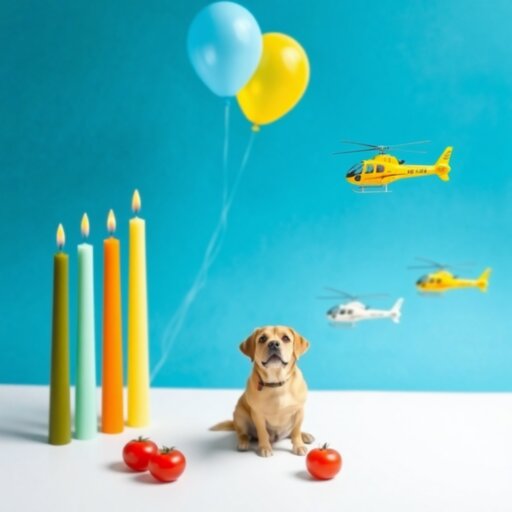} &
        \includegraphics[width=0.15\textwidth]{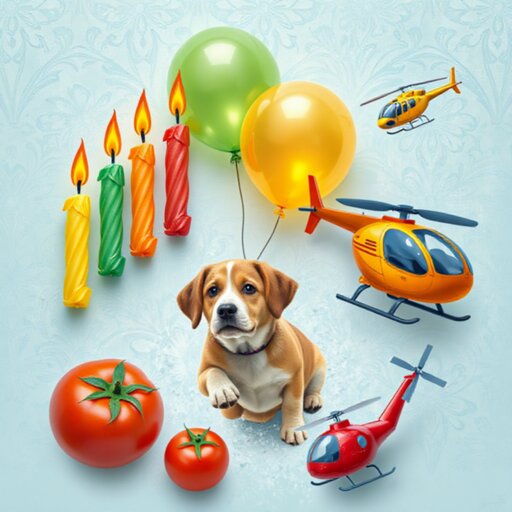} \\
        \midrule
        & \multicolumn{3}{c}{\textit{``Four rabbits, three apples, two mice and four televisions.''}} & & & \multicolumn{3}{c}{\textit{``Seven pigs snorted and played in the mud.''}} \\
        \rotatebox{90}{\makecell{\normalsize{SMC}~\cite{Kim:2025DAS}\\Quantity}} & \includegraphics[width=0.15\textwidth]{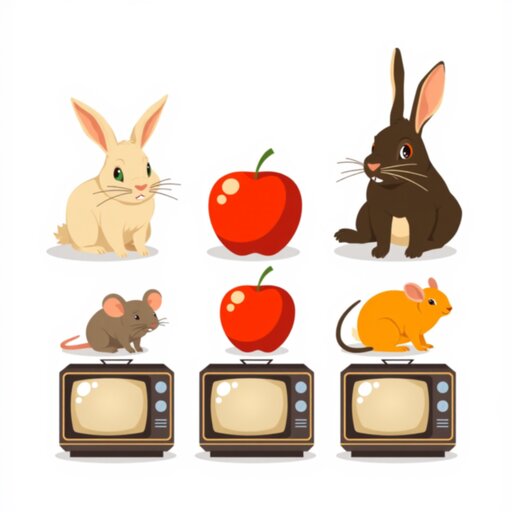} &
        \includegraphics[width=0.15\textwidth]{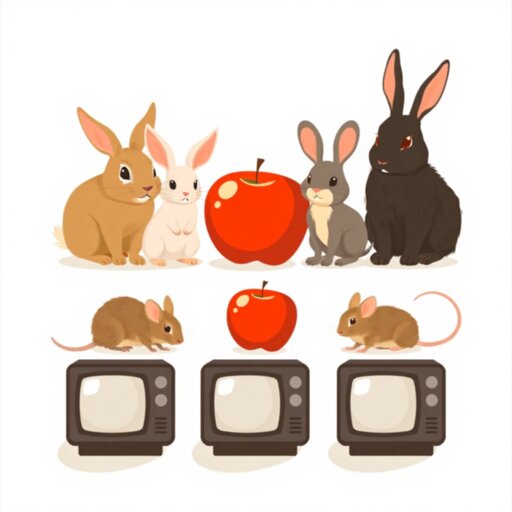} &
        \includegraphics[width=0.15\textwidth]{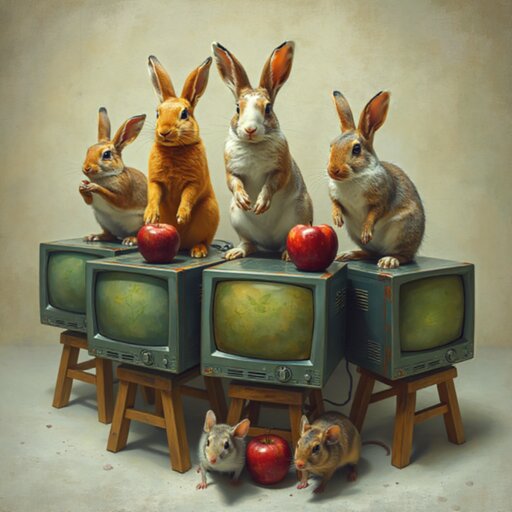} & & &
        \includegraphics[width=0.15\textwidth]{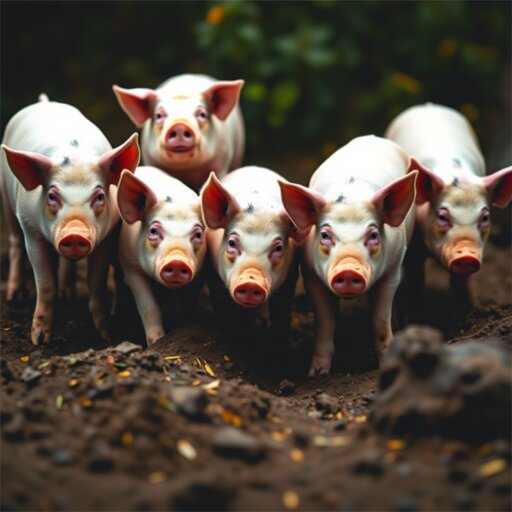} &
        \includegraphics[width=0.15\textwidth]{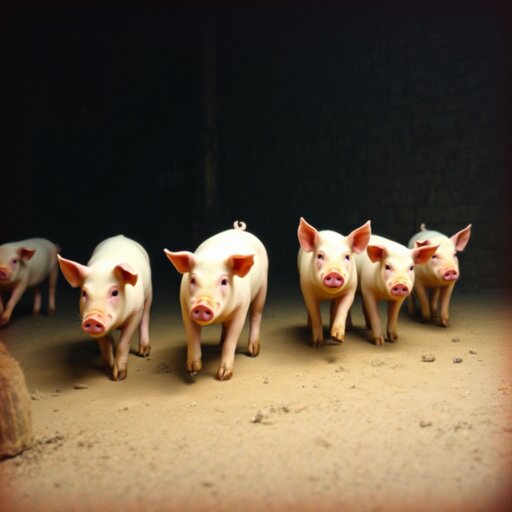} &
        \includegraphics[width=0.15\textwidth]{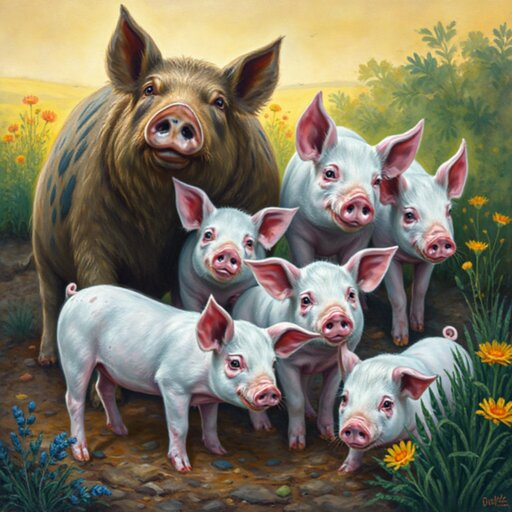} \\
        \midrule
        & \multicolumn{3}{c}{\makecell{\textit{``Two frogs in tracksuits, competing in a high jump.}\\ \textit{The frog in blue tracksuit jumps higher than the frog not in blue tracksuit.''}}} & & & \multicolumn{3}{c}{\textit{\makecell{``Three purple gemstones and one pink gemstone,\\with the pink gemstone having the smoothest looking surface.''}}} \\
        \rotatebox{90}{\makecell{\normalsize{SMC}~\cite{Kim:2025DAS}\\Composition}} &
        \includegraphics[width=0.15\textwidth]{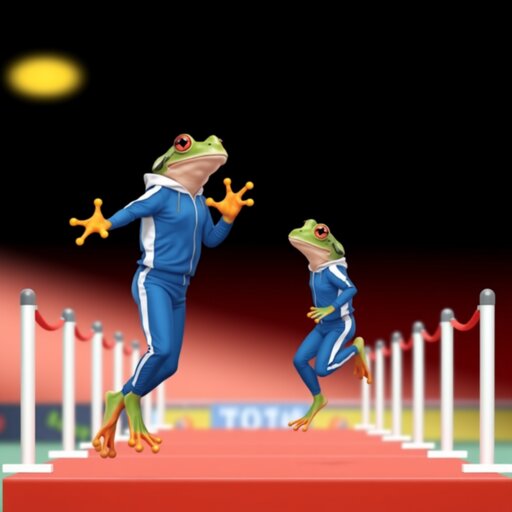} &
        \includegraphics[width=0.15\textwidth]{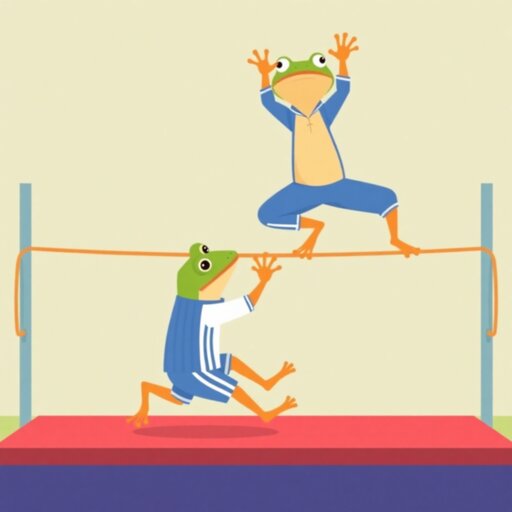} &
        \includegraphics[width=0.15\textwidth]{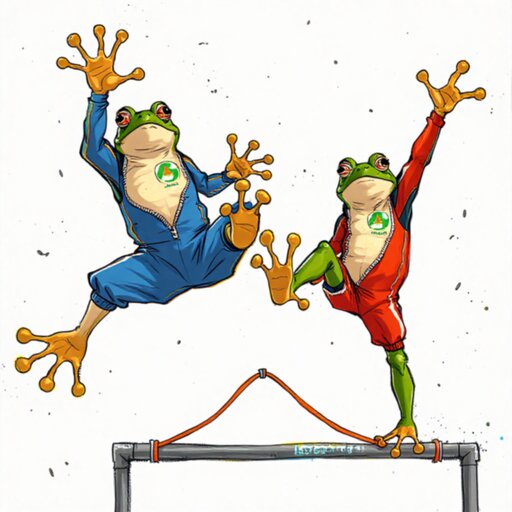} & & &
        \includegraphics[width=0.15\textwidth]{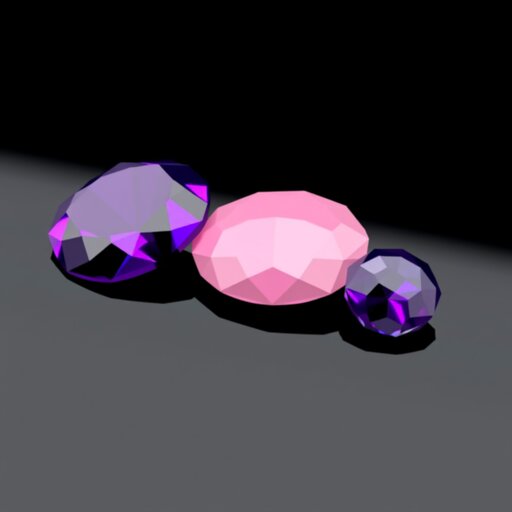} &
        \includegraphics[width=0.15\textwidth]{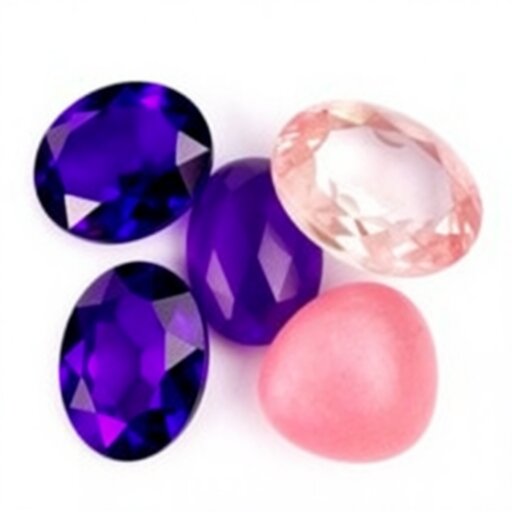} &
        \includegraphics[width=0.15\textwidth]{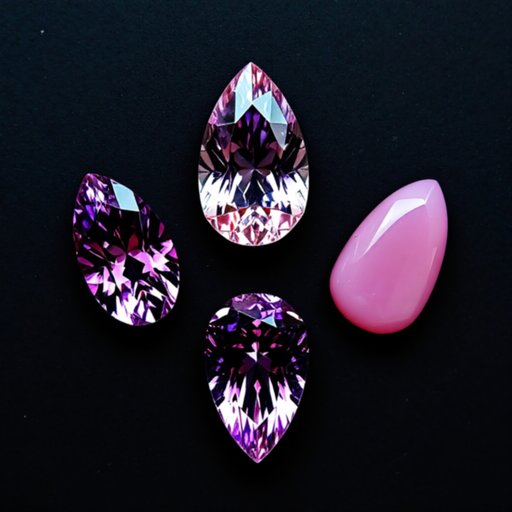} \\
        %
        \toprule
        & \scriptsize{Linear-ODE} & \scriptsize{Linear-SDE} & \scriptsize{VP-SDE} & & & \scriptsize{Linear-ODE} & \scriptsize{Linear-SDE} & \scriptsize{VP-SDE} \\
        \midrule
        & \multicolumn{3}{c}{\textit{``Seven helmets''}} & & & \multicolumn{3}{c}{\textit{``Four couches, three candles, two fish, one frog and three plates.''}} \\
        \rotatebox{90}{\makecell{\normalsize{CoDe}~\cite{Singh:2025CoDE}\\Quantity}} & \includegraphics[width=0.15\textwidth]{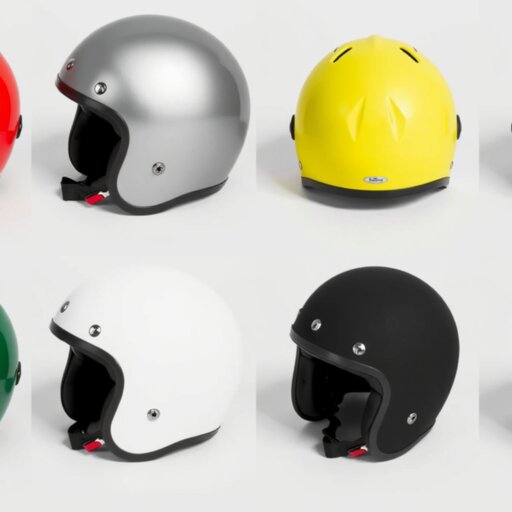} &
        \includegraphics[width=0.15\textwidth]{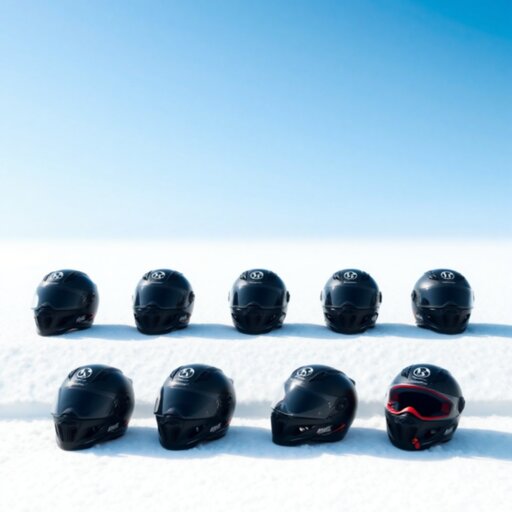} &
        \includegraphics[width=0.15\textwidth]{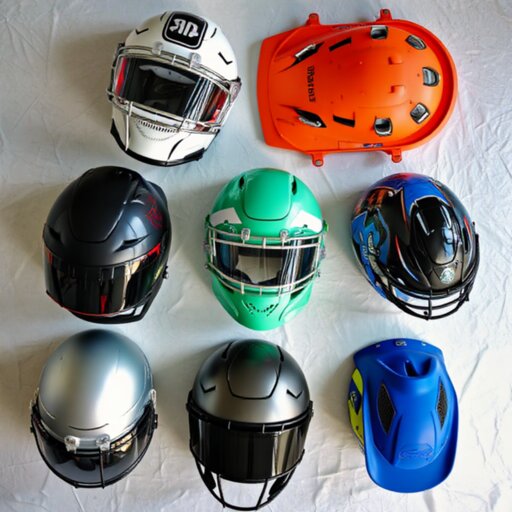} & & &
        \includegraphics[width=0.15\textwidth]{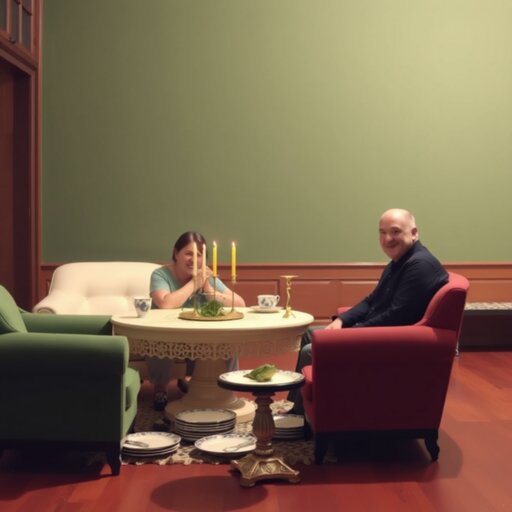} &
        \includegraphics[width=0.15\textwidth]{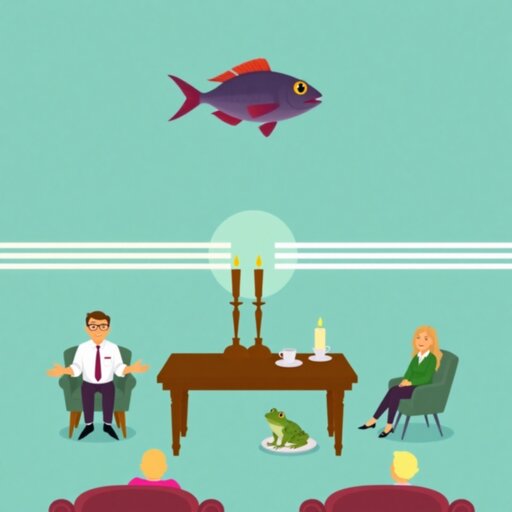} &
        \includegraphics[width=0.15\textwidth]{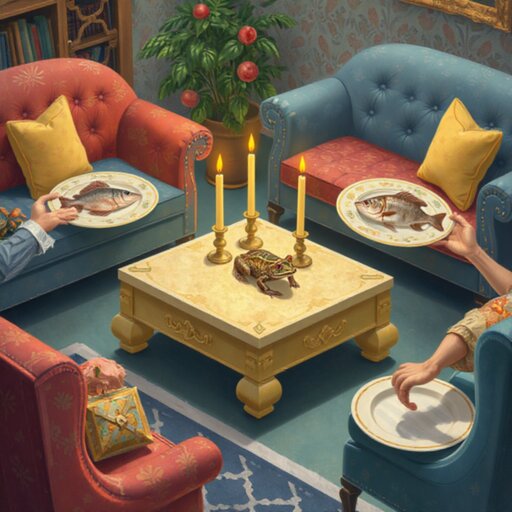} \\
        \midrule
        & \multicolumn{3}{c}{\textit{``Seven lamps.''}} & & & \multicolumn{3}{c}{\textit{``Seven desks.''}} \\
        \rotatebox{90}{\makecell{\normalsize{CoDe}~\cite{Singh:2025CoDE}\\Quantity}} & \includegraphics[width=0.15\textwidth]{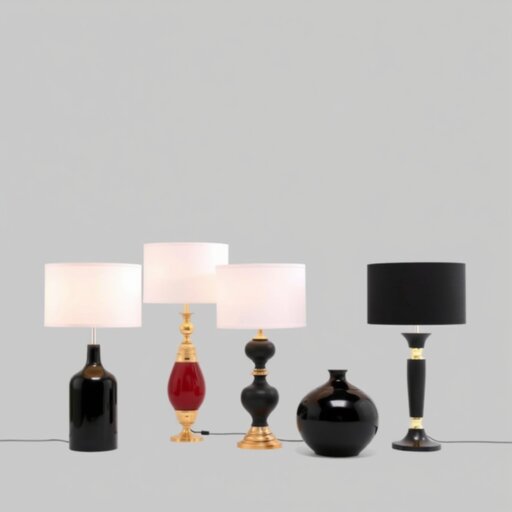} &
        \includegraphics[width=0.15\textwidth]{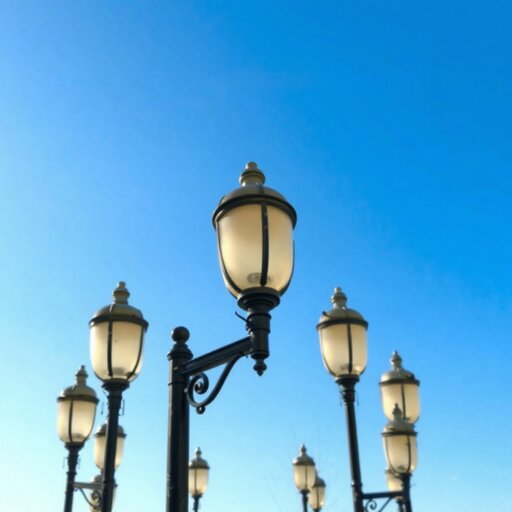} &
        \includegraphics[width=0.15\textwidth]{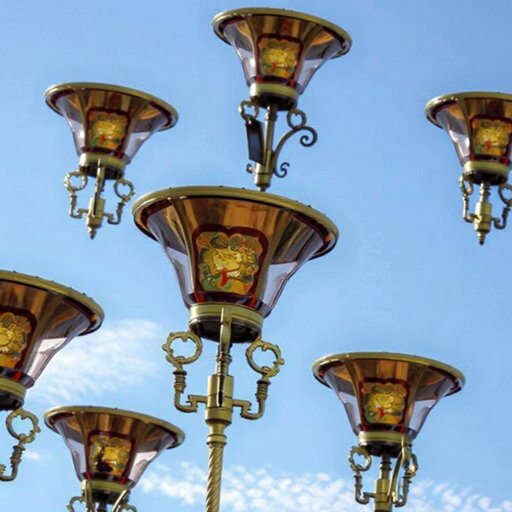} & & &
        \includegraphics[width=0.15\textwidth]{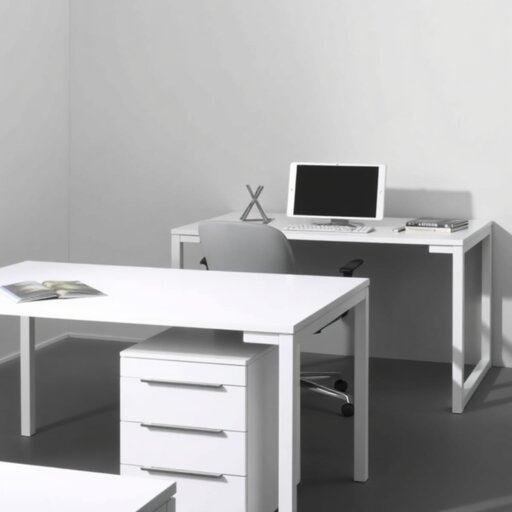} &
        \includegraphics[width=0.15\textwidth]{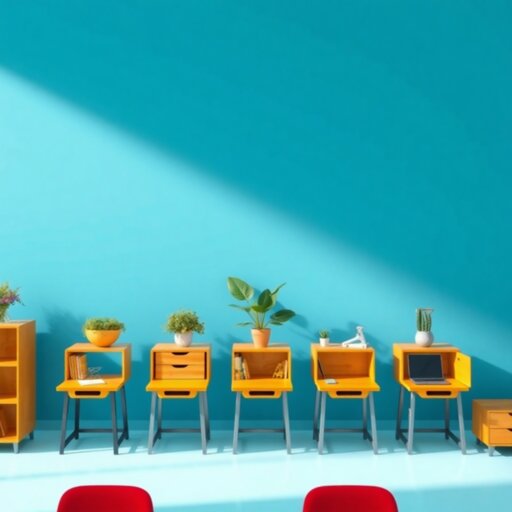} &
        \includegraphics[width=0.15\textwidth]{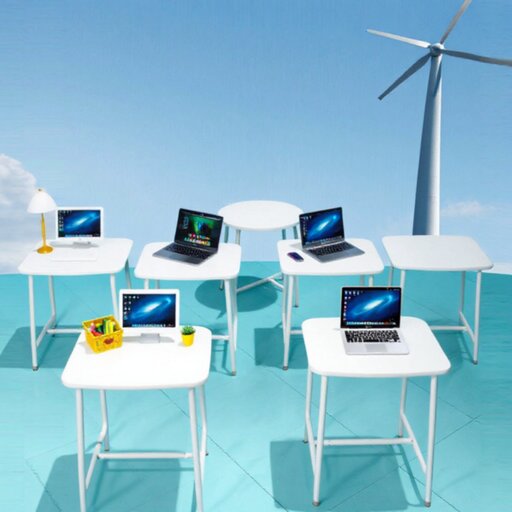} \\
        \midrule
        & \multicolumn{3}{c}{\textit{``In a collection of hats, each one is plain, but one is adorned with feathers.''}} & & & \multicolumn{3}{c}{\makecell{\textit{``A frog with a baseball cap is crouching on a lotus leaf,}\\ \textit{and another frog without a cap is crouching on a bigger lotus leaf.''}}} \\
        \rotatebox{90}{\makecell{\normalsize{CoDe}~\cite{Singh:2025CoDE}\\Composition}} & \includegraphics[width=0.15\textwidth]{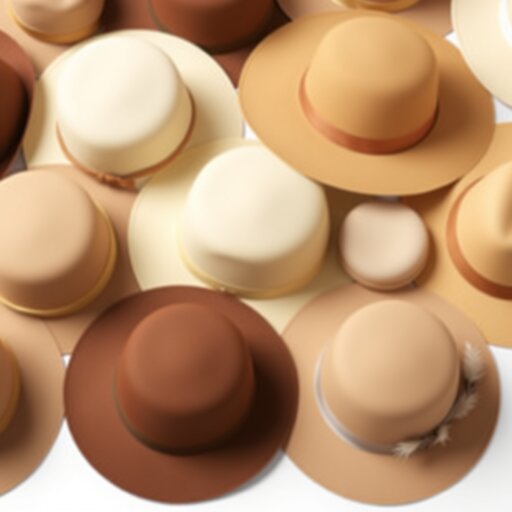} &
        \includegraphics[width=0.15\textwidth]{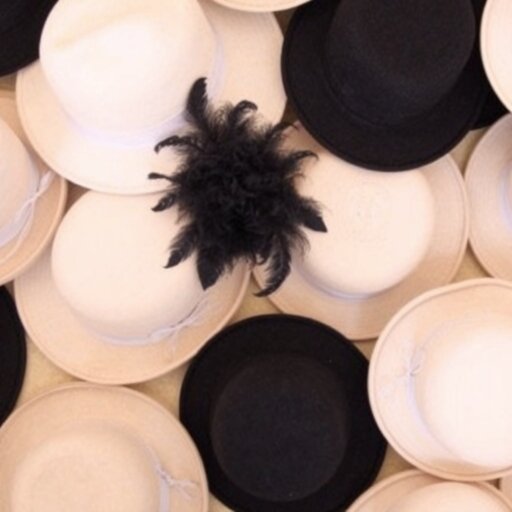} &
        \includegraphics[width=0.15\textwidth]{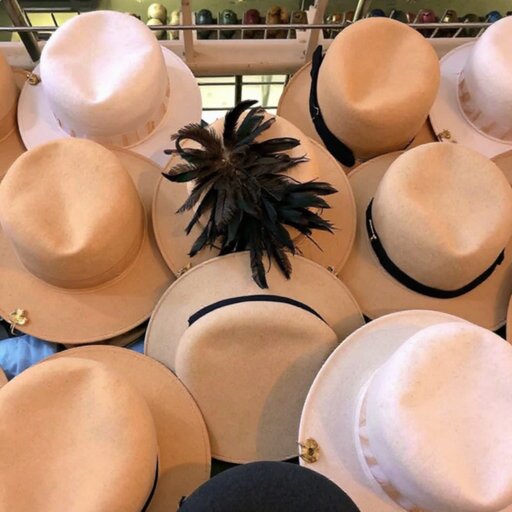} & & &
        \includegraphics[width=0.15\textwidth]{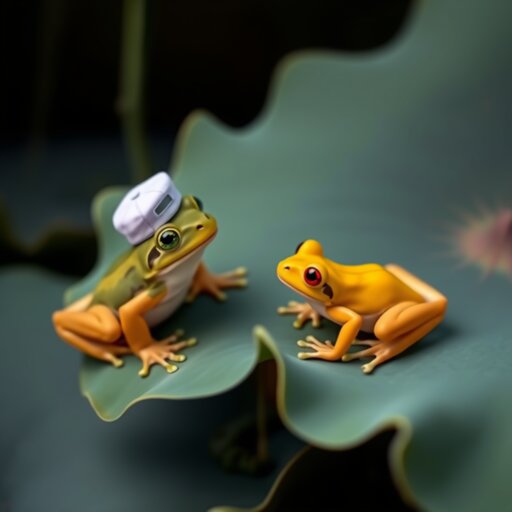} &
        \includegraphics[width=0.15\textwidth]{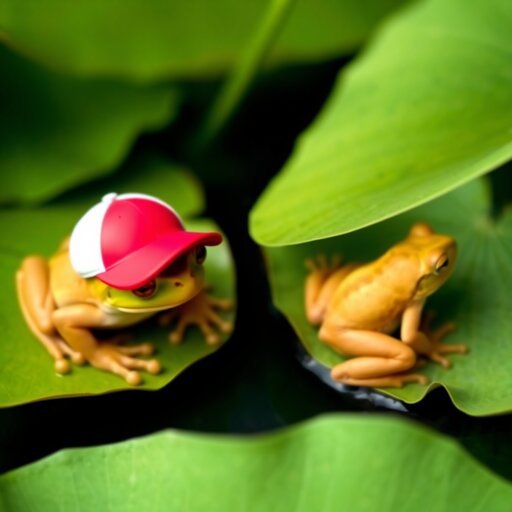} &
        \includegraphics[width=0.15\textwidth]{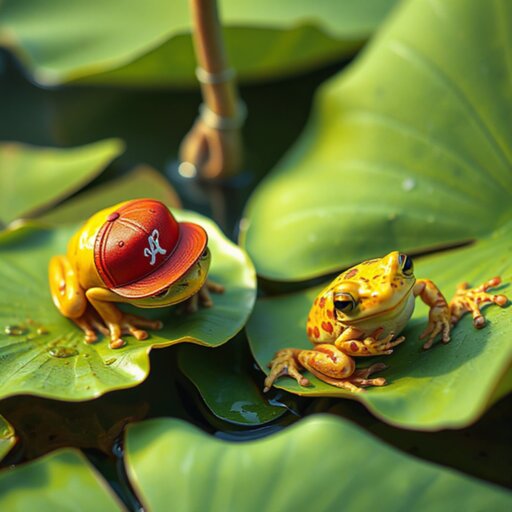} \\
        
        \bottomrule
        
    \end{tabularx}
  \caption{\textbf{Additional qualitative results of inference-time SDE conversion and interpolant conversion.} }
  \label{fig:ode-sde-vpsde-appendix1}
}
\end{figure}

\begin{figure}[ht!]
{\tiny
\setlength{\tabcolsep}{0.0em}
\def\arraystretch{0.0}
\newcolumntype{X}{>{\centering\arraybackslash}m{0.005\textwidth}}
\newcolumntype{Y}{>{\centering\arraybackslash}m{0.05\textwidth}}
\newcolumntype{Z}{>{\centering\arraybackslash}m{0.155\textwidth}}
    \begin{tabularx}{\textwidth}{Y | Z Z Z X | X Z Z Z}
        \toprule
        & \scriptsize{Linear-ODE} & \scriptsize{Linear-SDE} & \scriptsize{VP-SDE} & & & \scriptsize{Linear-ODE} & \scriptsize{Linear-SDE} & \scriptsize{VP-SDE} \\
        \midrule
        & \multicolumn{3}{c}{\textit{``Six bottles.''}} & & & \multicolumn{3}{c}{\textit{``Five hamburgers sizzled on the grill.''}} \\
        \rotatebox{90}{\makecell{\normalsize{SVDD}~\cite{Li2024:SVDD}\\Quantity}} & \includegraphics[width=0.15\textwidth]{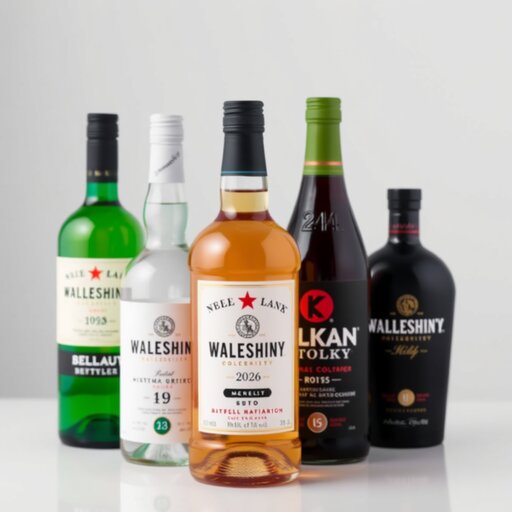} &
        \includegraphics[width=0.15\textwidth]{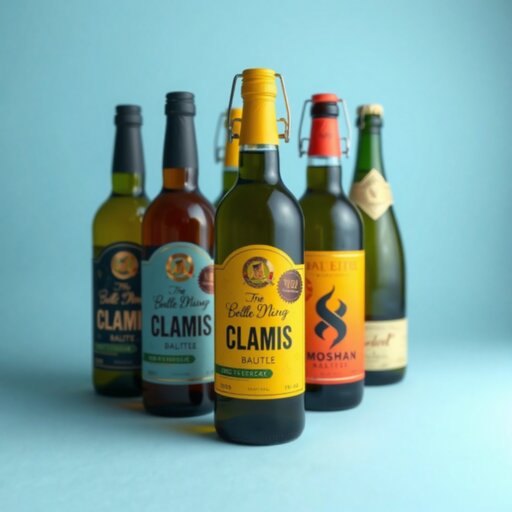} &
        \includegraphics[width=0.15\textwidth]{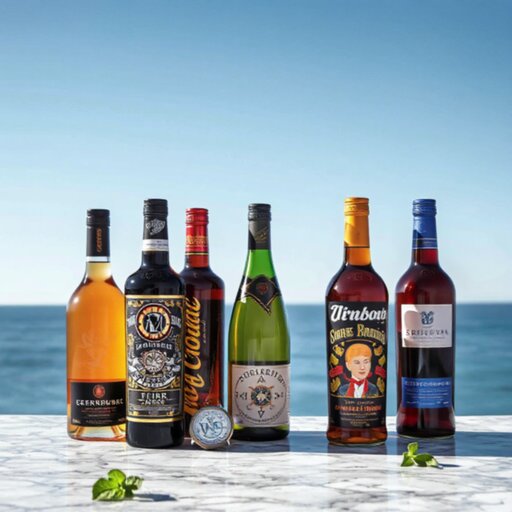} & & &
        \includegraphics[width=0.15\textwidth]{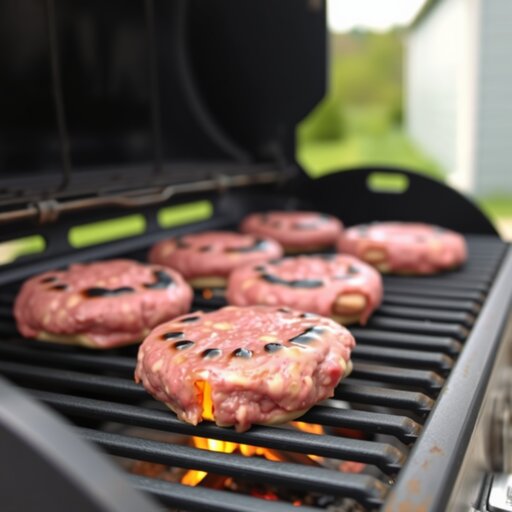} &
        \includegraphics[width=0.15\textwidth]{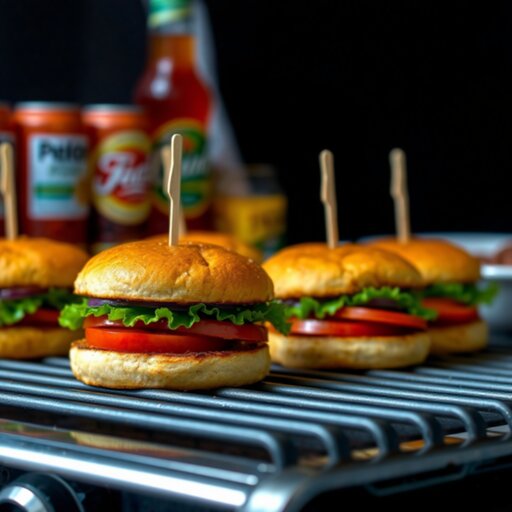} &
        \includegraphics[width=0.15\textwidth]{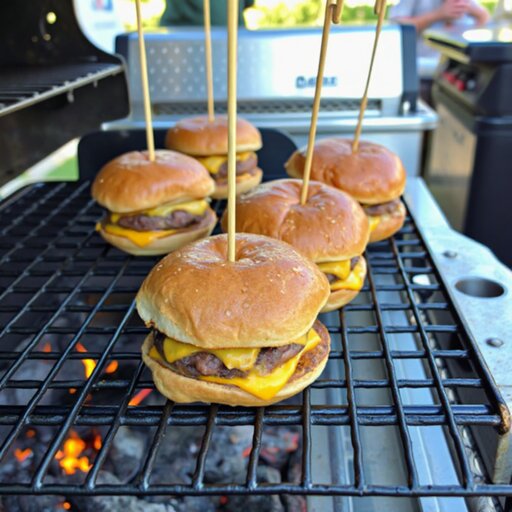} \\
        \midrule
        & \multicolumn{3}{c}{\textit{``Two men, four vases, four chickens and four ships.''}} & & & \multicolumn{3}{c}{\textit{``Six bicycles.''}} \\
        \rotatebox{90}{\makecell{\normalsize{SVDD}~\cite{Li2024:SVDD}\\Quantity}} & \includegraphics[width=0.15\textwidth]{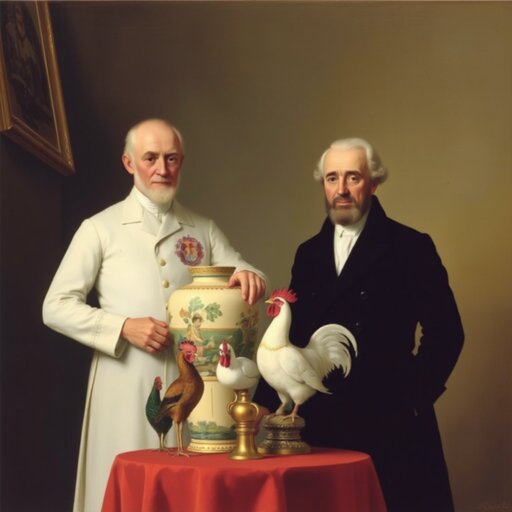} &
        \includegraphics[width=0.15\textwidth]{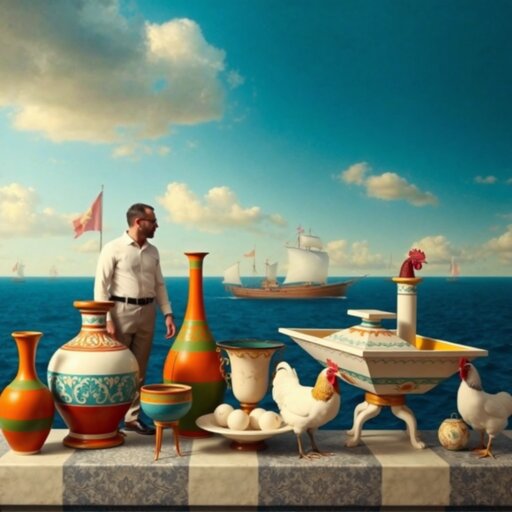} &
        \includegraphics[width=0.15\textwidth]{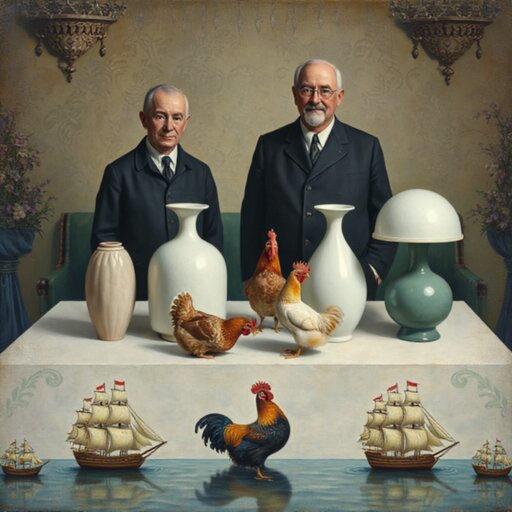} & & &
        \includegraphics[width=0.15\textwidth]{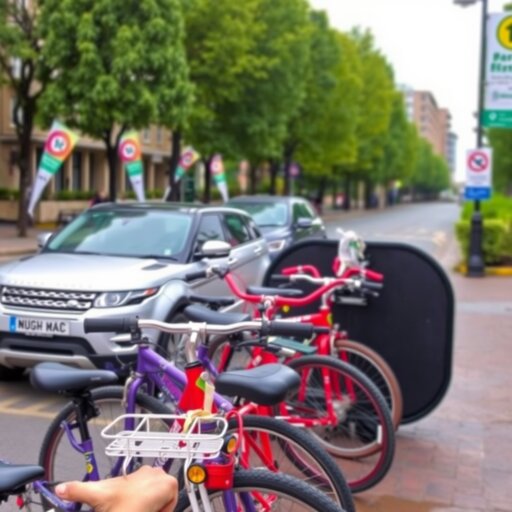} &
        \includegraphics[width=0.15\textwidth]{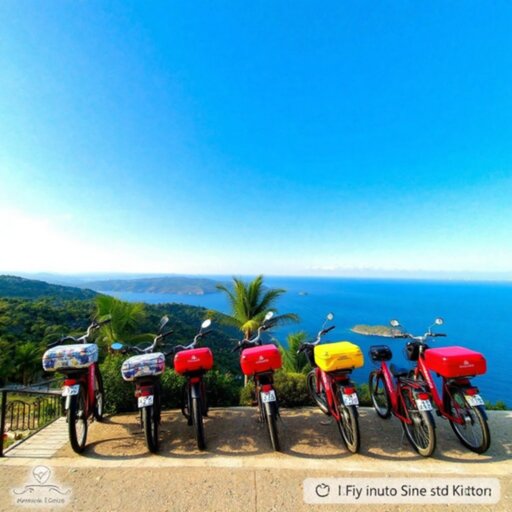} &
        \includegraphics[width=0.15\textwidth]{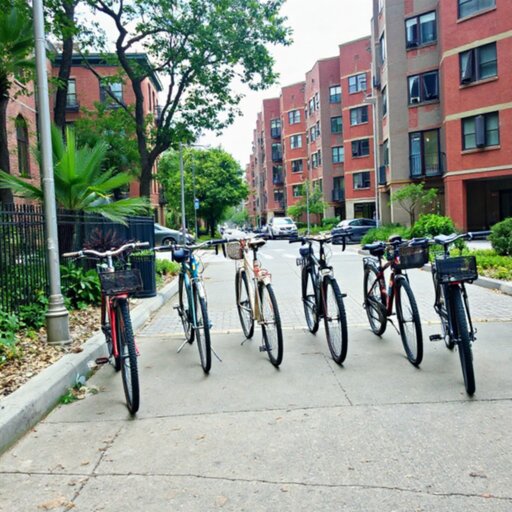} \\
        \midrule
        & \multicolumn{3}{c}{\makecell{\textit{``Two people and two bicycles in the street,}\\ \textit{the bicycle with the larger wheels belongs to the taller person.''}}} & & & \multicolumn{3}{c}{\makecell{\textit{``There are two cups on the table, the cup without coffee}\\ \textit{is on the left of the other filled with coffee.''}}} \\
        \rotatebox{90}{\makecell{\normalsize{SVDD}~\cite{Li2024:SVDD}\\Composition}} & \includegraphics[width=0.15\textwidth]{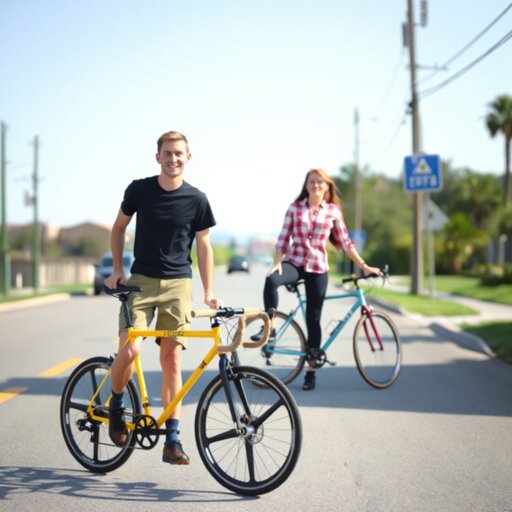} &
        \includegraphics[width=0.15\textwidth]{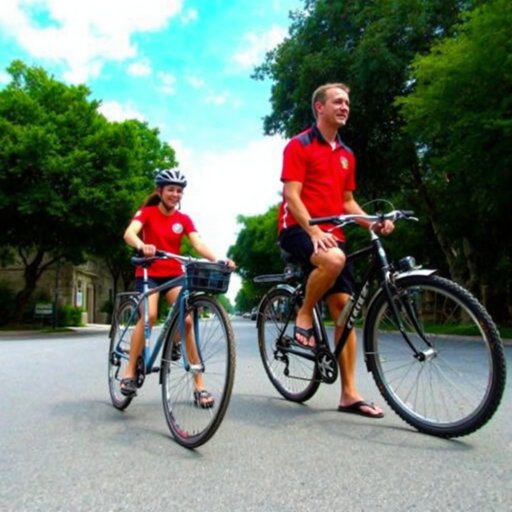} &
        \includegraphics[width=0.15\textwidth]{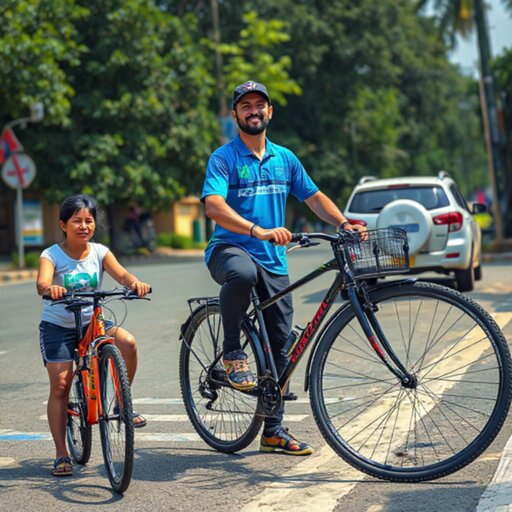} & & &
        \includegraphics[width=0.15\textwidth]{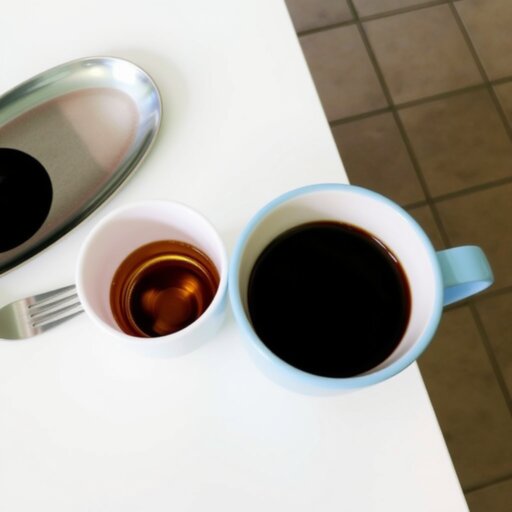} &
        \includegraphics[width=0.15\textwidth]{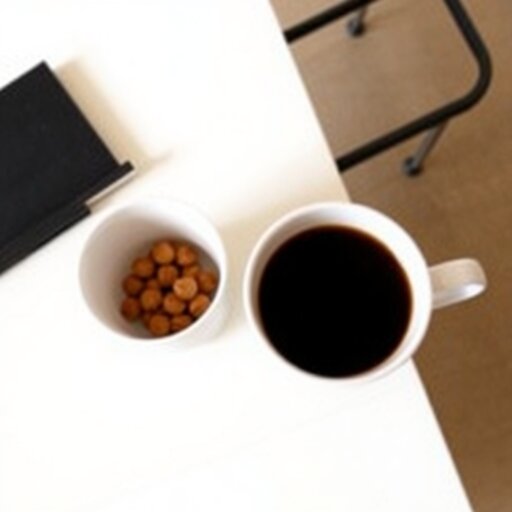} &
        \includegraphics[width=0.15\textwidth]{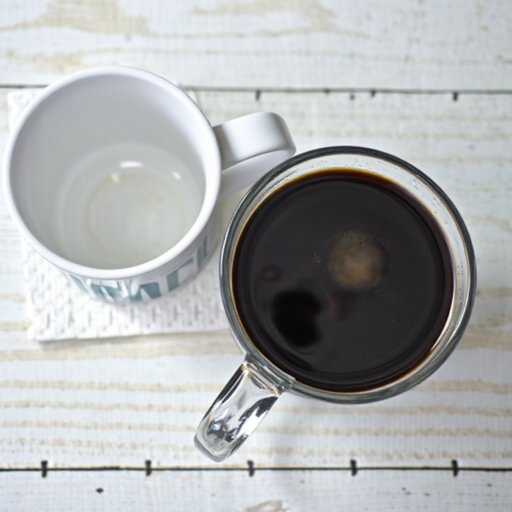} \\
        
        \toprule
        & \scriptsize{Linear-ODE} & \scriptsize{Linear-SDE} & \scriptsize{VP-SDE} & & & \scriptsize{Linear-ODE} & \scriptsize{Linear-SDE} & \scriptsize{VP-SDE} \\
        \midrule
        
        & \multicolumn{3}{c}{\makecell{\textit{``Two giraffes, three eggs, two breads, }\\\textit{three microwaves and four strawberries.''}}} & & & \multicolumn{3}{c}{\textit{``Four pears, four desks, three paddles and two rabbits.''}} \\
        \rotatebox{90}{\makecell{\normalsize{ {\Oursbf{}} (Ours) }\\Quantity}} &
        \includegraphics[width=0.15\textwidth]{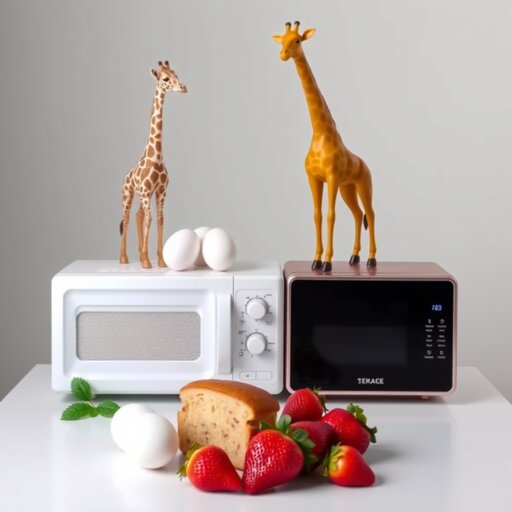}  &
        \includegraphics[width=0.15\textwidth]{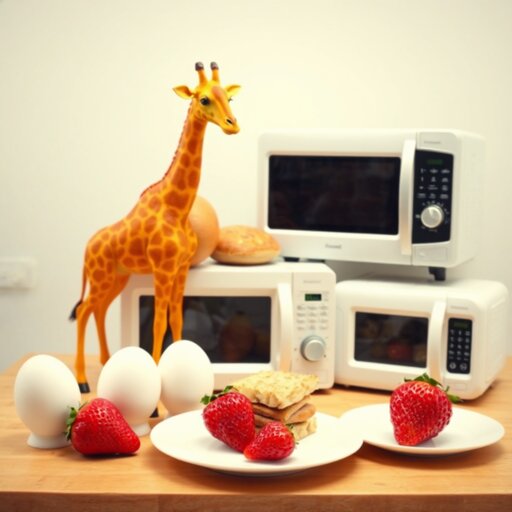} &
        \includegraphics[width=0.15\textwidth]{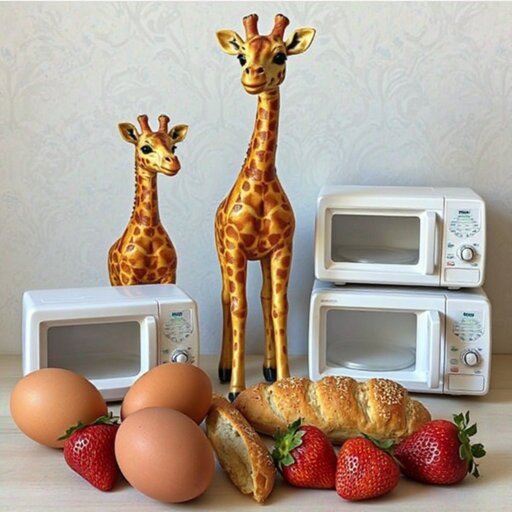} & & &
        \includegraphics[width=0.15\textwidth]{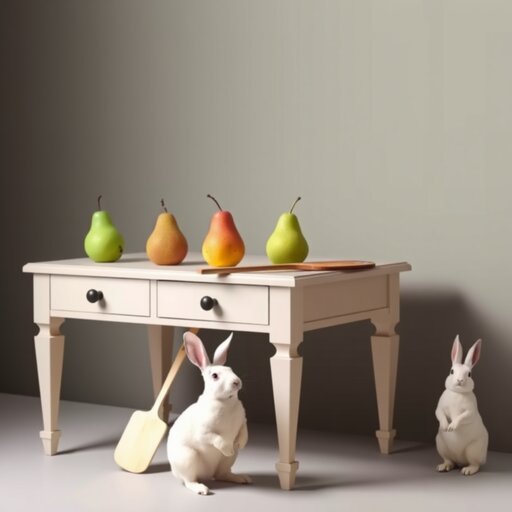}  &
        \includegraphics[width=0.15\textwidth]{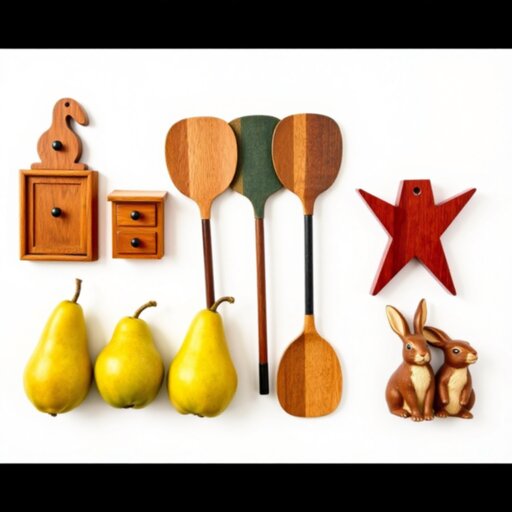} &
        \includegraphics[width=0.15\textwidth]{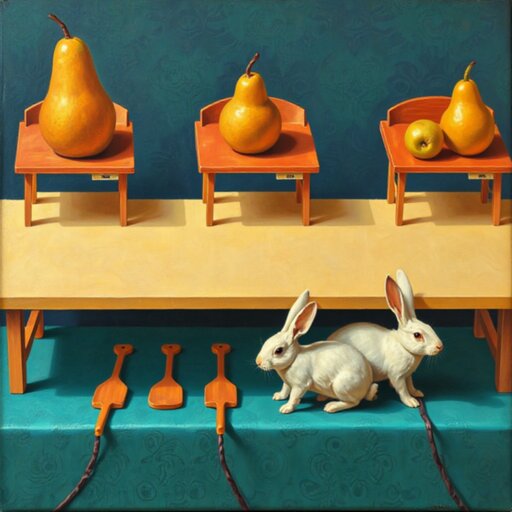} \\
        \midrule
        & \multicolumn{3}{c}{\textit{``Four balloons, one cup, four desks, two dogs and four microwaves.''}} & & & \multicolumn{3}{c}{\textit{``Seven women.''}} \\
        \rotatebox{90}{\makecell{\normalsize{ {\Oursbf{}} (Ours) }\\Quantity}} &
        \includegraphics[width=0.15\textwidth]{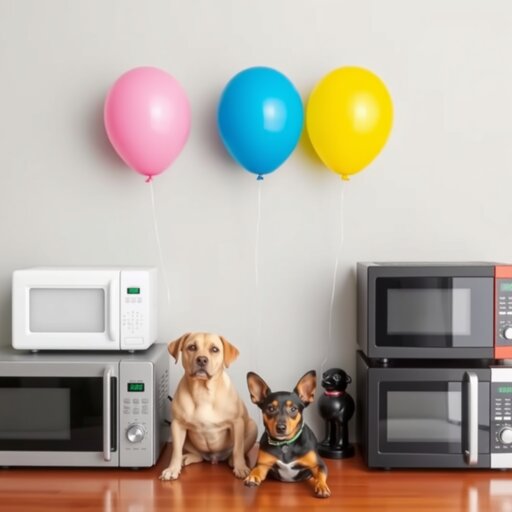} &
        \includegraphics[width=0.15\textwidth]{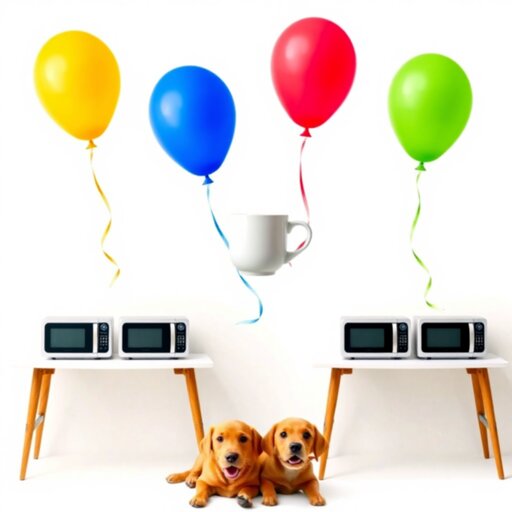} &
        \includegraphics[width=0.15\textwidth]{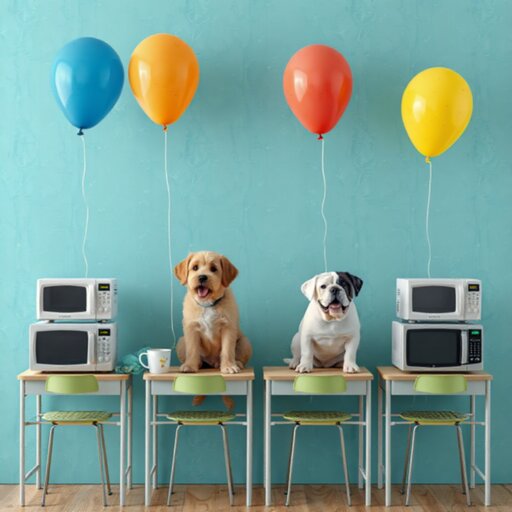} & & &
        \includegraphics[width=0.15\textwidth]{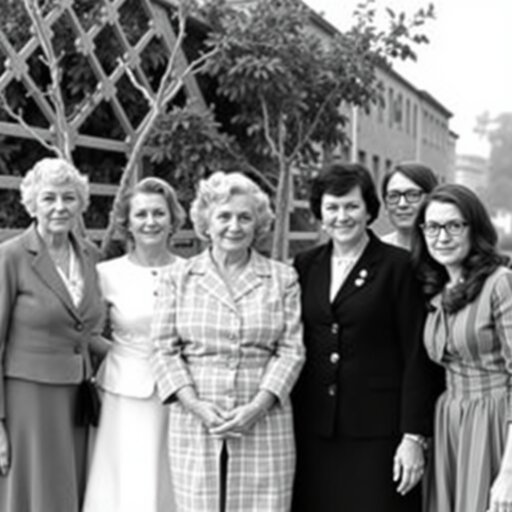}  &
        \includegraphics[width=0.15\textwidth]{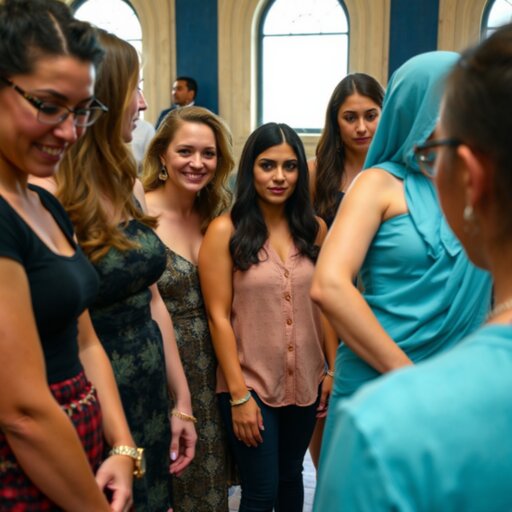} &
        \includegraphics[width=0.15\textwidth]{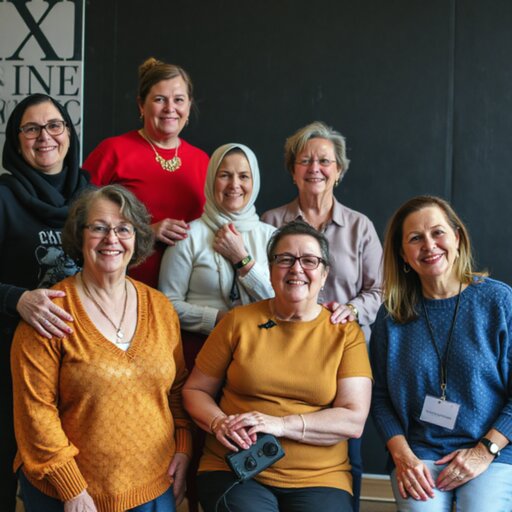} \\
        \midrule
        & \multicolumn{3}{c}{\textit{``Three sailboats on the water, each with sails of a different color.''}} & & & \multicolumn{3}{c}{\makecell{\textit{``Two birds are chasing each other in the air, with the one}\\ \textit{flying higher having a long tail and the other bird having a short tail.''}}} \\
        \rotatebox{90}{\makecell{\normalsize{ {\Oursbf{}} (Ours) }\\Composition}} &
        \includegraphics[width=0.15\textwidth]{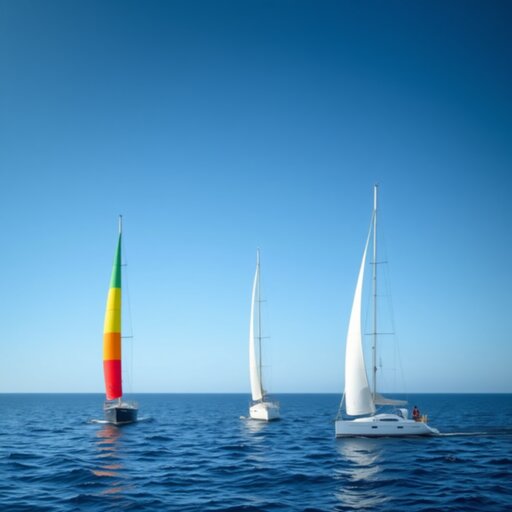} & 
        \includegraphics[width=0.15\textwidth]{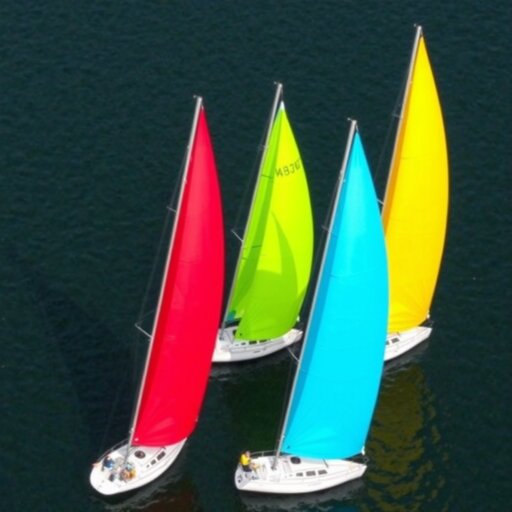} &
        \includegraphics[width=0.15\textwidth]{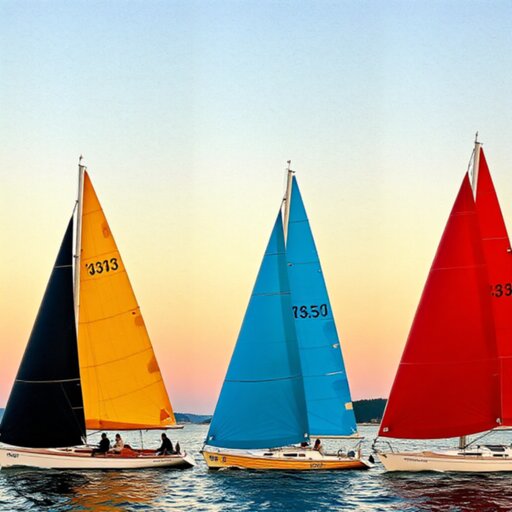} & & &
        \includegraphics[width=0.15\textwidth]{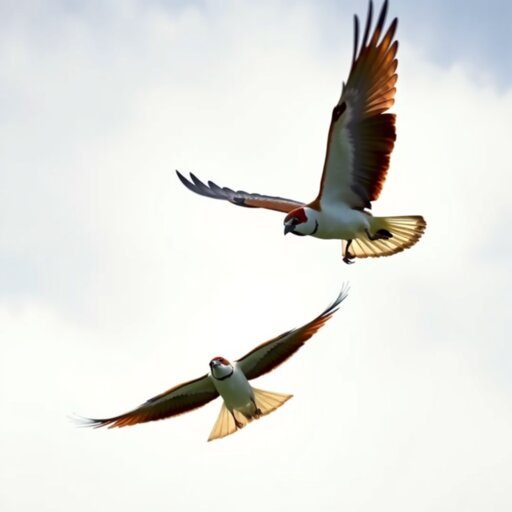} & 
        \includegraphics[width=0.15\textwidth]{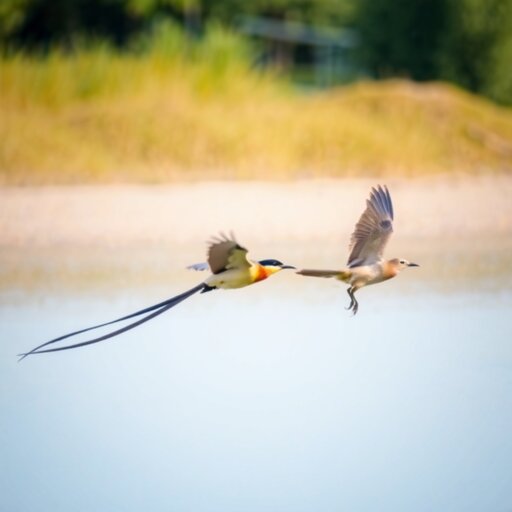} &
        \includegraphics[width=0.15\textwidth]{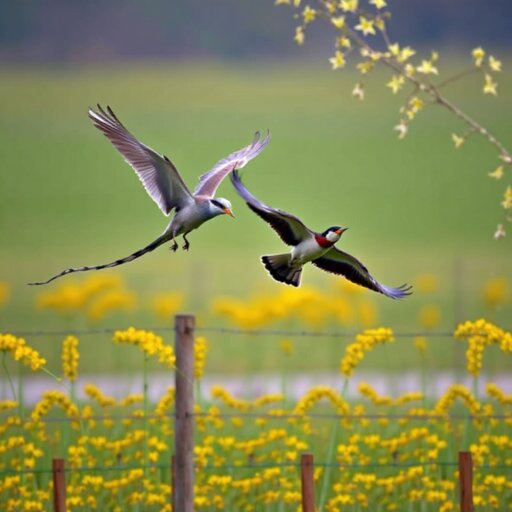} \\
        
        \bottomrule
        
    \end{tabularx}
  \caption{\textbf{Additional qualitative results of inference-time SDE conversion and interpolant conversion.} }
  \label{fig:ode-sde-vpsde-appendix2}
}
\end{figure}

\clearpage
\newpage
\subsection{Comparisons of Inference-Time Scaling}
\label{sec:appendix_quali_search_alg}
\begin{figure*}[h!]
{\small
\setlength{\tabcolsep}{0.0em}
\def\arraystretch{0.0}
\newcolumntype{Z}{>{\centering\arraybackslash}m{0.166\textwidth}}
    \begin{tabularx}{\textwidth}{Z Z Z Z Z Z}
        \toprule
        BoN & SoP~\cite{Ma2025:SoP} & SMC~\cite{Kim:2025DAS} & CoDe~\cite{Singh:2025CoDE} & SVDD~\cite{Li2024:SVDD} & {\Oursbf{}} (Ours) \\
        \midrule
        \multicolumn{6}{c}{\textit{``In a room, all the chairs are occupied except one.''}}\\
        \includegraphics[width=0.16\textwidth]{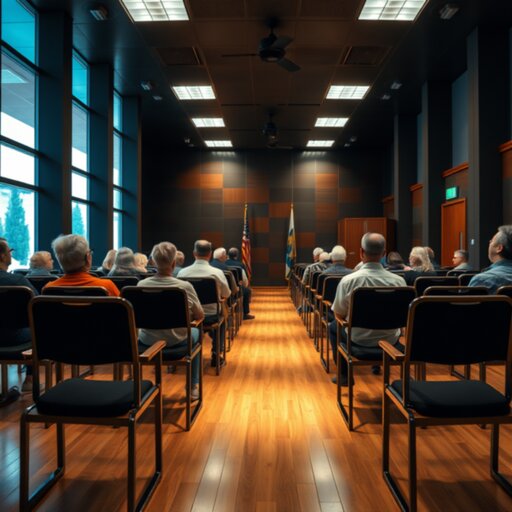} &
        \includegraphics[width=0.16\textwidth]{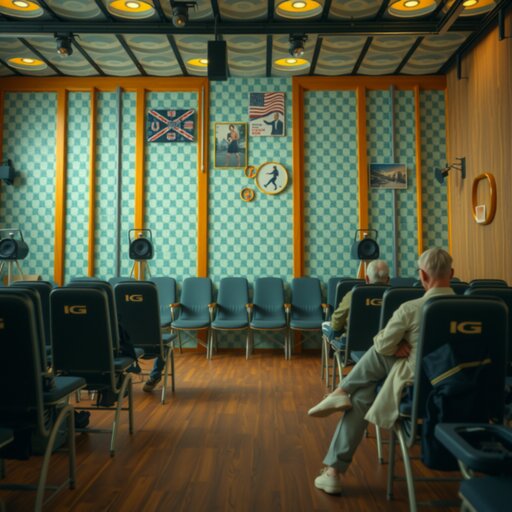} &
        \includegraphics[width=0.16\textwidth]{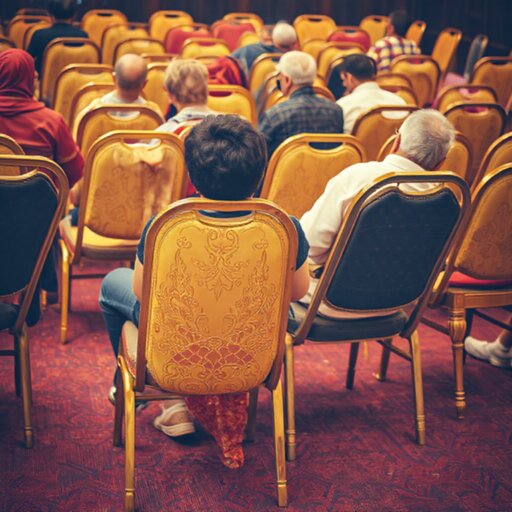} &
        \includegraphics[width=0.16\textwidth]{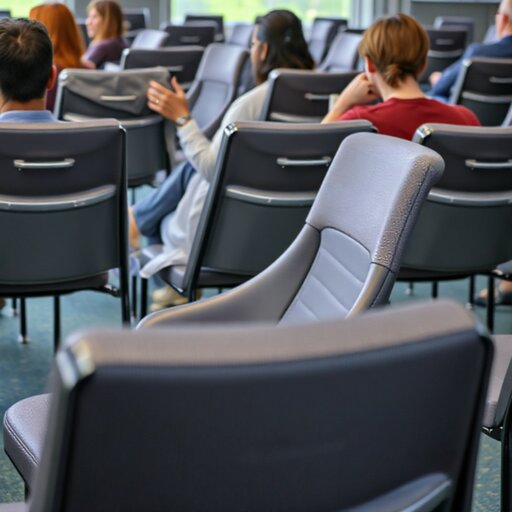} &
        \includegraphics[width=0.16\textwidth]{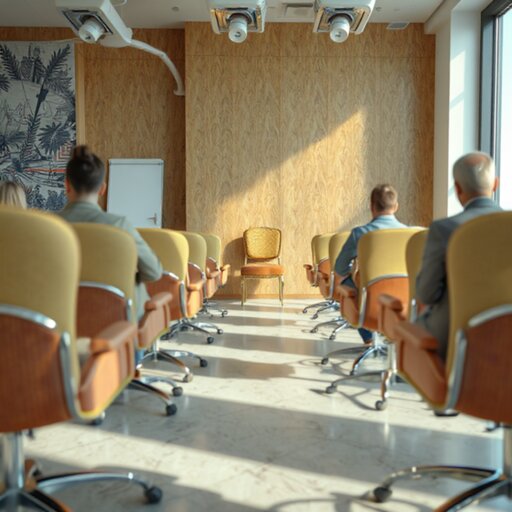} &
        \includegraphics[width=0.16\textwidth]{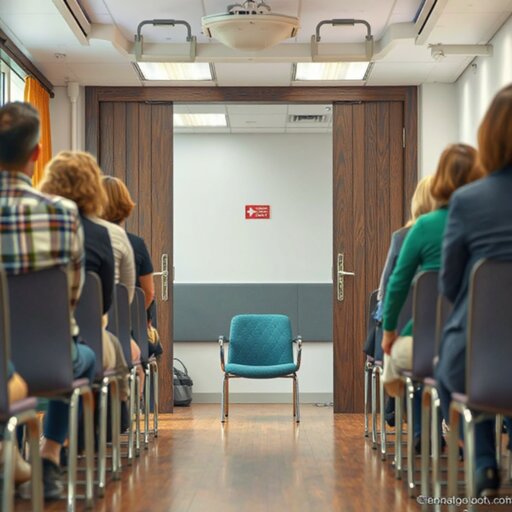} \\
        \midrule
        \multicolumn{6}{c}{\textit{\makecell{``Three mugs are placed side by side;\\the two closest to the faucet each contain a toothbrush, while the one furthest away is empty.''}}} \\
        \includegraphics[width=0.16\textwidth]{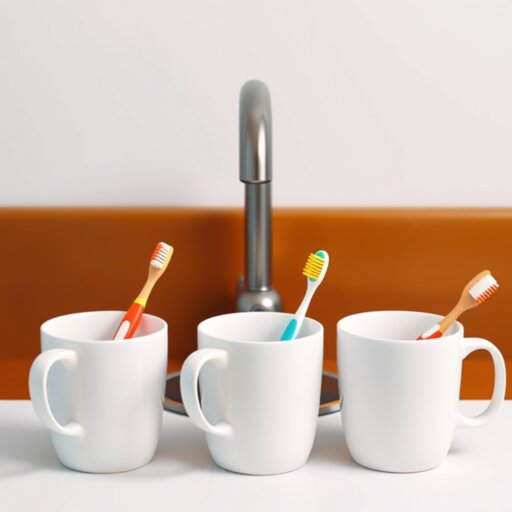} &
        \includegraphics[width=0.16\textwidth]{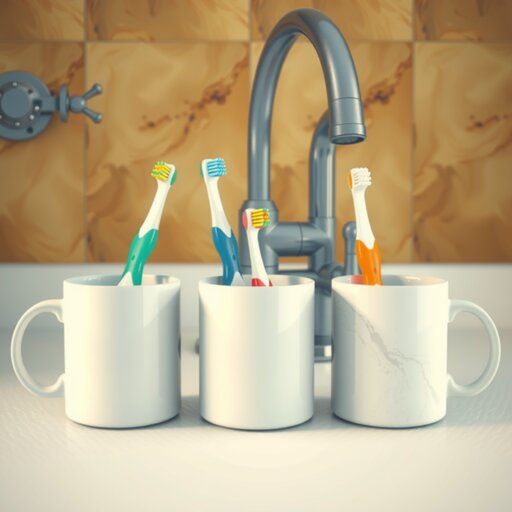} &
        \includegraphics[width=0.16\textwidth]{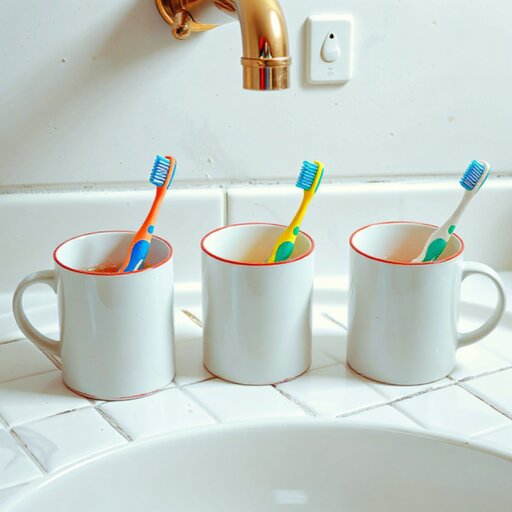} &
        \includegraphics[width=0.16\textwidth]{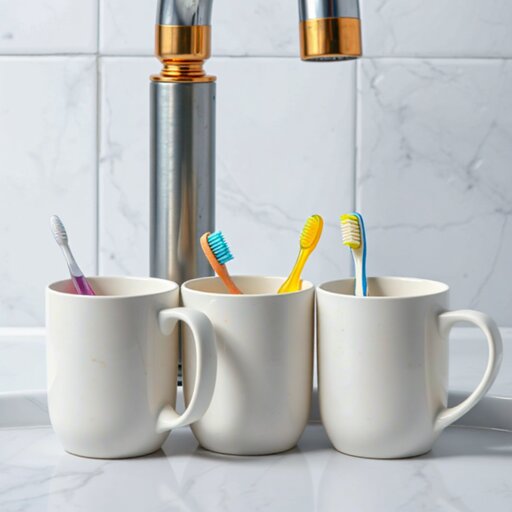} &
        \includegraphics[width=0.16\textwidth]{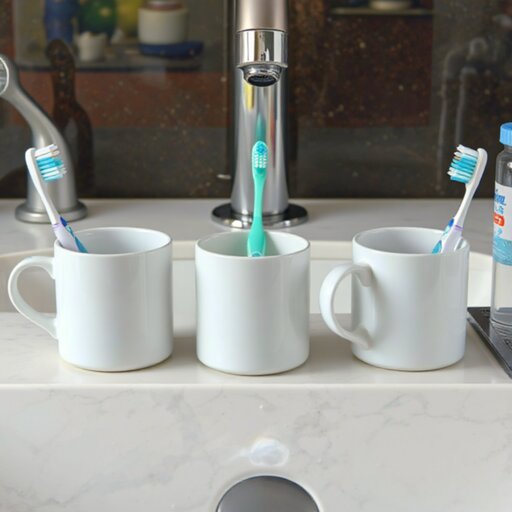} &
        \includegraphics[width=0.16\textwidth]{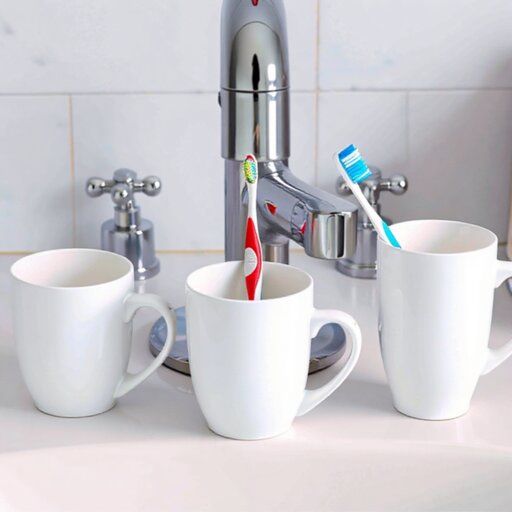} \\
        \midrule
        \multicolumn{6}{c}{\textit{``Three flowers in the meadow, with only the red rose blooming; the others are not open.''}}\\
        \includegraphics[width=0.16\textwidth]{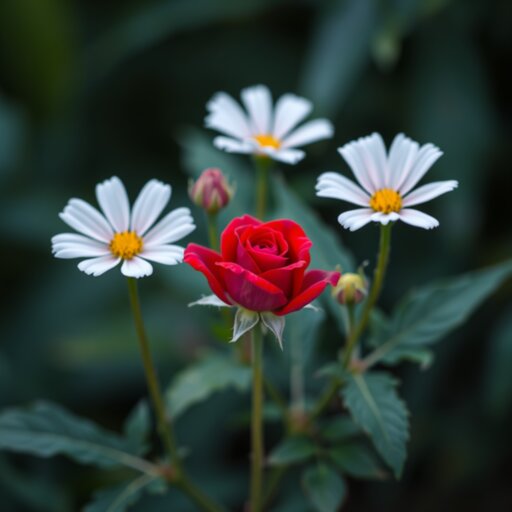} &
        \includegraphics[width=0.16\textwidth]{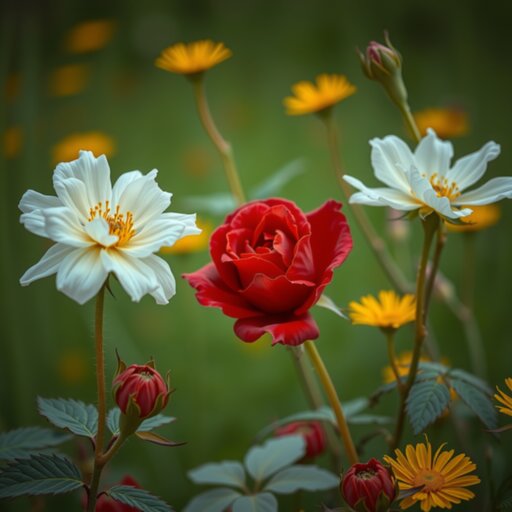} &
        \includegraphics[width=0.16\textwidth]{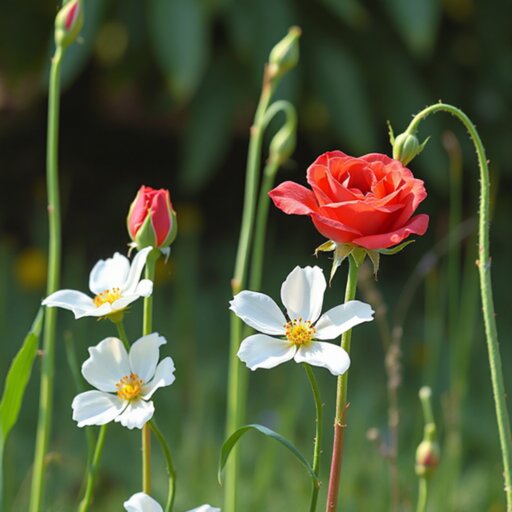} &
        \includegraphics[width=0.16\textwidth]{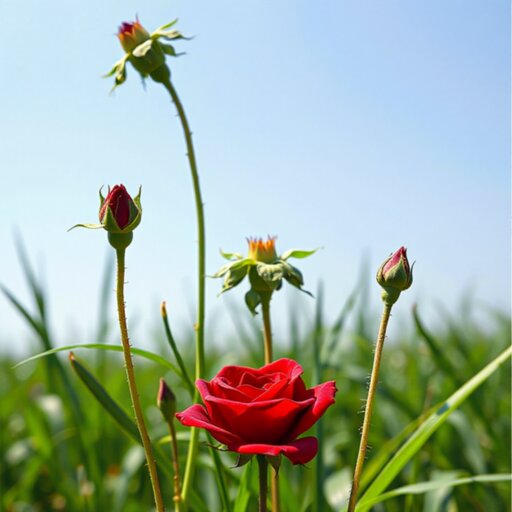} &
        \includegraphics[width=0.16\textwidth]{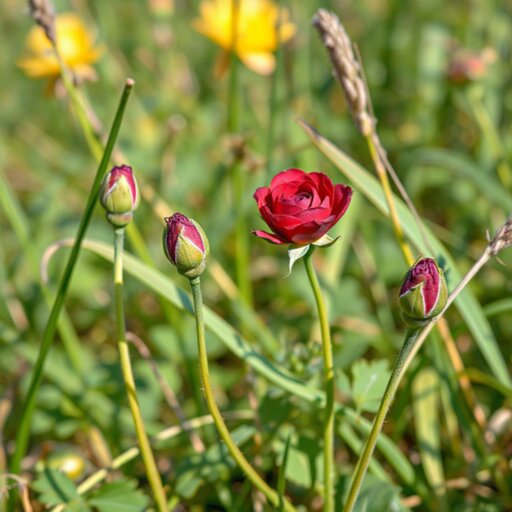} &
        \includegraphics[width=0.16\textwidth]{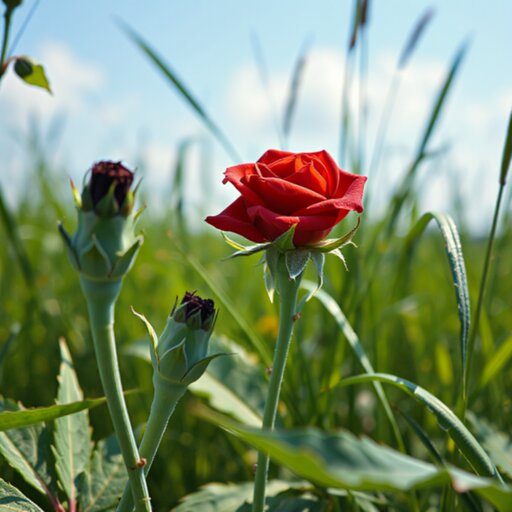} \\
        \midrule
        \multicolumn{6}{c}{\textit{``In a pack of wolves, each one howls at the moon, but one remains silent.''}}\\
         \includegraphics[width=0.16\textwidth]{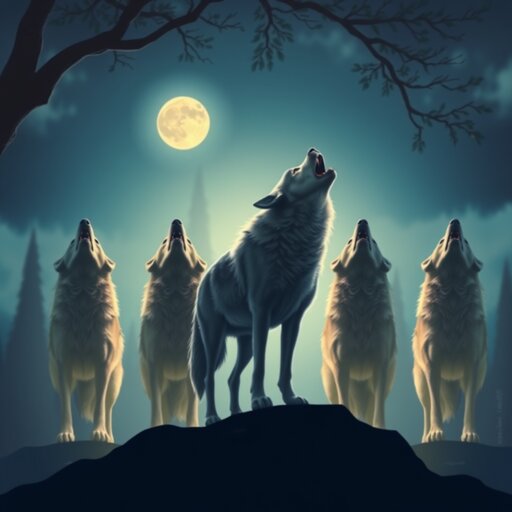} &
        \includegraphics[width=0.16\textwidth]{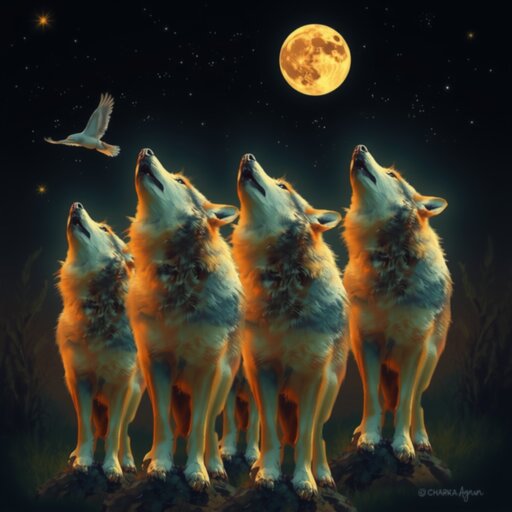} &
        \includegraphics[width=0.16\textwidth]{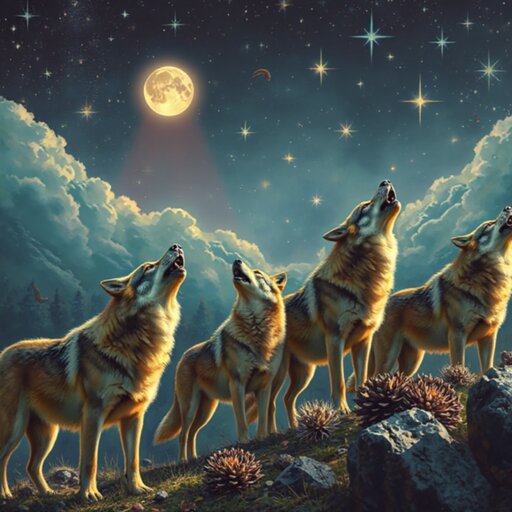} &
        \includegraphics[width=0.16\textwidth]{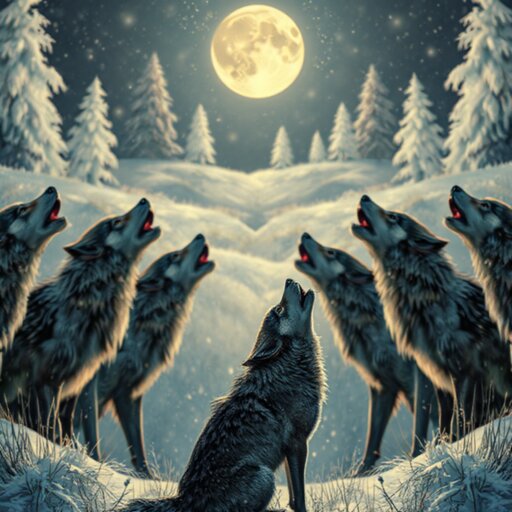} &
        \includegraphics[width=0.16\textwidth]{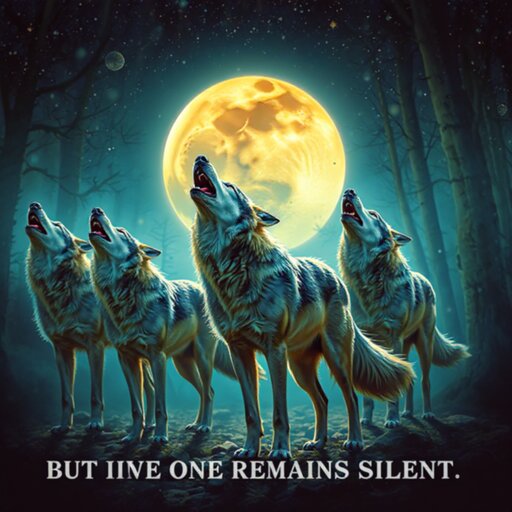} &
        \includegraphics[width=0.16\textwidth]{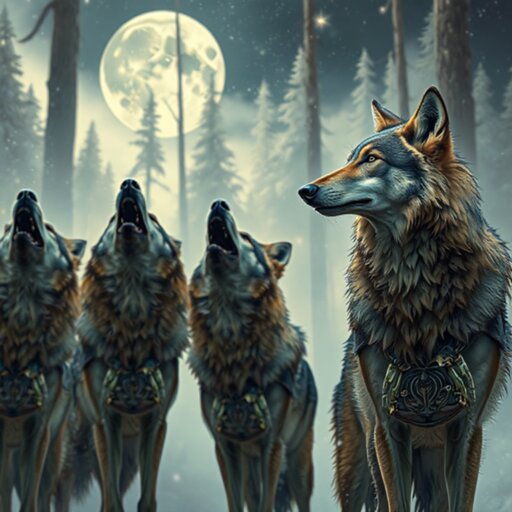} \\
        \midrule
        \multicolumn{6}{c}{\makecell{\textit{``An open biscuit tin contains three biscuits,}\\\textit{one without sultanas is square-shaped and the other two are round-shaped.''}}}\\
        \includegraphics[width=0.16\textwidth]{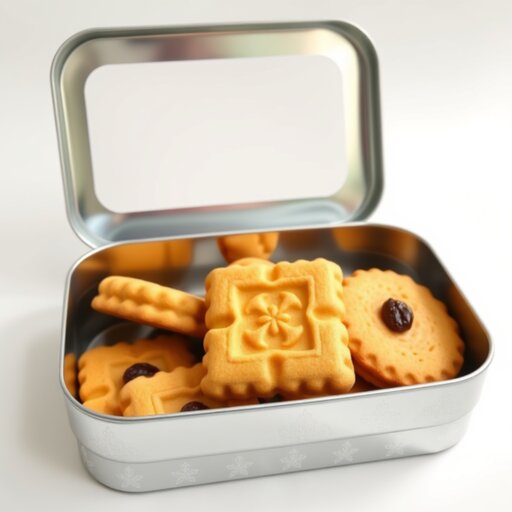} &
        \includegraphics[width=0.16\textwidth]{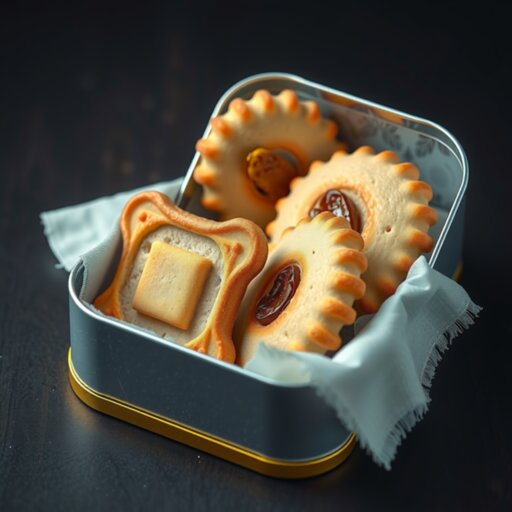} &
        \includegraphics[width=0.16\textwidth]{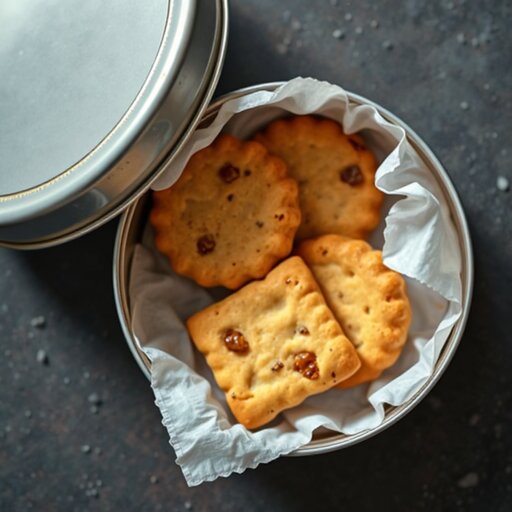} &
        \includegraphics[width=0.16\textwidth]{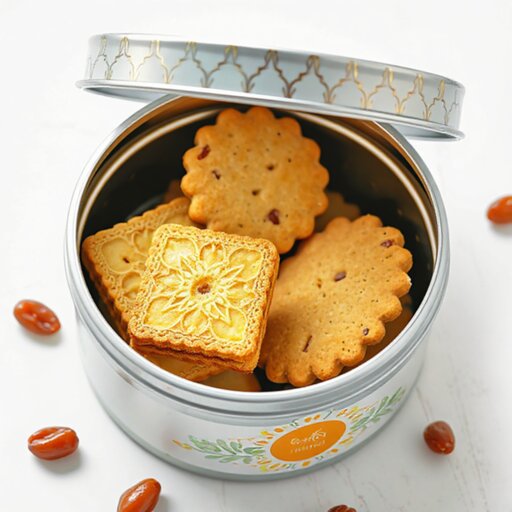} &
        \includegraphics[width=0.16\textwidth]{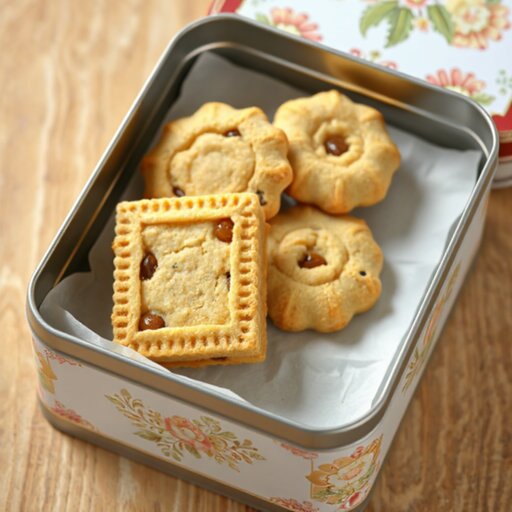} &
        \includegraphics[width=0.16\textwidth]{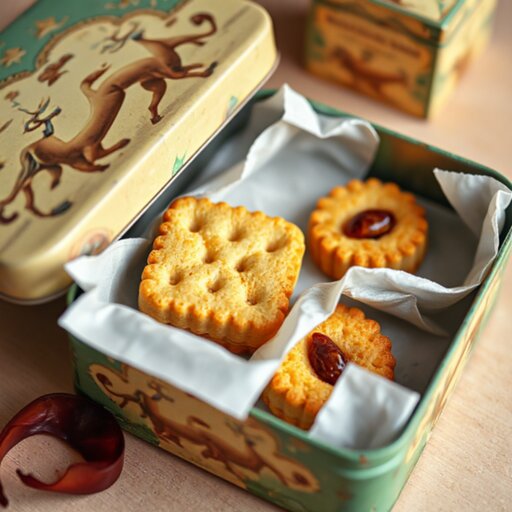} \\
        \midrule
        \multicolumn{6}{c}{\textit{``A rose that is not fully bloomed is higher than a rose that is already in bloom.''}}\\
        \includegraphics[width=0.16\textwidth]{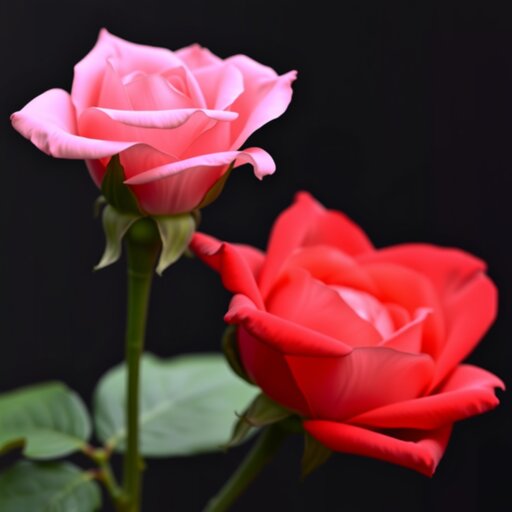} &
        \includegraphics[width=0.16\textwidth]{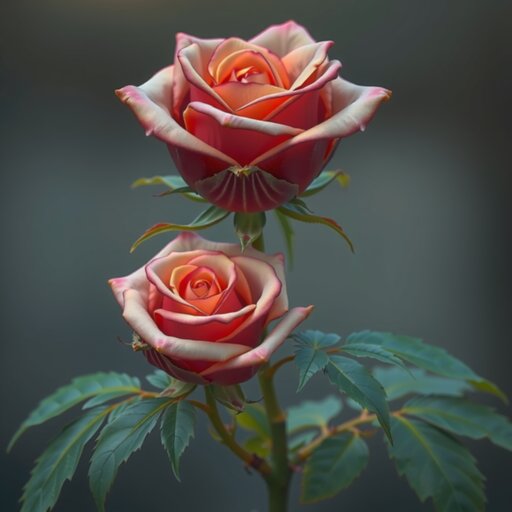} &
        \includegraphics[width=0.16\textwidth]{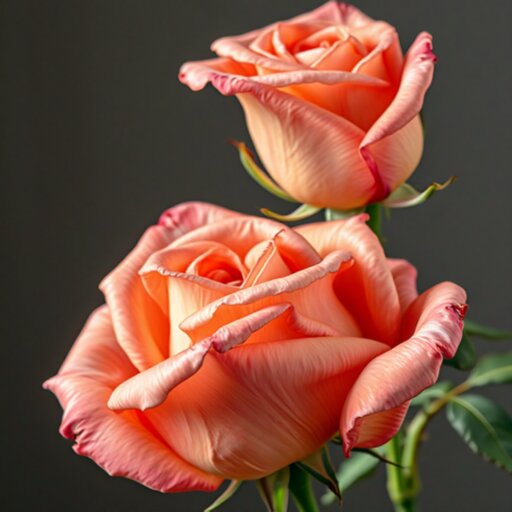} &
        \includegraphics[width=0.16\textwidth]{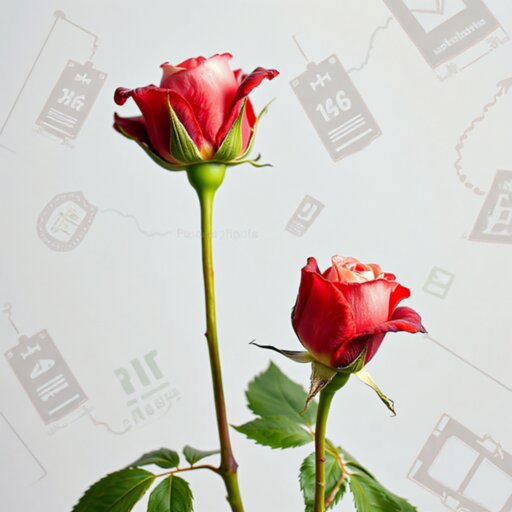} &
        \includegraphics[width=0.16\textwidth]{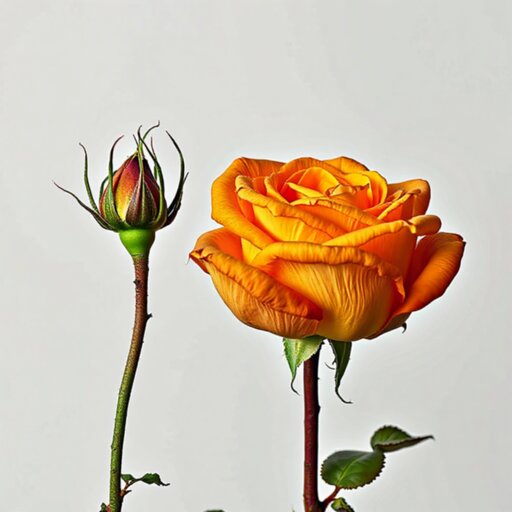} &
        \includegraphics[width=0.16\textwidth]{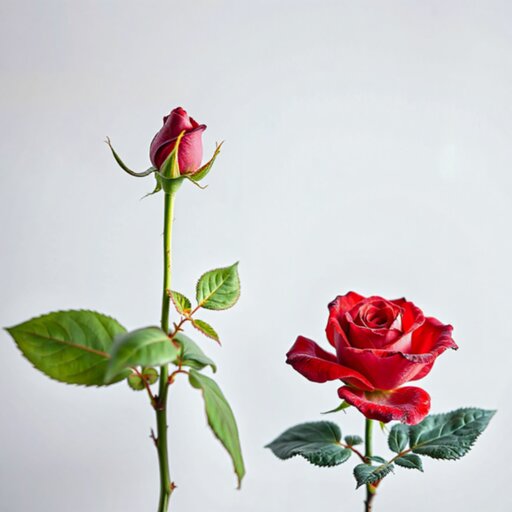} \\
        \midrule
        \multicolumn{6}{c}{\makecell{\textit{``There are two colors of pots in the flower garden;}\\\textit{all green pots have tulips in them and all yellow pots have no flowers in them.''}}}\\
        \includegraphics[width=0.16\textwidth]{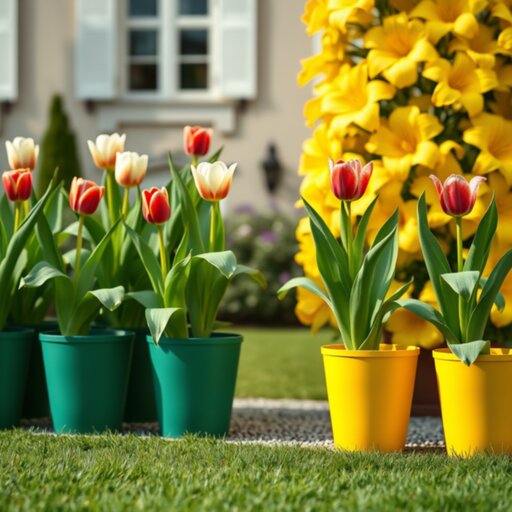} &
        \includegraphics[width=0.16\textwidth]{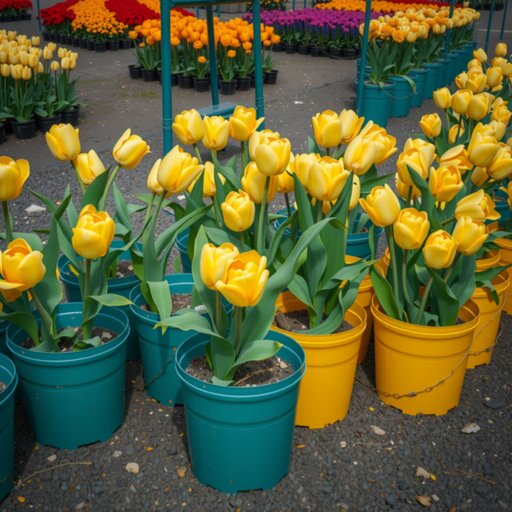} &
        \includegraphics[width=0.16\textwidth]{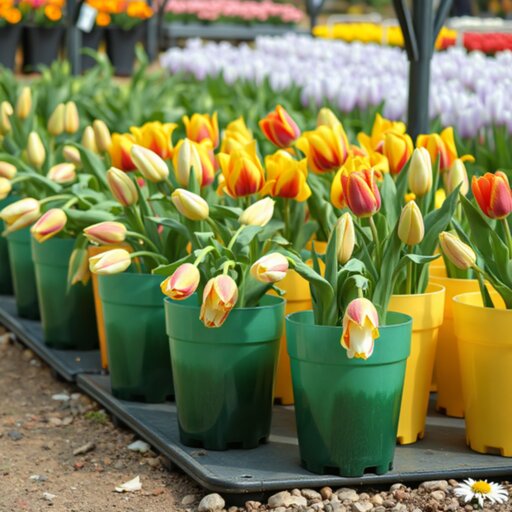} &
        \includegraphics[width=0.16\textwidth]{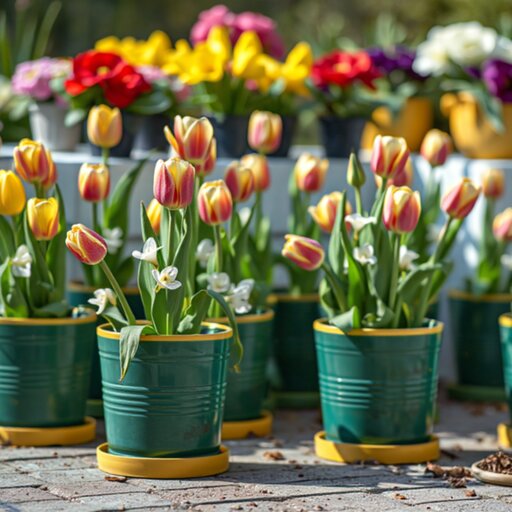} &
        \includegraphics[width=0.16\textwidth]{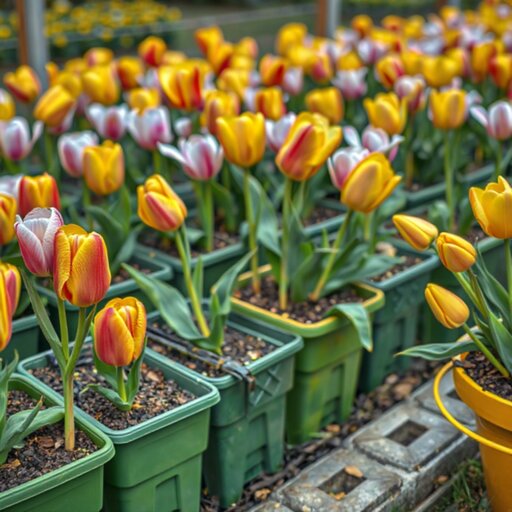} &
        \includegraphics[width=0.16\textwidth]{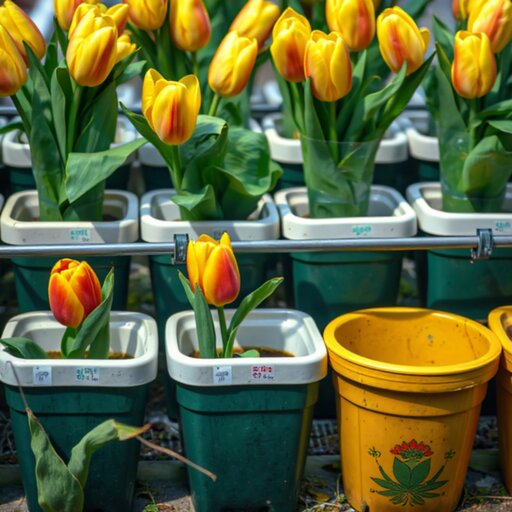} \\
        \bottomrule
    \end{tabularx}
  \caption{\textbf{Additional qualitative results of compositional text-to-image generation task}.}
  \label{fig:composition_methods_vp_appx}
}
\end{figure*}
\begin{figure*}[t!]
{\small
\setlength{\tabcolsep}{0.0em}
\def\arraystretch{0.0}
\newcolumntype{Z}{>{\centering\arraybackslash}m{0.166\textwidth}}
    \begin{tabularx}{\textwidth}{Z Z Z Z Z Z}
        \toprule
        BoN & SoP~\cite{Ma2025:SoP} & SMC~\cite{Kim:2025DAS} & CoDe~\cite{Singh:2025CoDE} & SVDD~\cite{Li2024:SVDD} & {\Oursbf{}} (Ours) \\
        \midrule
        \multicolumn{6}{c}{\textit{``Seven balloons, four bears and four swans.''}} \\
        \includegraphics[width=0.16\textwidth]{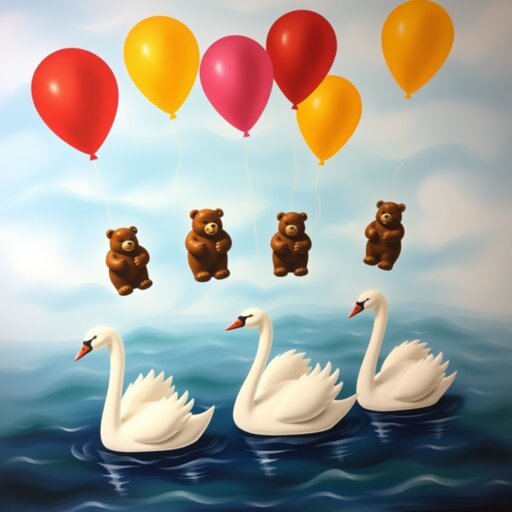} &
        \includegraphics[width=0.16\textwidth]{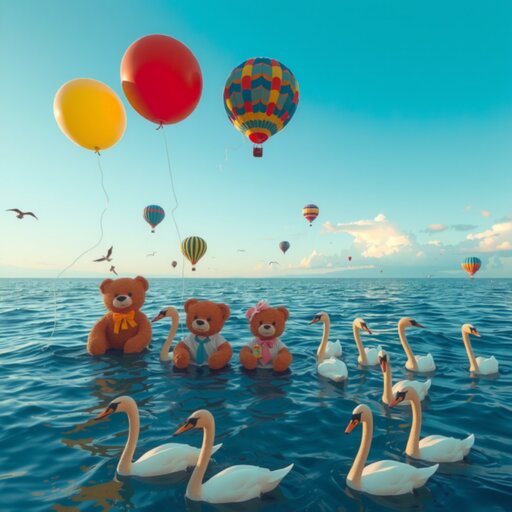} &
        \includegraphics[width=0.16\textwidth]{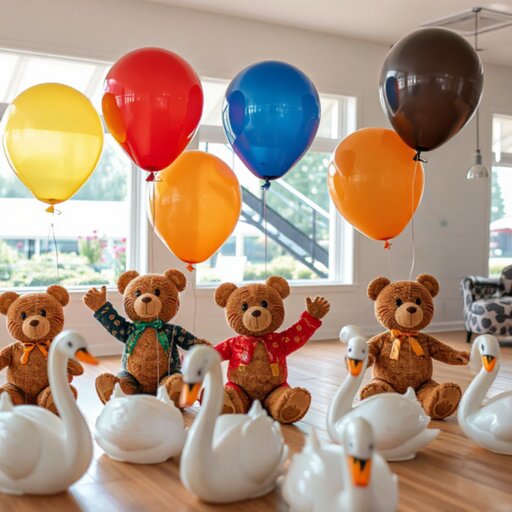} &
        \includegraphics[width=0.16\textwidth]{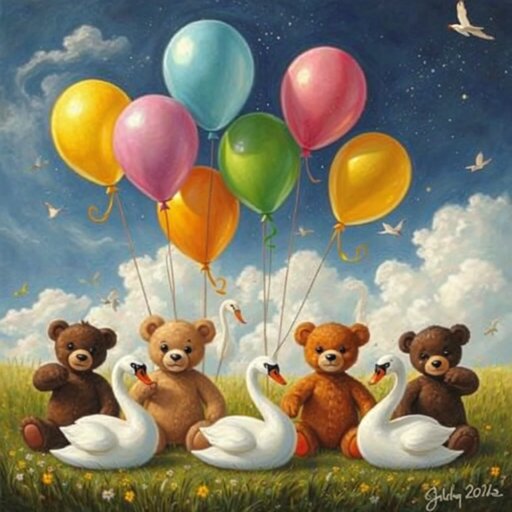} &
        \includegraphics[width=0.16\textwidth]{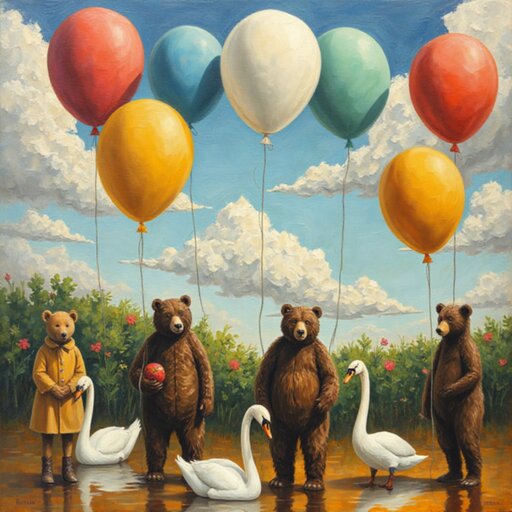} &
        \includegraphics[width=0.16\textwidth]{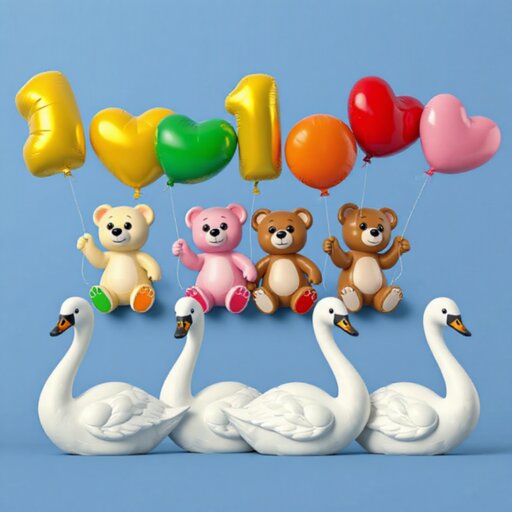} \\    
        \midrule
        \multicolumn{6}{c}{\textit{``Six horses and six deer and four balloons.''}} \\
        \includegraphics[width=0.16\textwidth]{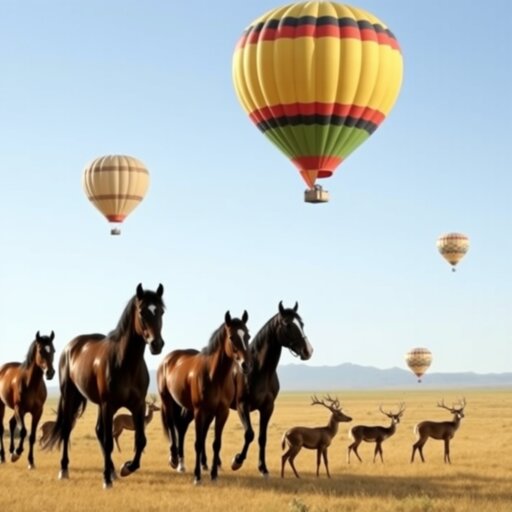} &
        \includegraphics[width=0.16\textwidth]{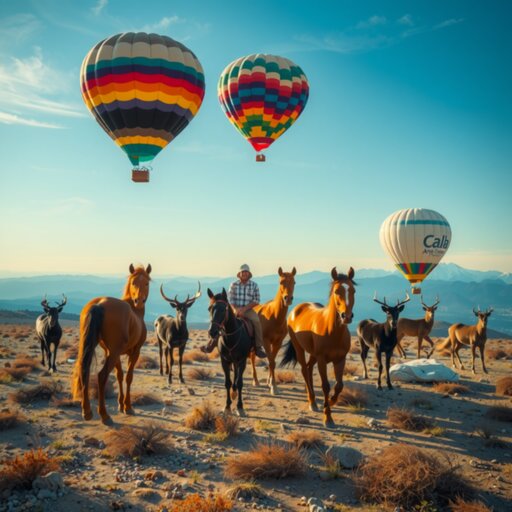} &
        \includegraphics[width=0.16\textwidth]{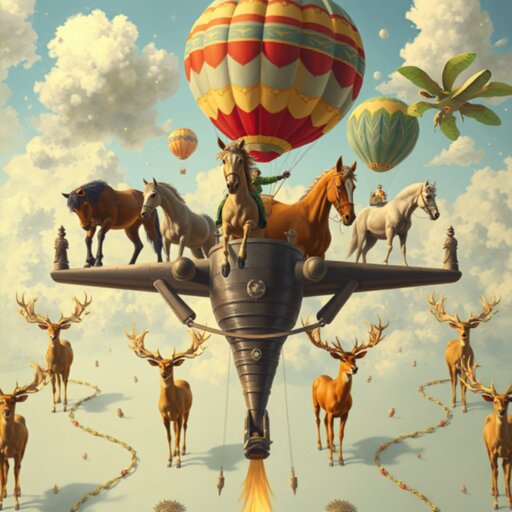} &
        \includegraphics[width=0.16\textwidth]{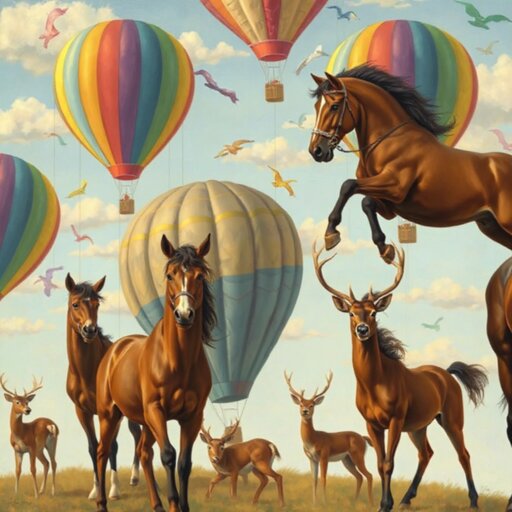} &
        \includegraphics[width=0.16\textwidth]{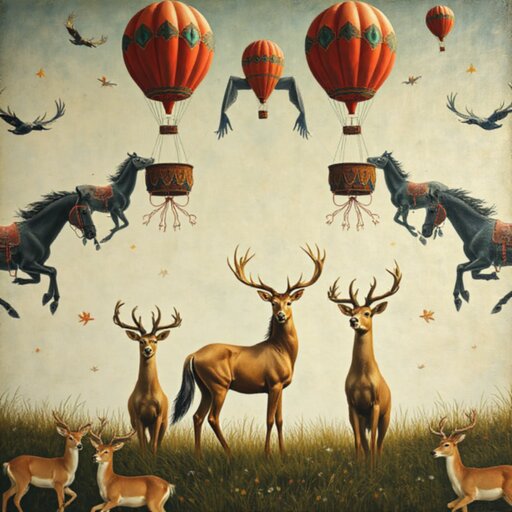} &
        \includegraphics[width=0.16\textwidth]{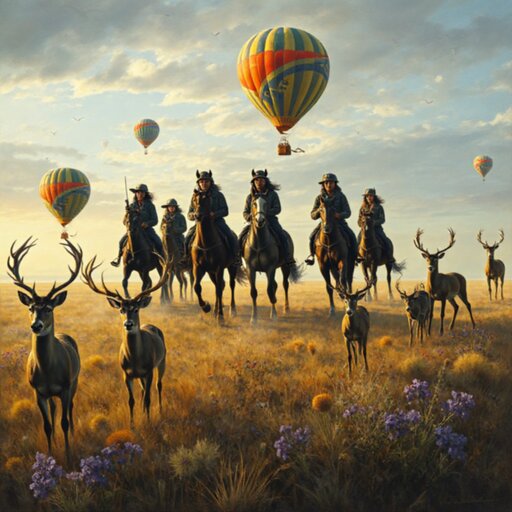} \\
        \midrule
        \multicolumn{6}{c}{\textit{``Eight apples, three bicycles and five rabbits.''}}\\
        \includegraphics[width=0.16\textwidth]{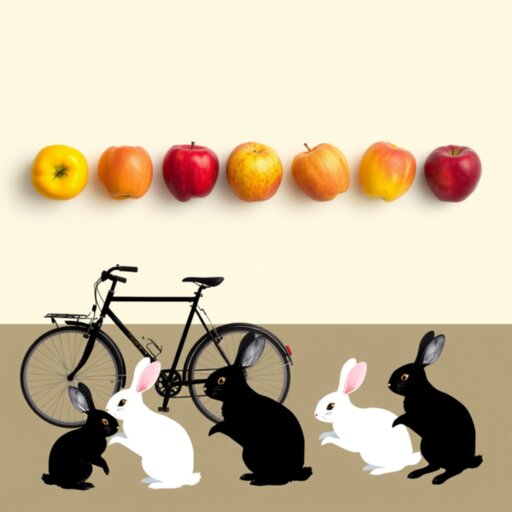} &
        \includegraphics[width=0.16\textwidth]{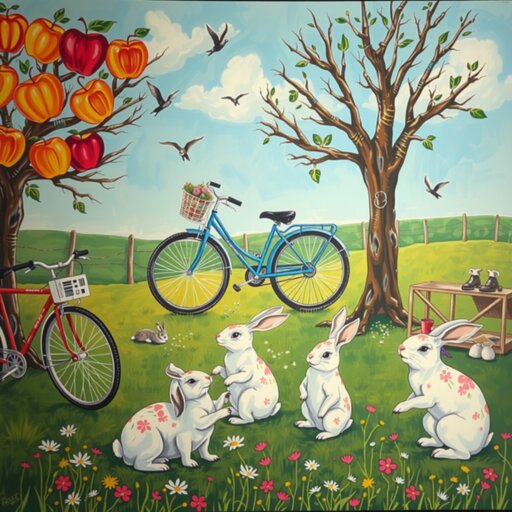} &
        \includegraphics[width=0.16\textwidth]{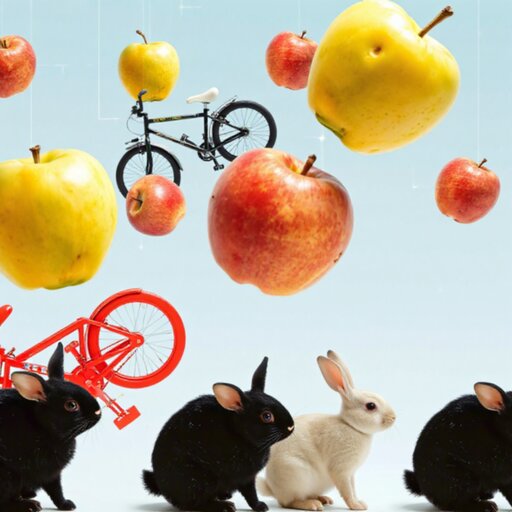} &
        \includegraphics[width=0.16\textwidth]{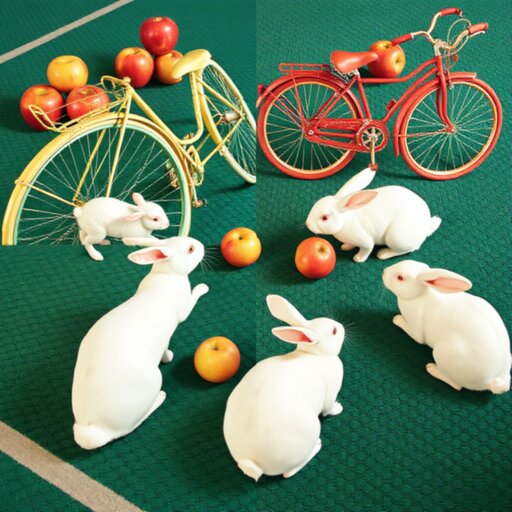} &
        \includegraphics[width=0.16\textwidth]{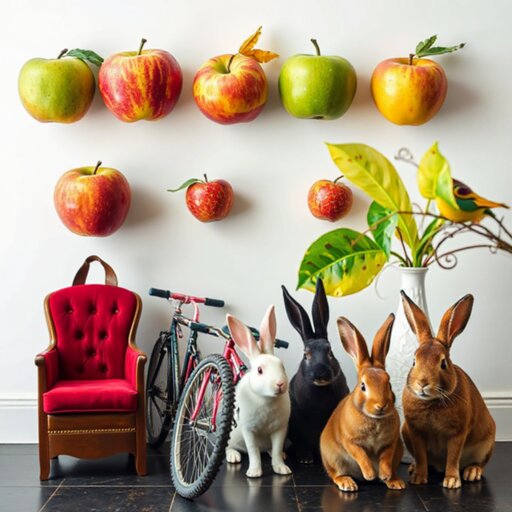} &
        \includegraphics[width=0.16\textwidth]{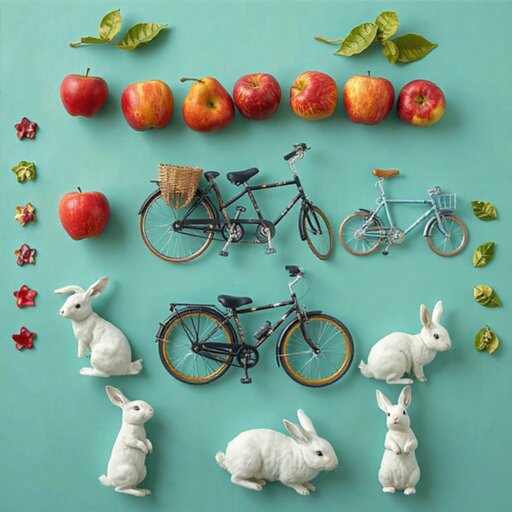} \\
        \midrule
        \multicolumn{6}{c}{\textit{``Six helicopters buzzed over eight pillows.''}}\\
        \includegraphics[width=0.16\textwidth]{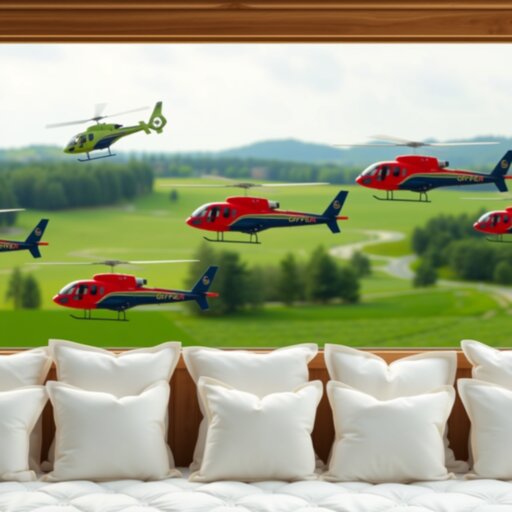} &
        \includegraphics[width=0.16\textwidth]{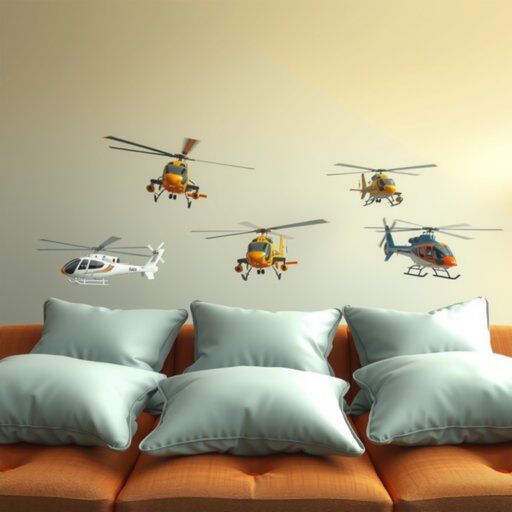} &
        \includegraphics[width=0.16\textwidth]{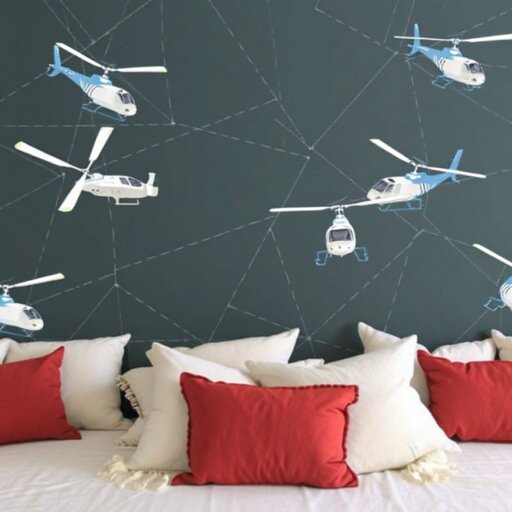} &
        \includegraphics[width=0.16\textwidth]{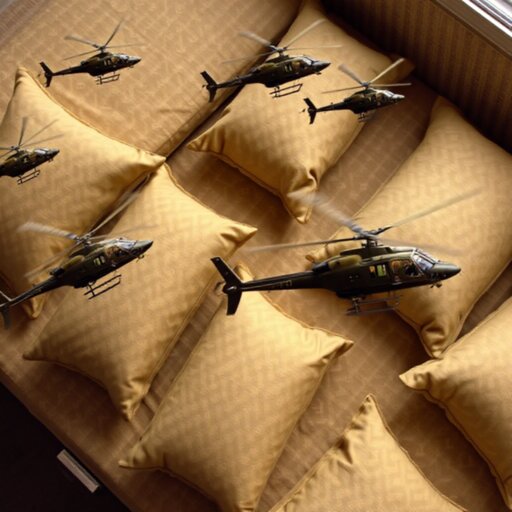} &
        \includegraphics[width=0.16\textwidth]{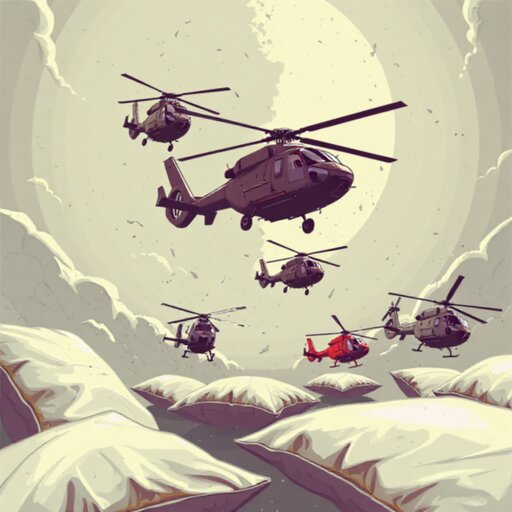} &
        \includegraphics[width=0.16\textwidth]{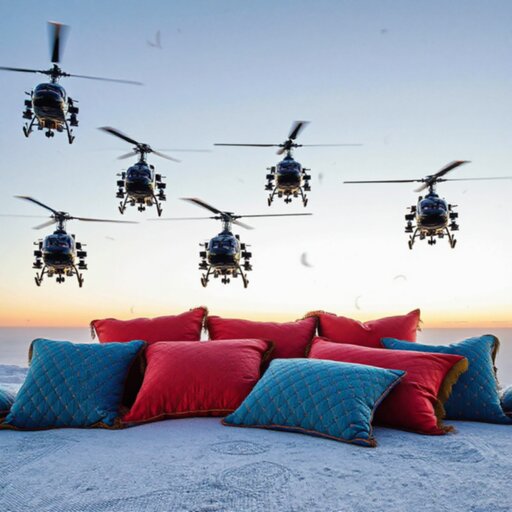} \\
        \midrule
        \multicolumn{6}{c}{\textit{``Five swans and seven ducks swam in the pond.''}}\\
        \includegraphics[width=0.16\textwidth]{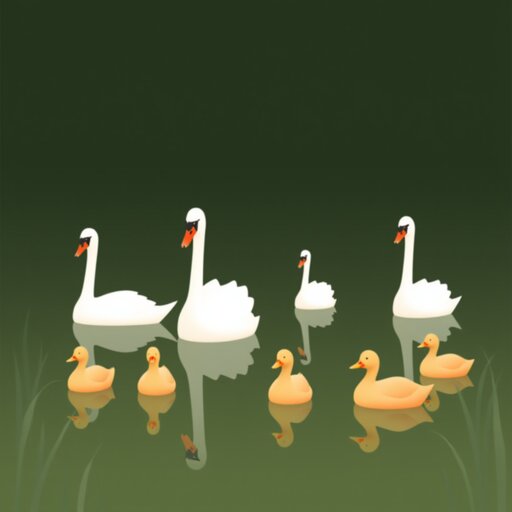} &
        \includegraphics[width=0.16\textwidth]{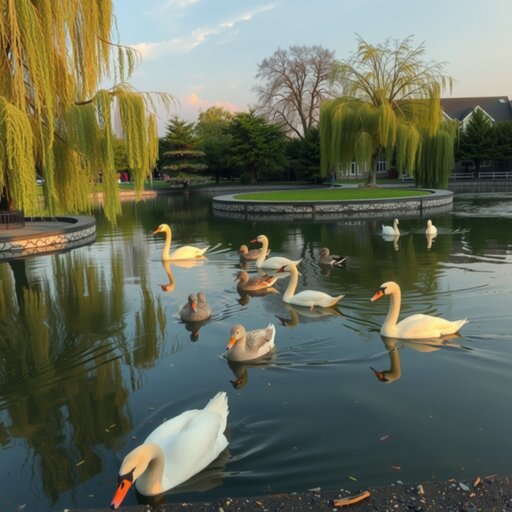} &
        \includegraphics[width=0.16\textwidth]{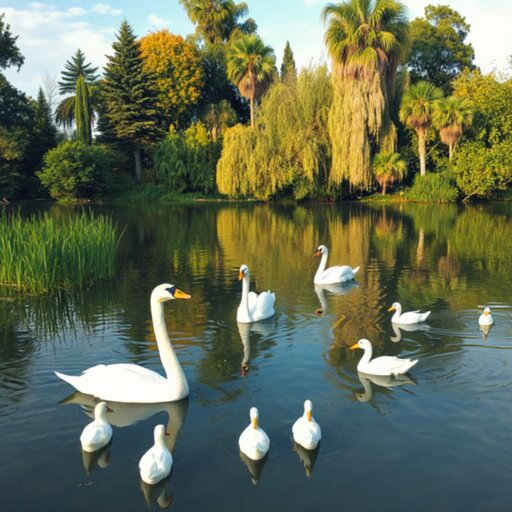} &
        \includegraphics[width=0.16\textwidth]{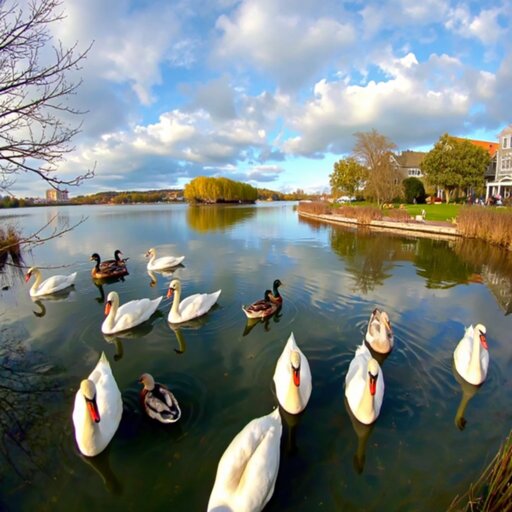} &
        \includegraphics[width=0.16\textwidth]{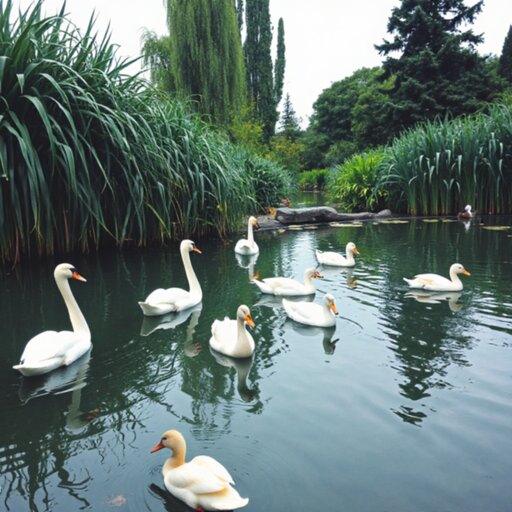} &
        \includegraphics[width=0.16\textwidth]{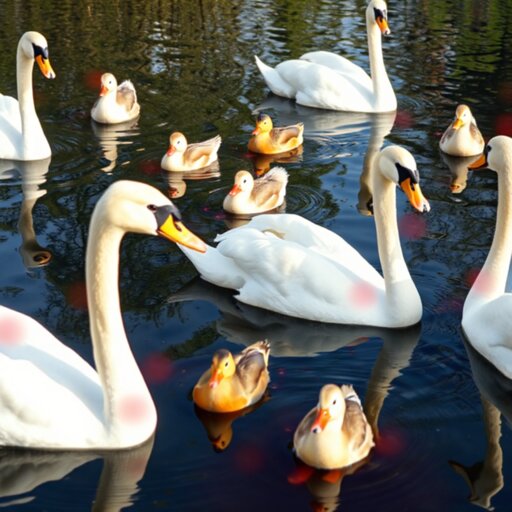} \\
        \midrule
        \multicolumn{6}{c}{\textit{``Four drums, seven tomatoes, and five candles.''}}\\
        \includegraphics[width=0.16\textwidth]{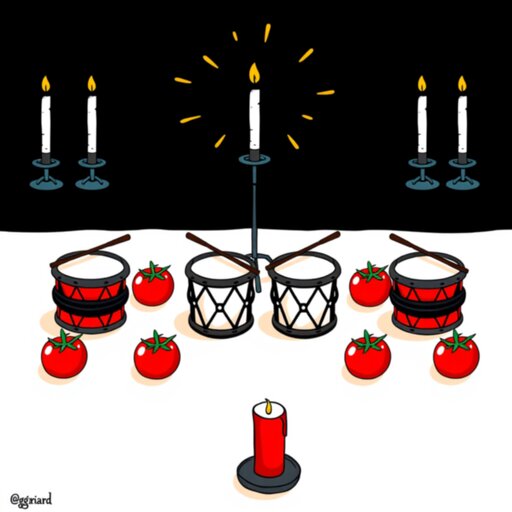} &
        \includegraphics[width=0.16\textwidth]{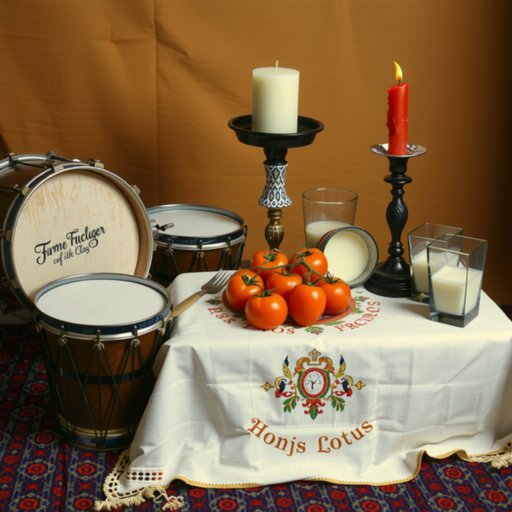} &
        \includegraphics[width=0.16\textwidth]{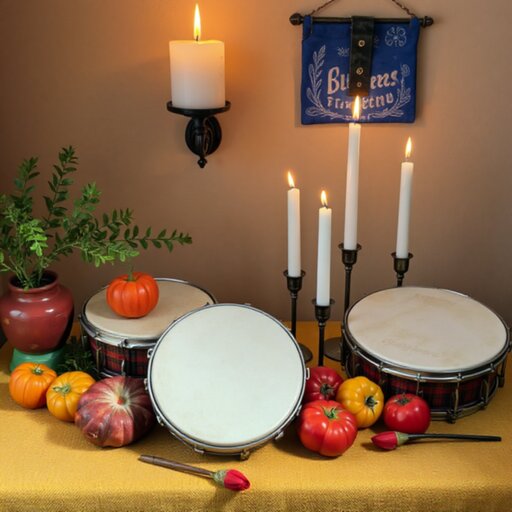} &
        \includegraphics[width=0.16\textwidth]{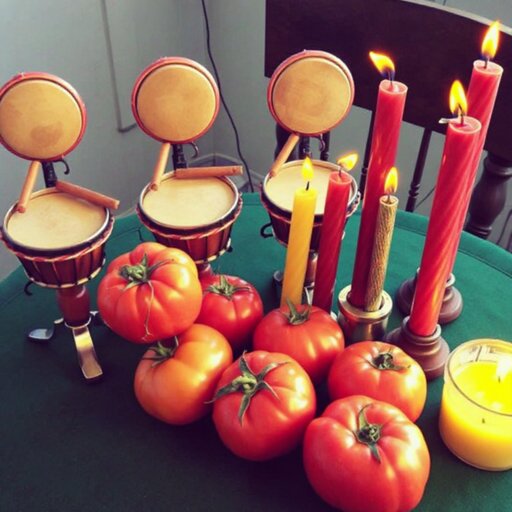} &
        \includegraphics[width=0.16\textwidth]{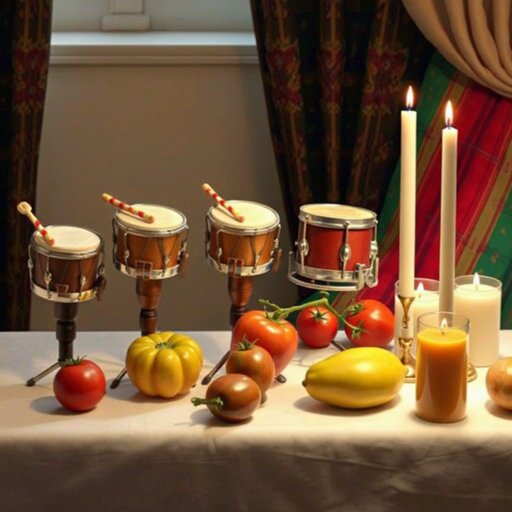} &
        \includegraphics[width=0.16\textwidth]{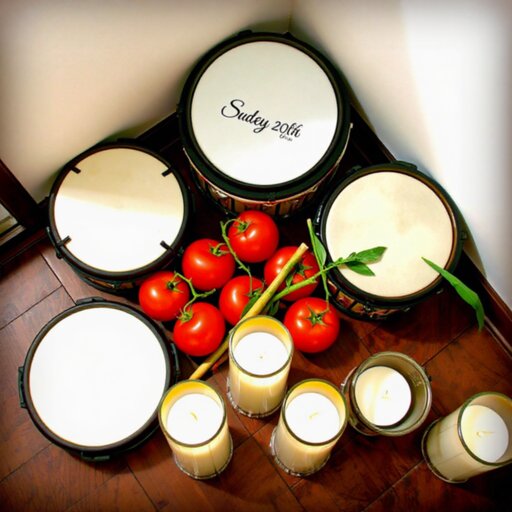} \\
        \midrule
        \multicolumn{6}{c}{\textit{``Three chickens, four birds, and eight pears.''}}\\
        \includegraphics[width=0.16\textwidth]{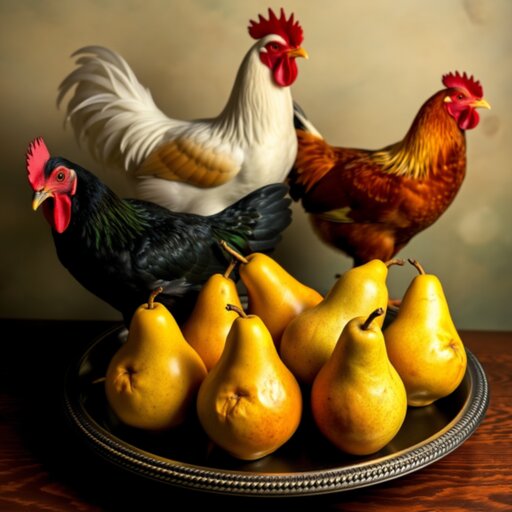} &
        \includegraphics[width=0.16\textwidth]{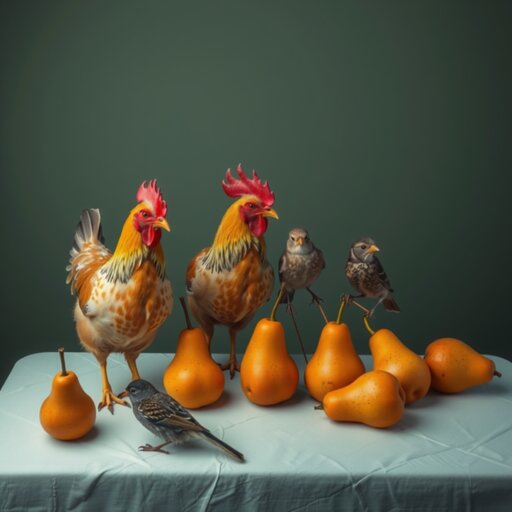} &
        \includegraphics[width=0.16\textwidth]{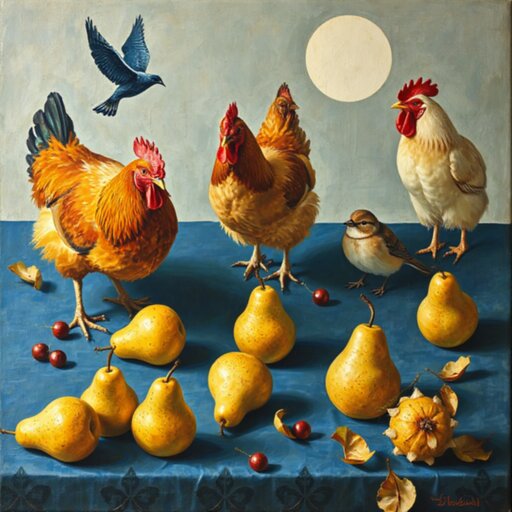} &
        \includegraphics[width=0.16\textwidth]{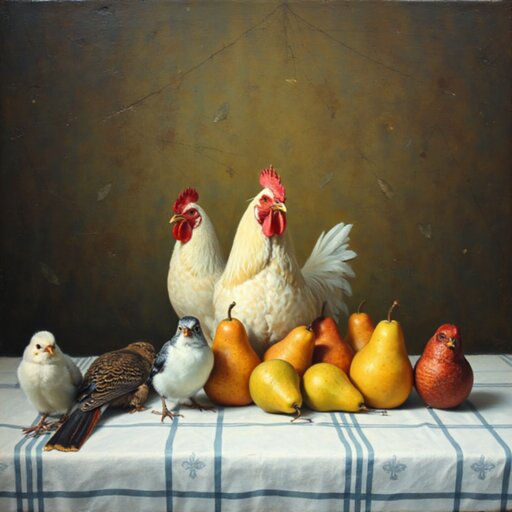} &
        \includegraphics[width=0.16\textwidth]{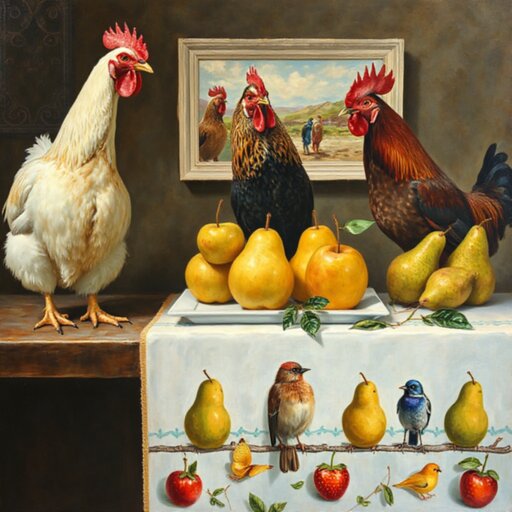} &
        \includegraphics[width=0.16\textwidth]{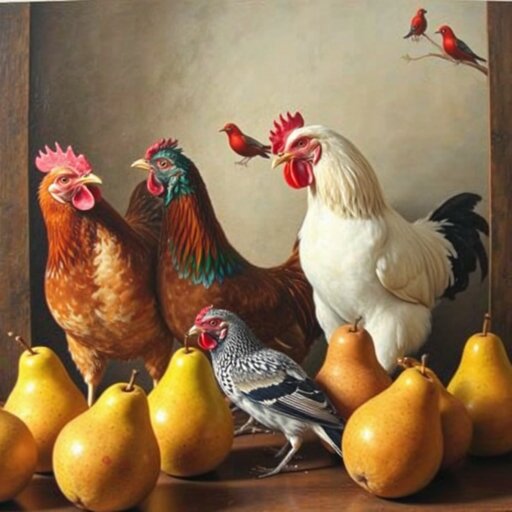} \\\midrule
        \multicolumn{6}{c}{\textit{``Six airplanes flying over a desert with seven camels walking below.''}}\\
        \includegraphics[width=0.16\textwidth]{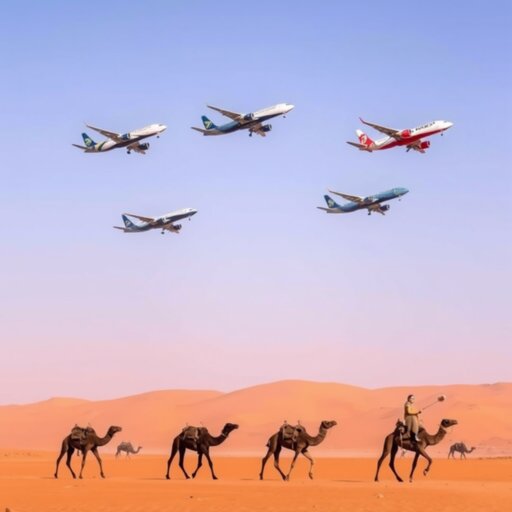} &
        \includegraphics[width=0.16\textwidth]{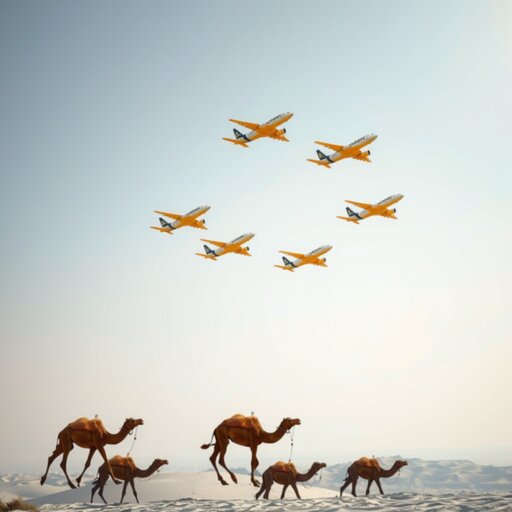} &
        \includegraphics[width=0.16\textwidth]{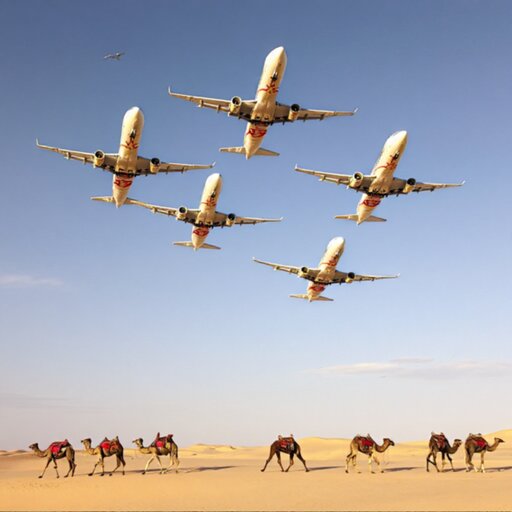} &
        \includegraphics[width=0.16\textwidth]{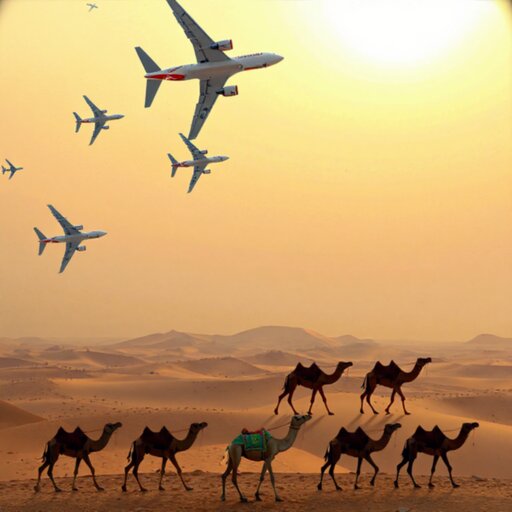} &
        \includegraphics[width=0.16\textwidth]{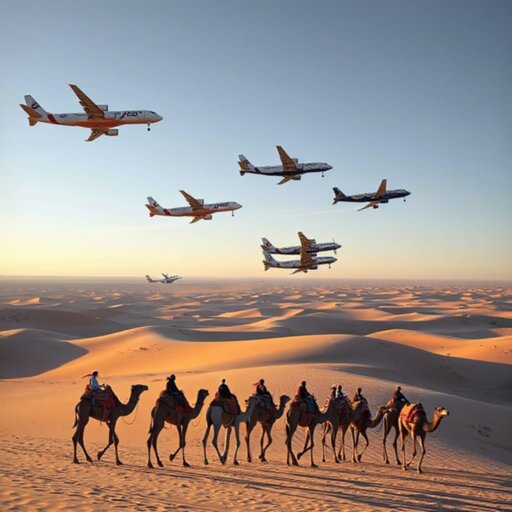} &
        \includegraphics[width=0.16\textwidth]{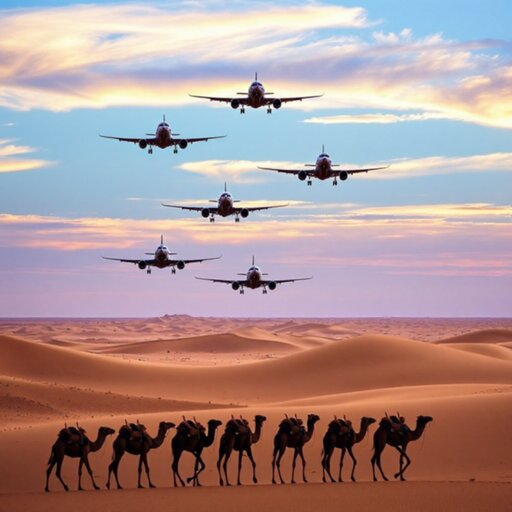} \\
        \bottomrule
    \end{tabularx}
  \caption{\textbf{Additional qualitative results of quantity-aware image generation task}.}
  \label{fig:appendix_counting_methods_vp}
}
\end{figure*}

\clearpage
\newpage
\fi

\end{document}